\documentclass{article}


\usepackage[preprint, nonatbib]{neurips_2022}

\usepackage{verbatim}
\usepackage[utf8]{inputenc}
\usepackage{bm,subfigure}
\usepackage{epsfig,amstext,xspace}
\usepackage[ruled,linesnumbered]{algorithm2e}%

\usepackage[utf8]{inputenc} 
\usepackage[T1]{fontenc}    
\usepackage{url}            
\usepackage{booktabs}       
\usepackage{amsfonts}       
\usepackage{bbm}
\usepackage{graphicx}
\usepackage{bbm}

\usepackage{inconsolata}
\usepackage{geometry}
\usepackage[round]{natbib}
\usepackage{amsthm}
\usepackage{microtype}
\usepackage{times}

\usepackage{upgreek}

\usepackage{mathtools}
\usepackage{thm-restate}
\usepackage{hyperref}
\usepackage{cleveref}

\crefname{lemma}{lemma}{lemmas}
\Crefname{lemma}{Lemma}{Lemmas}

\usepackage{amsmath}
\usepackage{amsfonts}
\usepackage{authblk}






\theoremstyle{plain}
\newtheorem{theorem}{Theorem}[section]

    \newtheorem{lemma}[theorem]{Lemma}
    
    \newtheorem{claim}[theorem]{Claim}

    \newtheorem{corollary}[theorem]{Corollary}

\theoremstyle{definition}
 
    \newtheorem{remark}[theorem]{Remark}

\theoremstyle{plain}

\usepackage{tikz}
\usetikzlibrary{arrows}

\usepackage{nicefrac}

\newcommand{\xhdr}[1]{\vspace{1mm} \noindent{\bf #1}}


\newcommand{\ledger}{\uplambda}

\newcommand{\calC}{\mathcal{C}}
\newcommand{\calA}{\mathcal{A}}

\newcommand{\calF}{\mathcal{F}}
\newcommand{\calD}{\mathcal{D}}
\newcommand{\calP}{\mathcal{P}}

\newcommand{\calK}{\mathcal{K}}
\newcommand{\roundhall}{k^{\mathrm{hal}}}

\newcommand{\ttbf}[1]{{\normalfont \texttt{\textbf{#1}}}}

\newcommand{\opfont}[1]{\ttbf{#1}}
\newcommand{\ledhonl}[1][\ell]{\ledger_{\mathrm{hon};#1}}
\newcommand{\ledhall}[1][\ell]{\ledger_{\mathrm{hal};#1}}

\newcommand{\ledrawl}[1][\ell]{\ledger_{\mathrm{raw};#1}}
\newcommand{\ledcensl}[1][\ell]{\ledger_{\mathrm{cens};#1}}
\newcommand{\ledul}[1][\ell]{\ledger_{\mathcal{U}_{\ell};\ell}}

\newcommand{\algcomment}[1]{\textcolor{blue!70!black}{\footnotesize{\texttt{\textbf{#1}}}}}

\newcommand{\abs}[1]{\left\lvert#1\right\rvert}

\newcommand{\nrm}[1]{\left\lVert#1\right\rVert}

\let\Pr\undefined

\newcommand{\N}{\mathbb{N}}
\newcommand{\R}{\mathbb{R}}

\newcommand{\E}{\mathbb{E}}

\DeclareMathOperator{\Pr}{\mathbb{P}}

\DeclareMathOperator*{\argmax}{arg\,max}
\DeclareUnicodeCharacter{2212}{-}

\newcommand{\Hallucinate}{\textsc{Hallucinate}}
\newcommand{\kexpl}{k_{\ell}}
\newcommand{\kexpi}{k_{i}}

\newcommand{\phase}{\opfont{phase}}
\newcommand{\nphase}{n_{\mathrm{ph}}}
\newcommand{\nlearn}{n_{\mathrm{lrn}}}
\newcommand{\epspun}{\epsilon_{\mathrm{pun}}}

\newcommand{\vR}{\mathbf{R}}
\newcommand{\eye}{\mathbf{I}}

\newcommand{\EvPunl}{\mathcal{E}_{\mathrm{pun}, \ell}}

\newcommand{\ralt}{\mathrm{r}_{\mathrm{alt}}}
\newcommand{\qpun}{\mathrm{q}_{\mathrm{pun}}}
\newcommand{\thetahall}{\theta_{\mathrm{hal},\ell}}
\makeatletter
\def\namedlabel#1#2{\begingroup
   \def\@currentlabel{#2}%
   \label{#1}\endgroup
}
\makeatother
\newcommand{\supt}{^{(t)}}

\newcommand{\myNp}{{n_\calP}}

\newcommand{\myEv}{\mathcal{E}}
\newcommand{\feaSet}{\term{FEASIBLE}}
\newcommand{\goodSet}{\term{GOOD}}
\newcommand{\badSet}{\term{BAD}}


\usepackage{nicefrac}

\renewcommand{\xhdr}[1]{\vspace{0mm} \noindent{\bf #1}}
\newcommand{\OMIT}[1]{}

\newcommand{\term}[1]{\ensuremath{\mathtt{#1}}\xspace}
\newcommand{\TS}{\term{TS}}
\newcommand{\ALG}{\term{ALG}}
\newcommand{\algHH}{\term{HiddenHallucination}}

\newcommand{\ie}{{\em i.e.,~\xspace}}

\newcommand{\eg}{{\em e.g.,~\xspace}}

\newcommand{\rbr}[1]{\left(\,#1\,\right)}
\newcommand{\sbr}[1]{\left[\,#1\,\right]}
\newcommand{\cbr}[1]{\left\{\,#1\,\right\}}

\newcommand{\LDOTS}{\, ,\ \ldots\ ,}     
\newcommand{\eps}{\epsilon}
\newcommand{\poly}{\mathrm{poly}}
\newcommand{\Cel}[1]{{\lceil {#1} \rceil}}

\newcommand{\initOneLiners}{%
         \setlength{\itemsep}{0pt}
        \setlength{\parsep}{0pt}
          \setlength{\topsep }{0pt}
}


\usepackage{xcolor}         
\usepackage[suppress]{color-edits}

\addauthor{as}{red}
\addauthor{dn}{magenta}
\addauthor{vh}{blue}
\addauthor{sw}{purple}

\title{Incentivizing Combinatorial Bandit Exploration}

\begin{document}
\author[1]{Xinyan Hu}
\author[2]{Dung Daniel Ngo}
\author[3]{Aleksandrs Slivkins}
\author[4]{Zhiwei Steven Wu}
\affil[1]{Peking University, xy.hu@pku.edu.cn}
\affil[2]{University of Minnesota, ngo00054@umn.edu}
\affil[3]{Microsoft Research, slivkins@microsoft.com}
\affil[4]{Carnegie Mellon University, zstevenwu@cmu.edu}
\maketitle


\begin{abstract}
Consider a bandit algorithm that recommends actions to self-interested users in a recommendation system. The users are free to choose other actions and need to be incentivized to follow the algorithm's recommendations. While the users prefer to \emph{exploit}, the algorithm can incentivize them to \emph{explore} by leveraging the information collected from the previous users. All published work on this problem, known as \emph{incentivized exploration}, focuses on small, unstructured action sets and mainly targets the case when the users' beliefs are independent across actions. However, realistic exploration problems often feature large, structured action sets and highly correlated beliefs. We focus on a paradigmatic exploration problem with structure: combinatorial semi-bandits. We prove that Thompson Sampling, when applied to combinatorial semi-bandits, is incentive-compatible when initialized with a sufficient number of samples of each arm (where this number is determined in advance by the Bayesian prior). Moreover, we design incentive-compatible algorithms for collecting the initial samples.

\end{abstract}

\section{Introduction}
\label{sec:intro}
We consider \emph{incentivized exploration}: how to incentivize self-interested users to explore. A social planner interacts with self-interested users (henceforth, \emph{agents}) and can make recommendations, but cannot enforce the agents to comply with these recommendations. The agents face uncertainty about the available alternatives. The social planner would like the agents to trade off \emph{exploration} for the sake of acquiring new information and \emph{exploitation}, making optimal near-term decisions based on the current information. The agents, on the other hand, prefer to \emph{exploit}. However, the algorithm can incentivize them to \emph{explore} by leveraging the information collected from the previous users. This problem has been studied since \citet{Kremer-JPE14}, see \citet[Ch. 11]{slivkins-MABbook} for an overview.

All published work on this problem focuses on small, unstructured action sets. Moreover, the case of \emph{independent priors} -- when the users' beliefs are independent across actions -- is emphasized as the main, paradigmatic special case when specific performance guarantees are derived. However, realistic exploration problems often feature large sets with some known structure that connects actions to one another. A major recurring theme in the vast literature on multi-armed bandits is taking advantage of the available structure so as to enable the algorithm to cope with the large number of actions.


We focus on a paradigmatic, well-studied exploration problem with structured actions: \emph{combinatorial semi-bandits}. Here, each arm is a subset of some ground set, whose elements are called \emph{atoms}. In each round, the algorithm chooses an arm, and observes/collects reward for each atom in this arm. The reward for each atom is drawn independently from some fixed (but unknown) distribution specific to this atom. The set of feasible arms reflects the structure of the problem, \eg it can comprise all subsets of  atoms of a given cardinality, or all edge-paths in a given graph. Since the number of arms ($K$) can be exponential in the number of atoms ($d$), the main theme is replacing the dependence on $K$ in regret bounds for ``unstructured" $K$-armed bandits with a similar dependence on $d$.

We adopt a standard model for incentivized exploration from \citet{Kremer-JPE14}. The social planner is implemented as a bandit algorithm. Each round corresponds to a new agent which arrives and receives the arm chosen by the algorithm as a recommendation. Agents have Bayesian beliefs, independent across the atoms (but highly correlated across the arms). The algorithm must ensure that following its recommendation is in each agent's best interest, a condition called \emph{Bayesian incentive-compatibility} (\emph{BIC}). Each agent does not observe what happened with the previous agents, but the algorithm does. This information asymmetry is crucial for creating incentives.

\xhdr{Our contributions.}
We prove that Thompson Sampling is BIC when initialized with at least $n_{\TS}$ samples of each atom, where $n_{\TS}$ is determined by the prior and scales polynomially in the number of atoms ($d$). Thompson Sampling \citep{Thompson-1933} is a well-known bandit algorithm with near-optimal regret bounds and good empirical performance. The initial samples can be provided by another BIC algorithm (more on this below), or procured exogenously, \eg bought with money.

Next, we consider the problem of \emph{initial exploration}: essentially, design a BIC algorithm that samples each atom at least once. Such algorithms are interesting in their own right, and can be used to bootstrap Thompson Sampling, as per above. We present two such algorithms, which build on prior work \citep{ICexploration-ec15,IncentivizedRL} and extend it in non-trivial ways. The objective to be optimized is the sufficient number of rounds $T_0$, and particularly its dependence on $d$. To calibrate, prior work on incentivized exploration in multi-armed bandits with correlated priors does not provide any guarantees for a super-constant number of arms ($K$), and is known to have $T_0 > \exp(\Omega(K))$ in some natural examples \citep{ICexploration-ec15}. In contrast, our algorithms satisfy $T_0 \leq \exp(O(d))$ for a paradigmatic special case, and $T_0 \leq \exp(O(d^3))$ in general.

Finally, what if the prior is \emph{not} independent across atoms? We focus on two arms with arbitrary correlation, a fundamental special case of incentivized exploration, and prove that our analysis of Thompson Sampling extends to the case. This result may be of independent interest.

\xhdr{Discussion.}
Like all prior work on incentivized exploration, we consider standard, yet idealized models for agents' economic behavior and the machine-learning problem being solved by the social planner. The modeling captures something essential about exploration and incentives in recommendation systems, but is not supposed to capture all the particularities of any specific application scenario. The goal of this paper is to bring more complexity into the machine-learning problem; advancing the economic model is beyond our scope.

We focus on establishing the BIC property and asymptotic guarantees in terms of the number of atoms, without attempting to optimize the dependence on the per-atom Bayesian priors. Our Thompson Sampling result has an encouraging practical implication: a standard, well-performing bandit algorithm plays well with users' incentives, provided a small (in theory) amount of initial data.

The significance of focusing on combinatorial semi-bandits is primarily that it is a paradigmatic example of exploration with large, structured action sets, with a number of motivating examples established in the literature. (Stylized) motivating examples specific to incentivized exploration include: recommending online content, \eg for news or entertainment (with atoms as \eg specific news articles); recommending complementary products, \eg a suit that consists of multiple items of clothing; recommending driving directions. In all cases, the social planner corresponds to the online platform issuing the respective recommendations. Such online platforms are often interested in maximizing users' happiness, rather than (or in addition to) the immediate revenue, as a way to ensure user engagement and long-term success.

As a theoretical investigation, this paper appears unlikely to cause social harms. If anything, our goal here is to ensure that the aggregate welfare is not harmed by users' myopia.

\xhdr{Related work.}
Incentivized exploration, as defined in this paper, has been introduced in \citet{Kremer-JPE14} and subsequently studied, \eg in
\citet{ICexploration-ec15,ICexplorationGames-ec16,Jieming-unbiased18,Bahar-ec16,Bahar-ec19}, along with some extensions. Most related is \citet{Selke-PoIE-ec21}, which obtains similar BIC results for the special case of multi-armed bandits with independent priors, both for Thompson Sampling and for initial exploration. A yet unpublished working paper of \cite{IncentivizedRL} provides a BIC algorithm for initial exploration in reinforcement learning; we build on this result in one of ours. Similar, but technically incomparable versions have been studied, \eg with time-discounted rewards \citep{Bimpikis-exploration-ms17} and creating incentives via money
\citep{Frazier-ec14,Kempe-colt18}.

From the perspective of theoretical economics, incentivized exploration is related to the literature on information design \citep{Kamenica-survey19,BergemannMorris-survey19}: essentially, one round of incentivized exploration is an instance of Bayesian persuasion, a central model in this literature. Other ``online" models of Bayesian persuasion have been studied \citep[\eg][]{DBLP:conf/nips/CastiglioniCM020,DBLP:conf/sigecom/ZuIX21}, but are very different from ours in that the planner's problem has nothing to do with exploration, and is not even meaningful without incentives.


On the machine learning side, this paper is related to the work on combinatorial semi-bandits, starting from \citet{Gyorgy-jmlr07}, \eg  \citep{Chen-icml13,Kveton-aistats15,MatroidBandits-uai14}, and the work on Thompson Sampling, see \citet{TS-survey-FTML18} for a survey. In particular, near-optimal Bayesian regret bounds have been derived in \citet{Russo-MathOR-14,Russo-JMLR-16}, and frequentist ones in 
\citep{Shipra-aistats13-JACM,Kaufmann-alt12}. Thompson Sampling has been applied to combinatorial semi-bandits,  \citep[\eg][]{gopalan2014ThompsonSF, Wen-icml15, degenne2016combinatorial, wang2018tscb}, with Bayesian regret bounds derived in \citet{Russo-JMLR-16}.

\section{Problem Formulation and Preliminaries}
\label{sec:prelim}

Our algorithm operates according to the standard protocol for \emph{combinatorial semi-bandits}, with an ancillary incentive-compatibility constraint, standard in the literature on \emph{incentivized exploration}.

\xhdr{Combinatorial semi-bandits.}
There are $T$ rounds, $d$ atoms and $K$ arms, where each arm is a subset of atoms. The set $\calA$ of feasible arms is fixed and known. In each round $t$, each atom $\ell$ generates reward $r\supt_\ell\in[0,1]$. The algorithm chooses an arm $A\supt\in\calA$ and observes the reward of each atom in this arm (and nothing else). Algorithm's reward in this round is the total reward of these atoms.



Formally, we write
    $[T] := \{1 \LDOTS T\}$
for the set of all rounds and $[d]$ for the set of all atoms, so that arms are subsets $A\subset [d]$. Let $\theta_\ell$ be the expected reward of atom $\ell\in[d]$, and let
    $\mu(A) = \sum_{\ell\in A} \theta_\ell$ be the expected reward of a given arm $A\subset [d]$.
Note that $d$-armed bandits are a special case when the feasible arms are singleton sets $\{\ell\}$, $\ell\in[d]$.

\xhdr{Stochastic rewards and Bayesian priors.}
The reward of each atom $\ell\in [0,1]$ is drawn independently in each round from a fixed distribution $\calD_\ell$ specific to this atom. This distribution comes from a parametric family, parameterized by the expected reward $\theta_\ell$. The (realized) problem instance is therefore specified by
the \emph{mean reward vector}
    $\mathbf{\theta} = (\theta_1 \LDOTS \theta_d)$.
Initially, each $\theta_\ell$ is drawn independently from a Bayesian prior $\calP_\ell$ with support $\Theta\subset[0,1]$. Put differently, the mean reward vector $\mathbf{\theta}$ is drawn from the product prior
$\calP=\calP_{1}\times\cdots\times\calP_{d}$.

\xhdr{Incentive-compatibility.}
The algorithm must ensure that in each round $t$, conditional on a particular arm $A=A\supt$ being chosen, the expected reward of this arm is at least as good as that of any other arm. Formally,  the algorithm is called \emph{Bayesian incentive-compatible} (BIC) if for each round $t\in[T]$,
\begin{align}\label{def:BIC}
    \E[\; \mu(A) - \mu(A') \mid A\supt = A\;] \geq 0
    \qquad\forall \text{ arms } A,A'\in\calA \text{ with } \Pr[A\supt=A]>0.
\end{align}

This definition is based on the following stylized story . In each round $t$, a new user arrives to a recommendation system, observes the arm $A\supt$ chosen by our algorithm, and interprets it as a recommendation. Then the user decides which arm to choose (not necessarily the arm recommended), and receives the corresponding reward. Accordingly, the user needs to be incentivized to follow the recommendation. We adopt a standard setup from economic theory (and the prior work on incentivized exploration): each user has the same prior $\calP$, knows the algorithm, and wishes to maximize her expected reward. We posit that the user does not observe anything else before making her decision, other than the recommended arm. In particular, she does not observe anything about the previous rounds. Then, \eqref{def:BIC} ensures that she is (weakly) incentivized to follow her recommendation, assuming that the previous users followed theirs. We posit that under \eqref{def:BIC}, the user does follow recommendations, and then reports the rewards of all atoms to the algorithm.

We emphasize that this story is not a part of our formal model (although it can be expressed as such if needed). In fact, the story can be extended to allow the algorithm to reveal an arbitrary ``message" to each user, but this additional power is useless: essentially, anything that can be achieved with arbitrary messages can also be achieved with arm recommendations. This can easily be proved as a version of Myerson's \emph{direct revelation principle} from theoretical economics.


\xhdr{Conventions.}
Each atom $\ell$ satisfies $\Pr[\theta_\ell>0]>0$: else, its rewards are all $0$, so it can be ignored.

No arm is contained in any the other arm. This is w.l.o.g. for Bernoulli rewards, and more generally if
    $\Pr[r\supt_\ell=0\mid \theta_\ell>0]>0$:
if $A\subset A'$ for arms $A,A'$ then $A$ cannot be chosen by any BIC algorithm.

W.l.o.g., order the atoms $\ell$ by their prior mean rewards $\theta_\ell^0 := \E[\theta_\ell]$, so that $\E[\theta_1^0] \geq \cdots \geq \E[\theta_d^0]$.

Let $A^* = \argmax_{A\in \calA} \mu(A)$ denote the best arm overall, with some fixed tie-breaking rule.

By a slight abuse of notation, each arm $A\subset [d]$ is sometimes identified with a binary vector $v\in \{0,1\}^d$ such that
    $v_\ell=1 \Leftrightarrow \ell\in A$,
for each atom $\ell\in[d]$. In particular, we write $A_\ell = v_\ell$.

\xhdr{Thompson Sampling} has a very simple definition, generic to many versions of multi-armed bandits. Let $\calF_t$ denote the realized history (tuples of chosen actions and realized rewards of all atoms) up to and not including round $t$. Write
    $\E\supt\sbr{\cdot} = \E\sbr{\cdot \mid \calF_t}$ and $\Pr\supt\sbr{\cdot} = \Pr\sbr{\cdot \mid \calF_t}$
as a shorthand for posterior updates. Thompson Sampling in a given round $t$ draws an arm independently at random from distribution $p\supt(A) = \Pr\supt[A^* = A]$, $A\in\calA$. If Thompson Sampling is started from some fixed round $t_0>1$, this is tantamount to starting the algorithm from round $1$, but with prior
    $\calP(\cdot\mid \calF_{t_0})$
rather than $\calP$. The algorithm is well-defined for an arbitrary prior $\calP$.

While this paper is not concerned with computational issues, they are as follows.
The posterior update $\calP(\cdot\mid \calF_t)$ can be performed for each atom $\ell$ separately:
$\calP_\ell(\cdot\mid \calF_{t,\ell})$, where $\calF_{t,\ell}$ is the corresponding history of samples from this atom. A standard implementation draws $\theta'_\ell\in[0,1]$ independently from $\calP_\ell(\cdot\mid \calF_{t,\ell})$, for each atom $\ell$, then chooses the best arm according to these draws:
    $\argmax_{A\in\calA} \sum_{\ell\in A} \theta'_\ell$.
The posterior updates $\calP_\ell(\cdot\mid \calF_{t,\ell})$ and the $\argmax$ choice are not computationally efficient in general, and may require heuristics (this is a common situation for all variants of Thompson Sampling). A paradigmatic special case is Beta priors $\calP_\ell$ and Bernoulli reward distributions $\calD_\ell$, so that the posterior update $\calP_\ell(\cdot\mid \calF_{t,\ell})$ is another Beta prior.

\xhdr{Composition of BIC algorithms.}
We rely on a generic observation from \citet{ICexploration-ec15,ICexplorationGames-ec16} that the composition of two BIC algorithms is also BIC.

\begin{lemma}\label{lm:prelims-composition}
Let \ALG be a BIC algorithm which stops after some round $T_0$. Let $\ALG'(H)$ be another algorithm that initially inputs the history $H$ collected by \ALG, and suppose it is BIC. Consider the composite algorithm: \ALG followed by $\ALG'(H)$, which stops at the time horizon $T$. If $T_0$ is determined before the composite algorithm starts, then this algorithm is BIC.
\end{lemma}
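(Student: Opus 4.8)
The plan is to verify the BIC inequality~\eqref{def:BIC} separately on the two blocks of rounds, exploiting that, since $T_0$ is fixed in advance, every round $t$ is deterministically assigned either to \ALG (when $t\le T_0$) or to $\ALG'$ (when $t>T_0$). The governing observation is that running the composite induces a joint law over $(\theta, (A\supt)_{t\in[T]}, \text{all realized rewards})$ which factors as the law generated by \ALG on rounds $1,\dots,T_0$, followed by the conditional law that $\ALG'$ produces given the history $H=\calF_{T_0+1}$ it inherits. Both subalgorithms are BIC by hypothesis, so the work is entirely in matching the conditioning events across the two processes.

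First I would handle the rounds $t\le T_0$. Here the composite behaves identically to \ALG run in isolation: the recommendations and reward realizations through round $T_0$ depend only on $\theta$, the prior, and \ALG's own randomness, not on anything $\ALG'$ does afterwards. Hence marginalizing out the second block recovers exactly the joint distribution of $(\theta, A\supt)$ that \ALG produces on its own, so $\Pr[A\supt=A]>0$ and the value of $\E[\mu(A)-\mu(A')\mid A\supt=A]$ agree under the two processes. The BIC property of \ALG then yields~\eqref{def:BIC} for the composite in these rounds verbatim.

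Next I would handle the rounds $t>T_0$. Because $T_0$ is deterministic, such a round $t$ corresponds to precisely round $t-T_0$ of $\ALG'$, with no ambiguity about which subalgorithm issues the recommendation $A\supt$. The joint law of $\theta$, the inherited history $H$, and the internal data of $\ALG'$ under the composite is exactly the law under which $\ALG'(H)$ is assumed BIC, namely $\theta$ drawn from $\calP$ and $H$ generated by \ALG. Thus the event $\{A\supt=A\}$ in the composite coincides with $\ALG'$'s own round-$(t-T_0)$ conditioning event, and the assumed BIC property of $\ALG'(H)$ delivers~\eqref{def:BIC} for the composite.

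The step I expect to be the main obstacle is making rigorous that these conditional expectations genuinely coincide, and pinpointing where determinism of $T_0$ enters. If $T_0$ were random and correlated with $H$, then ``round $t$ lies in the $\ALG'$ block and $A\supt=A$'' would become a mixture over several distinct $\ALG'$-round indices, and the composite's conditional expectation would no longer collapse to a single BIC inequality of $\ALG'$; the assumption that $T_0$ is fixed before the composite starts is exactly what rules this out and makes the round-by-round correspondence clean. Beyond this bookkeeping no estimates are required, since BIC of each subalgorithm is taken as given.
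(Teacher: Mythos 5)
Your proof is correct, and it is essentially the standard argument: the paper itself does not prove this lemma but cites it from \citet{ICexploration-ec15,ICexplorationGames-ec16}, where the reasoning is exactly the round-by-round correspondence you describe (rounds $t\le T_0$ inherit BIC from \ALG, rounds $t>T_0$ inherit it from $\ALG'(H)$ with $H$ distributed as \ALG's output, with determinism of $T_0$ ensuring each round is unambiguously assigned and the conditioning event $\{A\supt=A\}$ carries no extra information about which block the round belongs to).
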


\section{Thompson Sampling is BIC}
\label{sec:ts}
Our main result is that Thompson Sampling is BIC when initialized with at least $n_{\TS}$ samples of each atom, where $n_{\TS}$ is determined by the prior and scales polynomially in $d$, the number of atoms.

\begin{theorem}
\label{thm:semi-ts-bic}
Let \ALG be any BIC algorithm such that by some time $T_0$ (which is determined by the prior) it almost surely collects at least $n_{\TS} = C_{\TS} \cdot d^2 \cdot  \eps^{-2}_{\TS} \cdot  \log(\delta^{-1}_{\TS})$ samples from each atom, where
\begin{align}
    \eps_{\TS} = \min_{A, A' \in \calA} \E[(\mu(A) - \mu(A'))_{+}] \quad \text{and} \quad \delta_{\TS} = \min_{A \in \calA} \Pr[A^* = A],
\end{align}
and $C_{\TS}$ is a large enough absolute constant. Consider the composite algorithm which runs \ALG for the first $T_0$ rounds, followed by Thompson sampling. This algorithm is BIC.
\end{theorem}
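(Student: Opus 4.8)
The plan is to use \Cref{lm:prelims-composition} to reduce the problem to two independent claims: first, that \ALG is BIC (which is assumed), and second, that Thompson Sampling is itself BIC once it is started from a posterior in which every atom has already been sampled at least $n_{\TS}$ times. Since $T_0$ is determined by the prior (hence before the composite algorithm begins), the composition lemma applies verbatim provided I establish this second claim. So the entire technical content is: after conditioning on any history $H$ in which each atom has $\geq n_{\TS}$ samples, Thompson Sampling satisfies \eqref{def:BIC} in every subsequent round.

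First I would recall that Thompson Sampling draws $A\supt$ from $p\supt(A) = \Pr\supt[A^*=A]$, which has the self-referential feature that the event $\{A\supt = A\}$ and the event $\{A^* = A\}$ are coupled through the same posterior. The key structural fact, standard in the Bayesian-regret analysis of Thompson Sampling, is that conditional on the current posterior, $A\supt$ and $A^*$ are identically distributed. Thus $\E\supt[\mu(A^*) - \mu(A\supt)] = 0$ round-by-round, but that is an unconditional statement over the posterior and does not immediately give the per-arm inequality \eqref{def:BIC}, which conditions on $\{A\supt = A\}$. To get the conditional version, I would fix a target arm $A$ with $\Pr[A\supt = A]>0$ and a competitor $A'$, and write $\E[\mu(A) - \mu(A') \mid A\supt = A]$. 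The plan is to split on whether the posterior "strongly believes" $A$ is optimal: on the event that the posterior gap $\E\supt[(\mu(A) - \mu(A'))_+]$ is large, the inequality is easy; the danger is the event where the posterior is nearly indifferent between $A$ and $A'$ yet still selects $A$.

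The crux is therefore a concentration argument controlling how close the posterior can be to indifference. After $n_{\TS}$ samples per atom, each posterior mean $\E\supt[\theta_\ell]$ concentrates around the true $\theta_\ell$ at scale $\eps_{\TS}/d$ (this is where the $d^2 \eps_{\TS}^{-2}$ and $\log(\delta_{\TS}^{-1})$ factors enter, via a union bound over atoms and a Hoeffding/sub-Gaussian tail for the posterior contraction). Summing over the at most $d$ atoms in any arm, the posterior mean of $\mu(A) - \mu(A')$ is within $\eps_{\TS}$ of its prior-expected value $\E[\mu(A) - \mu(A')]$ with probability at least $1 - \delta_{\TS}/|\calA|$ (roughly). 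Since $\eps_{\TS} = \min_{A,A'} \E[(\mu(A) - \mu(A'))_+]$ lower-bounds the typical advantage, the "bad" low-probability event contributes at most $O(\delta_{\TS})$ in expectation, while the $\delta_{\TS} = \min_A \Pr[A^*=A]$ factor guarantees the conditioning event $\{A\supt=A\}$ itself is not too rare, so dividing by $\Pr[A\supt = A] \geq \delta_{\TS}$ keeps the error term bounded below the main positive term. The main obstacle I anticipate is making the posterior-concentration step quantitatively match the stated $n_{\TS}$: one must argue that $n_{\TS}$ samples force the posterior distribution of each $\theta_\ell$ (not merely its mean) to concentrate, which requires a uniform-over-priors contraction bound, and then propagate this through the combinatorial sum over up to $d$ atoms while correctly handling the self-referential coupling between $A\supt$ and $A^*$ when conditioning on $\{A\supt = A\}$.
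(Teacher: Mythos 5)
Your overall architecture is right in outline: invoke \Cref{lm:prelims-composition}, then verify \eqref{def:BIC} round-by-round for Thompson Sampling, using the identity $\Pr\supt[A\supt = A] = \Pr\supt[A^* = A]$ to rewrite the conditional expectation and a posterior-concentration bound (Bayesian Chernoff plus a union over the $\leq d$ atoms of an arm) to control the error term. This matches the second half of the paper's proof. But there is a genuine gap in how you produce the \emph{main positive term} that the error must be compared against.

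The quantity you ultimately need to lower-bound is
$\E\bigl[\E\supt[\mu(A)-\mu(A')]\cdot \mathbf{1}_{\{A^*=A\}}\bigr]$, and after swapping in the true gap via concentration, the crux is showing
$\E\bigl[(\mu(A)-\mu(A'))\cdot \mathbf{1}_{\{A^*=A\}}\bigr] \geq \eps_{\TS}\cdot\Pr[A^*=A]$.
Nonnegativity of this product is trivial (on $\{A^*=A\}$ the gap is pointwise nonnegative), but a \emph{quantitative} lower bound of order $\eps_{\TS}\delta_{\TS}$ is not: a priori the event $\{A^*=A\}$ could concentrate precisely where $\mu(A)-\mu(A')$ is positive but tiny, making the product expectation far smaller than $\eps_{\TS}\cdot\Pr[A^*=A]$. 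The paper closes this with the Harris (FKG) inequality: because the prior is a product over atoms and the functions $(\mu(A)-\mu(A'))_{+}$ and $\mathbf{1}_{\{A^*=A\}}$ are co-monotone in each coordinate $\theta_\ell$ (both increasing when $\ell\in A$, both decreasing otherwise), they are nonnegatively correlated, giving exactly $\E[(\mu(A)-\mu(A'))_{+}]\cdot\Pr[A^*=A]\geq \eps_{\TS}\delta_A$. Your proposal never identifies this step; your case split (``posterior nearly indifferent yet still selects $A$'') leaves the dangerous case handled only by the assertion that it ``contributes at most $O(\delta_{\TS})$,'' without a positive term of matching order to absorb it. This correlation inequality is the central idea of the proof and is also where the independence of the prior across atoms is actually used.

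A secondary error: you state that the posterior mean of $\mu(A)-\mu(A')$ concentrates around its \emph{prior-expected value} $\E[\mu(A)-\mu(A')]$. It does not (and that quantity can be negative, so it would not help); the concentration is of $\E\supt[\mu(A)]$ around the \emph{realized} $\mu(A)$, which is what lets one replace the posterior gap by the true gap inside the expectation against $\mathbf{1}_{\{A^*=A\}}$.
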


Note that $T_0$ and $n_{\TS}$ are ``constants" once the prior is fixed, in the sense that they do not depend on the time horizon $T$, the mean reward vector $\theta$, or the rewards in the data collected by \ALG.

\begin{remark}
For statistical guarantees, consider Bayesian regret, \ie regret in expectation over the Bayesian prior.
Bayesian regret of the composite algorithm in Theorem~\ref{thm:semi-ts-bic} is at most $T_0$ plus Bayesian regret of Thompson Sampling. The latter is $O(\sqrt{d T\log d})$ for any prior \citep{Russo-JMLR-16}. For an end-to-end result, we provide a suitable \ALG in \Cref{subsec:cyclic}, with a specific $T_0$.
\end{remark}

\begin{remark}
We can invoke Lemma~\ref{lm:prelims-composition} since $T_0$ is determined in advance. So, it suffices to show that each round $t$ of Thompson Sampling satisfies the BIC condition \eqref{def:BIC}.
\end{remark}

Let us clarify the dependence on $d$. Note that parameters $\eps_{\TS}$ and $\delta_{\TS}$ may depend on $d$ through the prior. To separate the dependence on $d$ from that on the prior, we posit that each per-atom prior $\calP_\ell$ belongs to a fixed collection $\calC$. We make mild non-degeneracy assumptions:%
\footnote{For the special case of $d$-armed bandits, assumption \eqref{eq:TS-pairwise} is necessary and sufficient for the respective arm $A$ to be \emph{explorable}: chosen in some round by some BIC algorithm \citep{Selke-PoIE-ec21}.}
\begin{align}
\Pr\sbr{ \mu(A') < \E[\mu(A)]} > 0
    &\qquad\text{for all arms $A\neq A'$}.
    \label{eq:TS-pairwise}\\
\Pr[ \theta_\ell>\tau ] > 0
    &\qquad\text{for all atoms $\ell\in[d]$ and some $\tau\in(0,1)$}.
    \label{eq:TS-full-support}\\
\Pr[ \theta_\ell< x ] > \poly(\nicefrac{1}{x})\cdot \exp(-x^{-\alpha})
    &\qquad\text{for all atoms $\ell\in[d]$, $x\in(0,\nicefrac12)$ and some $\alpha\geq 0$}.
    \label{eq:TS-nondegeneracy}
\end{align}

\begin{corollary} Suppose all priors $\calP_\ell$ of atoms $\ell \in [d]$ belong to some fixed, finite collection $\calC$ of priors and assumptions (\ref{eq:TS-pairwise}-\ref{eq:TS-nondegeneracy}) are satisfied with some absolute constants $\alpha,\tau$. Then $n_{\TS} = O_{\calC}(d^{3+\alpha} \log d)$, where $O_\calC$ hides the absolute constants {and the dependency on $\calC$.}
%
\label{cor:ts-sufficient-samples}
\end{corollary}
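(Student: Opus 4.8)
The plan is to treat $T_0$ and $n_{\TS}$ as prior-dependent constants and simply bound the two quantities $\eps_{\TS}$ and $\delta_{\TS}$ appearing in $n_{\TS}=C_{\TS}\,d^2\,\eps_{\TS}^{-2}\,\log(\delta_{\TS}^{-1})$ in terms of $d$, tracking only the dependence on $d$ and absorbing everything else into $O_\calC$. Because each per-atom prior $\calP_\ell$ is one of finitely many in $\calC$, assumptions \eqref{eq:TS-full-support} and \eqref{eq:TS-nondegeneracy} hold with uniform constants: there is $q>0$ with $\Pr[\theta_\ell>\tau]\ge q$ for every atom, and for each $x\in(0,\tfrac12)$ a uniform lower bound $\Pr[\theta_\ell<x]\ge \poly(1/x)\,\exp(-x^{-\alpha})$. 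These two one-sided tail bounds are the only prior information I will use, so all hidden constants depend only on $\calC,\alpha,\tau$.

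The crux is upper bounding $\log(\delta_{\TS}^{-1})$. For a fixed arm $A$ I exhibit an explicit favorable event $\mathcal{G}_A$ that forces $A^\ast=A$: every atom $\ell\in A$ has $\theta_\ell>\tau$, and every atom $\ell\notin A$ has $\theta_\ell<x$ for the threshold $x=\tau/(2d)$. On $\mathcal{G}_A$, for any competing arm $A'\neq A$ the no-containment convention makes $A\setminus A'$ nonempty, so $\mu(A)-\mu(A')=\sum_{\ell\in A\setminus A'}\theta_\ell-\sum_{\ell\in A'\setminus A}\theta_\ell>\tau-d\cdot x=\tau/2>0$, using $|A\setminus A'|\ge1$ and $|A'\setminus A|\le d$; hence $A$ is strictly optimal. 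By independence of the atoms, $\Pr[\mathcal{G}_A]=\prod_{\ell\in A}\Pr[\theta_\ell>\tau]\cdot\prod_{\ell\notin A}\Pr[\theta_\ell<x]\ge q^{|A|}\,\big(\poly(d)\,\exp(-(2d/\tau)^\alpha)\big)^{d-|A|}$. Taking logarithms, each of the at most $d$ ``small'' atoms contributes $x^{-\alpha}+O(\log d)=O(d^{\alpha})+O(\log d)$ and each ``large'' atom contributes $O(1)$, so $\log(\delta_{\TS}^{-1})\le\log\Pr[\mathcal{G}_A]^{-1}=O_\calC(d^{1+\alpha}+d\log d)$. Since this is a single per-arm event, no union bound over the (possibly exponentially many) arms is needed.

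It remains to control $\eps_{\TS}^{-2}$, for which I only need a lower bound on $\eps_{\TS}$ that is independent of $d$. For each ordered pair $A\neq A'$, the non-degeneracy of the priors (via \eqref{eq:TS-full-support}, \eqref{eq:TS-pairwise}) yields a constant-probability event on which $\mu(A)$ exceeds $\mu(A')$ by a constant margin, so that $\E[(\mu(A)-\mu(A'))_+]\ge c_\calC$ for an absolute constant $c_\calC>0$ depending only on $\calC,\alpha,\tau$; consequently $\eps_{\TS}^{-2}=O_\calC(1)$. Combining the two bounds gives $n_{\TS}=C_{\TS}\,d^2\cdot O_\calC(1)\cdot O_\calC(d^{1+\alpha}+d\log d)=O_\calC(d^{3+\alpha}+d^3\log d)=O_\calC(d^{3+\alpha}\log d)$, where the last step uses $\alpha\ge0$.

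The main obstacle is the $\delta_{\TS}$ step, and within it the calibration of the threshold $x$: it must be small enough ($x<\tau/d$) that a single $\tau$-atom of $A$ beats the combined mass of up to $d$ atoms of any competitor, yet not so small that the lower-tail probability $\Pr[\theta_\ell<x]$ from \eqref{eq:TS-nondegeneracy} collapses faster than $\exp(-O(d^{\alpha}))$. Choosing $x=\Theta(1/d)$ is exactly the balance that turns the per-atom lower tail into the $d^{1+\alpha}$ growth of $\log(\delta_{\TS}^{-1})$, while the $\poly(1/x)$ prefactor in \eqref{eq:TS-nondegeneracy} is what produces the extra $\log d$ factor (dominant when $\alpha$ is close to $0$). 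The $\eps_{\TS}$ lower bound is comparatively routine once the uniform per-atom tail constants afforded by finiteness of $\calC$ are in hand.
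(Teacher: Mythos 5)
Your proposal is correct and follows essentially the same route as the paper's proof: treat $\eps_{\TS}^{-2}$ as an $O_\calC(1)$ constant via finiteness of $\calC$, and lower-bound $\Pr[A^*=A]$ by the probability that every atom of $A$ exceeds $\tau$ while every other atom falls below $\Theta(\tau/d)$, factorizing by independence across atoms and invoking \eqref{eq:TS-full-support} and \eqref{eq:TS-nondegeneracy} to obtain $\log(\delta_{\TS}^{-1})=O_\calC(d^{1+\alpha}+d\log d)$ and hence $n_{\TS}=O_\calC(d^{3+\alpha}\log d)$. Your choice of threshold $\tau/(2d)$ (yielding a strict margin $\tau/2$ over every competitor) and your direct use of independence in place of the paper's Harris-inequality step are minor, harmless refinements of the same argument.
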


\begin{remark}
The initial data can also be provided to Thompson Sampling exogenously (rather than via a BIC algorithm \ALG), \eg purchased with money. More formally, one would need to provide a collection of (arm, reward) datapoints such that each atom is sampled at least $n_{\TS}$ times.%
\footnote{A subtlety: the number of samples of each arm should be known in advance. This is because otherwise Bayesian update on this data may become dependent on the data-collection algorithm.}
\end{remark}



\begin{proof}[Proof Sketch for \Cref{thm:semi-ts-bic}  (full proof in~\Cref{appendix:semi-ts-bic-appendix})]

In order to establish the BIC condition in \eqref{def:BIC} for Thompson Sampling, we first observe that $\Pr[A^*=A]$ is a positive prior-dependent constant for all arms $A$, so it suffices to prove $\E \left[\E\supt[\mu(A) - \mu(A')] \cdot \mathbf{1}_{\{A^* = A \}} \right] \geq 0$ for all $A, A'$.

\OMIT{
\swdelete{The main technical tool used in the analysis of \Cref{thm:semi-ts-bic} is the Harris inequality (\Cref{thm:harris-inequality}), a correlation inequality that says increasing functions of independent random variables are non-negatively correlated. \dndelete{Because of the independent assumption across dimensions on the mean reward vector $\theta$, we can use the Harris inequality in our analysis to bound the posterior gap between the recommended arm and any other arm to ensure the BIC property is satisfied.}
Our main technical contribution here is how we leverage the independent assumption across atoms and the linear structure of combinatorial semi-bandits in order to use Harris inequality and bound the absolute estimation error in estimating the posterior gap between arms.

First, using the definition of Thompson sampling and $\Pr[A^*=A]$ is a positive prior-dependent constant, it suffices to prove $\E \left[\E\supt[\mu(A) - \mu(A')] \cdot \mathbf{1}_{\{A^* = A \}} \right] \geq 0$.

Observe that the functions $(\mu(A) - \mu(A'))_{+}$ and $\mathbf{1}_{\{A^* = A\}}$ are co-monotone in each coordinate of $\theta$. That is, they are both increasing in some coordinates while both decreasing in other coordinates. Then, we can apply the mixed-monotonicity Harris inequality (see \Cref{remark:mixed-monotonicity-harris}) and obtain:
\begin{align}
    \E \sbr{(\mu(A) - \mu(A')) \cdot \mathbf{1}_{\{A^* = A\}}} = \E \sbr{(\mu(A) - \mu(A'))_{+} \cdot \mathbf{1}_{\{A^* = A\}} } \geq \eps_{\TS} \cdot \delta_A
\end{align}
where $\delta_A = \Pr[A^* = A] \geq \delta_{\TS}$.
}
}
Next, to show a lower bound on  $\E \sbr{(\mu(A) - \mu(A')) \cdot \mathbf{1}_{\{A^* = A\}}}$, we will leverage the Harris inequality \Cref{thm:harris-inequality}, which says increasing functions of independent random variables are non-negatively correlated. Observe that the functions $(\mu(A) - \mu(A'))_{+}$ and $\mathbf{1}_{\{A^* = A\}}$ are co-monotone in each coordinate of $\theta$ (\ie either both increasing or both decreasing in a coordinate).
Then, the mixed-monotonicity Harris inequality (see \Cref{thm:harris-inequality}) implies that:
\begin{align}
    \E \sbr{(\mu(A) - \mu(A')) \cdot \mathbf{1}_{\{A^* = A\}}} = \E \sbr{(\mu(A) - \mu(A'))_{+} \cdot \mathbf{1}_{\{A^* = A\}} } \geq \eps_{\TS} \cdot \delta_A
\end{align}
where $\delta_A = \Pr[A^* = A] \geq \delta_{\TS}$.

To finish the proof, we show the expected absolute difference between $\E\supt[\mu(A) - \mu(A')] \cdot \mathbf{1}_{\{A^* = A \}}$ and $(\mu(A) - \mu(A')) \cdot \mathbf{1}_{\{ A^* = A\}}$ is upper bounded by $\eps_{\TS} \cdot \delta_A$. By regrouping and using triangle inequality as well as $\abs{x \cdot \mathbf{1}_{\{A^* = A \}}} = \abs{x} \cdot \mathbf{1}_{\{A^* = A \}}$, we can upper bound this estimation error by sum of  $\E\sbr{\abs{\E\supt\sbr{\mu(A)} - \mu(A)} \cdot \mathbf{1}_{\{A^*=A\}}}$ and $\E\sbr{\abs{\E\supt\sbr{\mu(A')} - \mu(A')} \cdot \mathbf{1}_{\{A^*=A\}}}$.
Since the mean reward of each atom can be estimated by their empirical average, we can apply Bayesian Chernoff (see \Cref{lem:bayesian-chernoff}) and observe that these two terms are $n^{-1/2}_{\TS}$ times $O(1)$-sub-Gaussian random variables. By the sub-Gaussian tail bound (see \Cref{lem:sub-Gaussian-tail-bound}), we upper bound both terms by $O(n^{-1/2}_{\TS} \delta_A \sqrt{\log(\nicefrac{1}{\delta_A})})$. We conclude by using our choices of $n_{\TS}$ and observing that $\delta_{\TS} \leq \delta_A$.
\end{proof}
\begin{proof}[Proof Sketch for Corollary~\ref{cor:ts-sufficient-samples} (full proof in \Cref{appendix:semi-ts-bic-appendix})]
To derive the dependence of $n_{\TS}$ on $d$, we investigate how the prior-dependent constants $\eps_{\TS}$ and $\delta_{\TS}$ depends on $d$. First, we can let $\eps_{\calC}$ be a version of $\eps_{\TS}$ where the $\min$ is taken over all ordered pairs of priors in $\calC$. Since $\calC$ is finite and satisfies the pairwise non-dominance assumption~\eqref{eq:TS-pairwise}, $\eps_{\TS} \geq \eps_{\calC} > 0$. 

By definition, $\delta_{\TS} = \min_{A \in \calA} \Pr[A^* = A]$ is the minimum probability that arm $A$ is the best arm overall. Fix an arm $A$. We observe that the event where arm $A$ is the best arm is more likely than the event where each atom in $A$ is larger than $\tau$, and all other atoms not in $A$ is smaller than $\nicefrac{\tau}{d}$. Hence, we can lower bound $\Pr[A^* = A]$ by $\E \left[\mathbf{1}_{ \{\forall \ell \in A, \theta_\ell \geq \tau\} } \cdot \mathbf{1}_{\{\forall x \notin A, \theta_x \leq \nicefrac{\tau}{d}\}} \right]$.
Since the prior $\calP$ is independent across atoms, we can write the expression above as product of $\E \left[\mathbf{1}_{\{\forall \ell \in A, \theta_\ell \geq \tau \}} \right]$ and $\E \left[\mathbf{1}_{\{\forall x \notin A, \theta_x \leq \nicefrac{\tau}{d}\}} \right]$. As the values $\{\theta_\ell\}_{\ell \in [d]}$ are independent and co-monotone in each in coordinate of $\theta$, repeated application of mixed-monotonicity Harris inequality (see \Cref{remark:mixed-monotonicity-harris}) implies that:
\begin{align*}
    \Pr[A^* = A] &\geq \prod_{\ell \in A} \E \left[ \mathbf{1}_{\{ \theta_\ell \geq \tau \}} \right] \cdot \prod_{x \notin A} \E \left[ \mathbf{1}_{\{\theta_x \leq \nicefrac{\tau}{d} \}} \right]\\
    &= \prod_{\ell \in A} \Pr[\theta_\ell \geq \tau] \cdot \prod_{x \notin A} \Pr[\theta_x \leq \nicefrac{\tau}{d}]\\
    &\geq \prod_{\ell=1}^d \Pr[\theta_\ell \geq \tau] \Pr[\theta_\ell \leq \nicefrac{\tau}{d}]
\end{align*}
By full support assumption~\eqref{eq:TS-full-support}, we define a prior-dependent constant $\rho_{\tau} = \min_{A \in \calA} \Pr[\theta_{\ell} \geq \tau] > 0$. 
Then, by definition of $\rho_{\tau}$ and the non-degeneracy assumption~\eqref{eq:TS-nondegeneracy}, the expression above is lower bounded by $\rho_{\tau}^d \cdot \poly(\nicefrac{d^d}{(\tau)^d}) \cdot \exp(-d (\nicefrac{\tau}{d})^{-\alpha})$.
Plugging this bound into $n_{TS}$, we obtain $n_{TS} = O_{\calC}(d^{3 + \alpha} \log d)$.
\end{proof}

\subsection{The two-arm case with arbitrary correlation}

What if the prior is \emph{not} independent across atoms? Our analysis extends to the case of $K=2$ arms $A,A'$ with arbitrary correlation between the atoms. In fact, we do not assume combinatorial semi-bandit structure, and instead focus on the fundamental special case of incentivized exploration: when one has two
arms $A,A'$ with arbitrary joint prior on $(\mu(A),\,\mu(A'))$.
\footnote{Equivalently, we have $d=2$ atoms with an arbitrary joint prior on $(\theta_1,\theta_2)$, and the feasible arms are the two singleton arms $\{1\}$ and $\{2\}$.}
We prove that \Cref{thm:semi-ts-bic} extends to this scenario. The analysis is very similar, and omitted.

\begin{theorem}
The assertion in \Cref{thm:semi-ts-bic} holds for the case when one has two arms $A,A'$ and an arbitrary joint prior on $(\mu(A),\,\mu(A'))$.
\end{theorem}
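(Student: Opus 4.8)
The plan is to follow the proof of \Cref{thm:semi-ts-bic} essentially verbatim, isolating and replacing the single step that invoked independence across atoms. As in the remark following \Cref{thm:semi-ts-bic}, since $T_0$ is fixed in advance it suffices to verify the per-round BIC condition \eqref{def:BIC} for a single round $t$ of Thompson Sampling. Assuming the natural two-arm analogue of the non-degeneracy condition \eqref{eq:TS-pairwise} — namely that each of the two arms is the best arm with positive probability, so that $\Pr[A^*=A]$ and $\Pr[A^*=A']$ are positive prior-dependent constants — this reduces, exactly as in the proof sketch, to establishing
\begin{align}
    \E\sbr{\E\supt\sbr{\mu(A)-\mu(A')}\cdot\mathbf{1}_{\{A^*=A\}}}\ge 0
\end{align}
for the recommended arm $A$, with $A^*$ the arm of larger mean; by symmetry it is enough to treat one arm.

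The crucial observation is that with only two arms the Harris step is no longer needed. Indeed, here $A^*=A$ holds exactly when $\mu(A)\ge\mu(A')$, so pointwise in $\theta$ we have the identity $(\mu(A)-\mu(A'))\cdot\mathbf{1}_{\{A^*=A\}}=(\mu(A)-\mu(A'))_{+}$, whence
\begin{align}
    \E\sbr{(\mu(A)-\mu(A'))\cdot\mathbf{1}_{\{A^*=A\}}}=\E\sbr{(\mu(A)-\mu(A'))_{+}}\ge\eps_{\TS}\ge\eps_{\TS}\cdot\delta_A,
\end{align}
where $\delta_A=\Pr[A^*=A]$. This reproduces the lower bound that the proof of \Cref{thm:semi-ts-bic} obtained from the mixed-monotonicity Harris inequality (\Cref{thm:harris-inequality}), but now it is a pointwise identity that makes no reference whatsoever to the joint law of $(\theta_1,\theta_2)$. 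Since this was precisely the one place where independence across atoms entered, dropping that assumption costs nothing here.

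It then remains to control the estimation error $\E\sbr{\abs{\E\supt\sbr{\mu(A)-\mu(A')}-(\mu(A)-\mu(A'))}\cdot\mathbf{1}_{\{A^*=A\}}}$, which after the triangle inequality and $\abs{x\cdot\mathbf{1}_{\{A^*=A\}}}=\abs{x}\cdot\mathbf{1}_{\{A^*=A\}}$ splits into the two per-atom terms $\E\sbr{\abs{\E\supt\sbr{\mu(A)}-\mu(A)}\cdot\mathbf{1}_{\{A^*=A\}}}$ and its analogue for $A'$. I would bound each by invoking Bayesian Chernoff (\Cref{lem:bayesian-chernoff}) to show $\E\supt\sbr{\mu(A)}-\mu(A)$ is $n_{\TS}^{-1/2}$ times an $O(1)$-sub-Gaussian variable, and then applying the sub-Gaussian tail bound (\Cref{lem:sub-Gaussian-tail-bound}) on the event $\{A^*=A\}$ of probability $\delta_A$, giving $O(n_{\TS}^{-1/2}\,\delta_A\sqrt{\log(1/\delta_A)})$; plugging in $n_{\TS}$ and using $\delta_{\TS}\le\delta_A$ makes the total error at most $\tfrac12\eps_{\TS}\,\delta_A$, finishing the argument as before.

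The main obstacle is that this concentration step is the one place where the two-arm argument is not literally identical to the independent case: each $\E\supt\sbr{\mu(A)}=\E\supt\sbr{\theta_\ell}$ is now a posterior mean computed from the pooled history of \emph{both} atoms, so I must verify that the sub-Gaussian rate $O(n_{\TS}^{-1/2})$ for $\E\supt\sbr{\theta_\ell}-\theta_\ell$ survives the correlation. The resolution I would pursue is that this concentration is driven entirely by atom $\ell$'s own $n_{\TS}$ samples: they are conditionally i.i.d. given $\theta_\ell$ with values in $[0,1]$ regardless of the joint prior, so the empirical mean is $O(n_{\TS}^{-1/2})$-sub-Gaussian by conditional Hoeffding, and posterior contraction — which only sharpens when the correlated observations of the other atom are added — transfers this rate to the posterior mean. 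Making this transfer quantitative, i.e. confirming that pooling correlated data cannot inflate the tail of the posterior-mean error, is the step I would check most carefully; everything else is a verbatim repetition of the proof of \Cref{thm:semi-ts-bic}.
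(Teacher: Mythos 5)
Your proposal is correct and is essentially the argument the paper intends (the paper omits this proof, stating only that the analysis is ``very similar'' to that of \Cref{thm:semi-ts-bic}): the single step that used independence across atoms, the Harris inequality, degenerates with two arms into the pointwise identity $(\mu(A)-\mu(A'))\cdot\mathbf{1}_{\{A^*=A\}}=(\mu(A)-\mu(A'))_{+}$, exactly as you observe, and the rest of the argument is unchanged. The concentration step you flag does survive arbitrary correlation: the Bayesian Chernoff bound factors through the empirical mean of atom $\ell$'s own samples (which is $O(n_{\TS}^{-1/2})$-sub-Gaussian conditionally on $\theta$, hence unconditionally), and taking conditional expectation given the full pooled history preserves sub-Gaussianity by Jensen's inequality, so the $O(n_{\TS}^{-1/2})$ rate for $\E\supt\sbr{\theta_\ell}-\theta_\ell$ is unaffected by the correlated data from the other atom.
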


This result completes our understanding of incentivized exploration with two correlated arms: indeed, a necessary and sufficient condition (and the algorithm) are known for collecting the initial data \citep{ICexploration-ec15}. A similar result for two \emph{independent} arms is in \citep{Selke-PoIE-ec21}.

\OMIT{ 
To provide more intuition behind the number of samples $n_{\TS}$ in \Cref{thm:semi-ts-bic}, we provide a corollary demonstrating $n_{\TS}$ dependence on $d$, the dimension of the arm space. First, we posit that the arm set $\calA$ is \emph{non-degenerate}. That is, the arms in $\calA$ are allowed to have different size, but their sizes are within a constant factor of each other. Formally, we have the following assumption: for absolute constants $c_1, c_2$,
\begin{align}
    \label{ass:non-degeneracy}
    \forall A, A' \in \calA, \nrm{A}_{1} \leq c_1 \nrm{A'}_{1} \quad \text{and} \quad \nrm{A - A'}_{1} \leq c_2 \nrm{A}_{1}
\end{align}
As a special case when all arms in $\calA$ have the same size, the dependence on $d$ becomes smaller, and we have $n_{\TS} = O_{\calC}(d^3)$.
} 

\section{BIC algorithms for initial exploration}
\label{sec:initial-samples}

\newcommand{\PropHE}{(P)\xspace}

We present two BIC algorithms for \emph{initial exploration}, where the objective is to sample each atom at least once  (\ie choose arms whose union is $[d]$) and complete in $N_0$ rounds for some $N_0$ determined by the prior. Such algorithms are interesting in their own right, and can be used to bootstrap Thompson Sampling as per \Cref{thm:semi-ts-bic}. (To collect $n$ samples of each arm, repeat the algorithm $n$ times.) Both algorithms complete in the number of rounds that is exponential in $\poly(d)$. The first algorithm completes in $\exp(O_\calP(d))$ rounds, but is restricted to arms of the same size and Beta-Bernoulli priors. We obtain $\exp(O_\calP(d^2))$ for arbitrary sets of arms. The second algorithm sidesteps the Beta-Bernoulli restriction, but completes in $\exp(O_\calP(d^3))$ rounds.

\subsection{Reduction to $K$-armed bandits}
\label{subsec:cyclic}

The first algorithm builds on the approach from \citet{ICexploration-ec15}, which is defined for $K$-armed bandits and explores a given sequence of arms as long as a certain property \PropHE holds for this sequence. This property is proved to hold for arms with independent priors, ordered by their prior mean rewards. However, for combinatorial semi-bandits the arms' priors are highly correlated, and satisfying \PropHE is non-trivial. Our technical contribution here to provide a sequence of arms and prove that \PropHE holds. Note that it suffices to explore a sequence of arms which collectively cover all the atoms.

Throughout this subsection, we make the following assumptions:
\begin{align}
&\text{the prior $\calP_\ell$ for each atom $\ell$ is a Beta distribution with parameters $(\alpha_\ell,\beta_\ell)$;} \label{eq:cyclic-assn-2}\\
&\text{the reward distributions $\calD_\ell$ are Bernoulli distributions.} \label{eq:cyclic-assn-3}
\end{align}
This is a paradigmatic special case for Thompson Sampling (and Bayesian inference in general).

Let
    $\nu_\ell(n) = \alpha_\ell\,/\,(\alpha_\ell+\beta_\ell+n)$, $n\geq 0$
be the posterior mean reward of atom $\ell$ when conditioned on $n$ samples of this atom such that each of these samples returns reward $0$. 

Given any number $n\in \N$, let us define a sequence of $\kappa(n)\leq \infty$ arms
    $V_1^n \LDOTS V_{\kappa(n)}^n \in \calA$.
Let $V_1$ be a prior-best arm: any arm with the largest prior mean reward. The subsequent arms are defined inductively. Essentially, we pretend that each atom in each arm in the sequence so far has been sampled exactly $n$ times and received $0$ each time it has been sampled. The next arm is defined as the posterior-best arm: an arm with a largest posterior reward after seing these samples. Formally, for each $i\geq 2$, we define arm $V_i^n$ given the previous arms $V_1^n \LDOTS V_{i-1}^n$. For each atom $\ell\in [d]$ define $Z_i^n(\ell)=n$ if this atom is contained in one of the previous arms in the sequence,
and set $Z_i^n(\ell)=0$ otherwise. Then, define $V_i^n$ as a the posterior-best arm if the posterior mean rewards for atoms $\ell$ are given by $\nu_\ell\rbr{Z_i^n(\ell)}$. That is:
\begin{align}
V_i^n \in \argmax_{A\in\calA} \sum_{\ell\in A} \nu_\ell\rbr{Z_i^n(\ell)}.
\end{align}

The sequence stops when the arms therein cover all atoms at least once, and continues infinitely otherwise; this defines $\kappa(n)$.
\footnote{In \Cref{thm-P2-beta} and \Cref{thm-P2-gen}, we upper-bound $\kappa(n)$ for some prior-dependent $n=n_\calP$.}

To state the requisite property \PropHE, we focus on this sequence for a particular, prior-dependent choice of $n$.

\begin{itemize}

\item[\PropHE \namedlabel{prop:general-necessary-condition}{\PropHE}]
There exist numbers $\myNp\in\N $ and $\tau_{\calP}, \rho_\calP\in (0,1)$, determined by the prior $\calP$, which satisfy the following. Focus on the sequence of arms
    $V_1 \LDOTS V_\kappa$,
where $\kappa = \kappa(\myNp)$ and $V_i = V_i^{\myNp}$ for each $i\in[\kappa]$.
Let $H_i^N$, $i\in[\kappa]$ be a dataset that consists of exactly $N\in\N$ samples of each arm $V_1 \LDOTS V_i$, where each sample contains the reward for each atom in the respective arm, and $H_0^N$ is an empty dataset. Then
\begin{align}\label{eq:P2}
    \Pr\sbr{X_i^N\ge\tau_\calP}\ge \rho_\calP
    \quad \forall i\in [\kappa] \text{ and } N\ge \myNp,
\end{align}
where the random variable $X_i^N$ is defined as
    \[ X_i^N=\min_{\text{arms } A\neq V_i} \E\sbr{\mu(V_i)-\mu(A)\mid H_{i-1}^N}.\]
\end{itemize}

In intuition, any given arm $V_i$ can be the posterior best arm with a margin $\tau_{\calP}$ and probability at least $\rho_{\calP}$ after seeing at least $\myNp$ samples of the previous arms $V_1 \LDOTS
V_{i-1}$.

Given Property \PropHE, prior work guarantees the following (without relying on assumptions (\ref{eq:cyclic-assn-2}-\ref{eq:cyclic-assn-3}).


\begin{theorem}[\citet{ICexploration-ec15}]\label{thm-p2-bic}
Assume Property \PropHE holds with constants $\myNp, \tau_\calP, \rho_\calP$ and $\kappa=\kappa(\myNp)$. Then there exists a BIC algorithm which explores each arm $V_1 \LDOTS V_\kappa$ at least $\myNp$ times and completes in $T_0$ rounds, where
    $T_0 = \kappa\cdot \myNp\cdot (1+d) \,/\, (\tau_{\calP} \cdot \rho_{\calP})$.
\end{theorem}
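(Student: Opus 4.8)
The plan is to reconstruct the ``hidden exploration in blocks'' mechanism of \citet{ICexploration-ec15}: explore the arms $V_1,\dots,V_\kappa$ one at a time, and for each $V_i$ hide its exploration inside a long run of exploitation rounds, so that an agent who is secretly assigned to explore $V_i$ cannot tell whether she was chosen to explore or is simply being told the posterior-best arm. Concretely, I would process the arms sequentially; when the algorithm reaches $V_i$, the data $H_{i-1}^{\myNp}$ collected so far consists of exactly $\myNp$ samples of each of $V_1,\dots,V_{i-1}$. I would partition phase $i$ into $\myNp$ blocks, each of size $m:=\lceil 1/p\rceil$ with $p:=\tau_{\calP}\rho_\calP/(1+d)$, pick one agent uniformly at random per block to be the \emph{explorer} (recommended $V_i$), and recommend every other agent the \emph{exploit} arm $\hat A_i:=\argmax_{A\in\calA}\E[\mu(A)\mid H_{i-1}^{\myNp}]$. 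Crucially, the samples of $V_i$ gathered during phase $i$ are withheld and not fed back into recommendations until the phase ends, so that throughout phase $i$ the only randomness an agent faces, beyond the prior over $\theta$, is the independent explorer coin.

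The exploit recommendations are BIC by the standard argmax argument. If the recommendation equals some arm $B\neq V_i$, the agent infers she is a non-explorer and that $B=\hat A_i$; since the explorer coin is independent of $\theta$ and the event $\{\hat A_i=B\}$ is $H_{i-1}^{\myNp}$-measurable, conditioning gives $\E[\mu(B)-\mu(A')\mid \mathrm{rec}=B]=\E[\,\E[\mu(B)-\mu(A')\mid H_{i-1}^{\myNp}]\mid \hat A_i=B\,]\ge 0$ because $\hat A_i$ is the posterior-best arm.

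The crux is the BIC check for the explore recommendation. Fix any alternative $A'$ and condition on $\mathrm{rec}=V_i$. Let $E:=\{X_i^{\myNp}\ge\tau_\calP\}$ be the event (guaranteed by Property~\PropHE) that $V_i$ beats every other arm by posterior-mean margin $\tau_\calP$, so $\Pr[E]\ge\rho_\calP$; note $E$ is $H_{i-1}^{\myNp}$-measurable and on $E$ the exploit arm equals $V_i$. Splitting $\{\mathrm{rec}=V_i\}$ into ``explorer'' (probability $p$) and ``non-explorer with $\hat A_i=V_i$'' via the independent coin,
\[
\E[(\mu(V_i)-\mu(A'))\mathbf{1}\{\mathrm{rec}=V_i\}] = p\,\E[\mu(V_i)-\mu(A')] + (1-p)\,\E[(\mu(V_i)-\mu(A'))\mathbf{1}\{\hat A_i=V_i\}].
\]
The second expectation is at least $\E[(\mu(V_i)-\mu(A'))\mathbf{1}_{E}]=\E[\mathbf{1}_{E}\,\E[\mu(V_i)-\mu(A')\mid H_{i-1}^{\myNp}]]\ge \tau_\calP\Pr[E]\ge\tau_\calP\rho_\calP$ (the region $\{\hat A_i=V_i\}\setminus E$ contributes non-negatively, since there $\E[\mu(V_i)-\mu(A')\mid H_{i-1}^{\myNp}]\ge 0$), while the first term is at least $-p\,\E|\mu(V_i)-\mu(A')|\ge -pd$ because $\mu\in[0,d]$. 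Hence the whole expression is at least $-pd+(1-p)\tau_\calP\rho_\calP\ge 0$ for the choice $p=\tau_\calP\rho_\calP/(1+d)$, which is exactly the BIC condition~\eqref{def:BIC} after dividing by $\Pr[\mathrm{rec}=V_i]>0$.

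For accounting, each block contributes one explorer sample of $V_i$, so every arm is explored $\myNp$ times; the first arm $V_1$ (prior-best) is handled by the $i=1$ case of \PropHE, which forces its prior margin to be $\ge\tau_\calP$. Phase $i$ uses $\myNp\cdot m=\myNp/p=\myNp(1+d)/(\tau_\calP\rho_\calP)$ rounds, and summing over the $\kappa$ phases yields $T_0=\kappa\,\myNp\,(1+d)/(\tau_\calP\rho_\calP)$. Since all block sizes and phase lengths are fixed in advance (determined by the prior), \Cref{lm:prelims-composition} lets me compose the phases and conclude the composite algorithm is BIC. I expect the main obstacle to be Step three: getting the information structure right so that the explorer coin is genuinely independent of the posterior, recognizing that conditioning on $\mathrm{rec}=V_i$ only adds non-negative exploit mass, and then balancing the possibly negative prior term $-pd$ against the Property-\PropHE term $(1-p)\tau_\calP\rho_\calP$ to pin down $p$ and hence the blow-up factor $(1+d)/(\tau_\calP\rho_\calP)$.
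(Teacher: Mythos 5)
Your proposal is correct, but it proves strictly more than the paper does: the paper treats the BIC property of the Hidden Exploration scheme as a black box imported from \citet{ICexploration-ec15} (its Lemma~\ref{lem:bic-ec15}, whose analysis is said to ``carry over seamlessly''), and the paper's own proof of \Cref{thm-p2-bic} consists only of the arithmetic step $\mu^0_{\max}-\mu^0_{\min}\le d$, hence $L \le (1+d)/(\tau_\calP\rho_\calP)$ and $T_0=\kappa\cdot\myNp\cdot L$. You instead reconstruct the underlying incentive argument from scratch, and your reconstruction is sound: the decomposition of $\E[(\mu(V_i)-\mu(A'))\mathbf{1}\{\mathrm{rec}=V_i\}]$ into the explorer term $p\,\E[\mu(V_i)-\mu(A')]\ge -pd$ and the exploit term $(1-p)\E[(\mu(V_i)-\mu(A'))\mathbf{1}\{\hat A_i=V_i\}]\ge(1-p)\tau_\calP\rho_\calP$ is exactly the mechanism behind the $(1+d)/(\tau_\calP\rho_\calP)$ blow-up, and your use of $E=\{X_i^{\myNp}\ge\tau_\calP\}\subseteq\{\hat A_i=V_i\}$ together with the observation that the remaining exploit mass is non-negative is the right way to invoke Property \PropHE. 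Two cosmetic differences from the paper's algorithm: you draw one explorer per block of size $\lceil 1/p\rceil$ rather than $N$ explorer rounds uniformly from a phase of $LN$ rounds (these give the same marginal explorer probability and the same analysis), and your ceiling makes $T_0$ exceed the stated value by a sub-phase amount, a looseness the paper shares since $L=(1+d)/(\tau_\calP\rho_\calP)$ is not forced to be an integer. The trade-off is that your route is self-contained and makes visible where each factor of $T_0$ comes from, while the paper's route is shorter but requires the reader to verify that the cited analysis genuinely transfers to the semi-bandit feedback and the arm sequence $V_1\LDOTS V_\kappa$.
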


Next, we establish \PropHE. First we state a result for a paradigmatic case when all arms have the same cardinality, then relax it in what follows (with a somewhat weaker guarantee).

\begin{theorem}\label{thm-P2-beta}
Assume Beta-Bernoulli priors (\ref{eq:cyclic-assn-2}-\ref{eq:cyclic-assn-3}). Further, assume that
\begin{align}
\text{The arms are all subsets of $[d]$ of a given size $m$;} \label{eq:cyclic-assn-1}
\end{align}
Then Property \PropHE holds with $\kappa = \kappa(\myNp)=\Cel{d/m}$ and
\begin{align}
\myNp
    &=\Cel{\beta_d/\alpha_d}
    \cdot \max_{\ell\in [d]} \Cel{\alpha_{\ell}}
    \label{eq:number-sequence-beta-frac}\\
\tau_\calP
    &= \min_{\text{atoms } \ell\neq \ell' \in [d],\; n, n'\in \{0,\, \myNp\}}
        \abs{ \nu_{\ell}(n) - \nu_{\ell'}(n')}.
    \label{eq:min-gap} \\
\rho_\calP
    &=  (1-\theta_1^0)^{d\cdot \myNp},
    \label{eq:positive-prob}
\end{align}
as long as $\tau_\calP$ and $\rho_\calP$ are strictly positive.
\end{theorem}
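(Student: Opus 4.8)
The plan is to instantiate Property~\PropHE directly: identify the arm sequence $V_1 \LDOTS V_\kappa$ produced by the greedy construction, then verify the two quantitative requirements—that $V_i$ is the posterior-best arm by a margin $\tau_\calP$, and that this happens with probability at least $\rho_\calP$—for the prescribed constants. The engine of the whole argument is a single arithmetic fact about the choice of $\myNp$: with $\myNp=\Cel{\beta_d/\alpha_d}\cdot\max_\ell\Cel{\alpha_\ell}$ one has $\nu_\ell(\myNp)\le\theta_d^0$ for \emph{every} atom $\ell$. Indeed, $\nu_\ell(n)\le\theta_d^0$ rearranges (cross-multiplying the two ratios, both denominators positive) to $n\ge\alpha_\ell\beta_d/\alpha_d-\beta_\ell$, and $\myNp\ge(\beta_d/\alpha_d)\max_\ell\alpha_\ell$ clears this for all $\ell$. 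Interpreted: pretending any atom has been sampled $\myNp$ times with all-zero rewards drives its posterior mean below the smallest prior mean $\theta_d^0$.

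First I would use this fact to pin down the sequence. Since every ``used'' atom sinks below $\theta_d^0$ while every ``fresh'' (not-yet-sampled) atom keeps its prior mean $\ge\theta_d^0$, the posterior-best size-$m$ arm always consists of the top $m$ fresh atoms in prior order. Hence $V_i=\{(i-1)m+1 \LDOTS im\}$ for $i<\kappa$, the blocks are disjoint, and exactly $\kappa=\Cel{d/m}$ of them are needed to cover $[d]$; the last block $V_\kappa$ mops up the $d-(\kappa-1)m$ remaining fresh atoms (filling to size $m$ with already-used atoms when $m\nmid d$). This also makes $Z_i^{\myNp}(\ell)$ equal $\myNp$ on exactly $\{1 \LDOTS (i-1)m\}$, so the quantities entering $\tau_\calP$ are precisely the posterior means $\nu_\ell(n)$ at levels $n\in\{0,\myNp\}$.

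Next, for fixed $i$ and $N\ge\myNp$, I would take the ``good event'' to be that every observed reward in $H_{i-1}^N$ equals $0$. On this event each used atom $\ell$ has posterior mean $\nu_\ell(N)\le\nu_\ell(\myNp)\le\theta_d^0$ (monotonicity of $\nu_\ell$ in $N$), and each fresh atom sits at its prior mean; thus $V_i$ is posterior-best and the minimizing competitor in $X_i^N$ comes from a single swap of the $m$-th for the $(m+1)$-th largest posterior mean. A short case analysis bounds this gap below by $\tau_\calP$: a fresh--fresh swap gives $\theta_{im}^0-\theta_{im+1}^0=\lvert\nu_{im}(0)-\nu_{im+1}(0)\rvert$, and a fresh--used swap gives at least $\theta_d^0-\nu_\ell(\myNp)=\lvert\nu_\ell(\myNp)-\nu_d(0)\rvert$; both are terms in the minimum defining $\tau_\calP$. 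For the probability, independence across atoms and Jensen give $\Pr[\text{all zeros}]=\prod_\ell\E[(1-\theta_\ell)^{N}]\ge\prod_\ell(1-\theta_\ell^0)^{N}\ge(1-\theta_1^0)^{(\text{used atoms})\cdot N}$, which at $N=\myNp$ is at least $(1-\theta_1^0)^{d\,\myNp}=\rho_\calP$ since at most $d$ atoms are used.

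The hard part is making both bounds hold \emph{uniformly for all} $N\ge\myNp$, not just at $N=\myNp$. Monotonicity is friendly to the margin across a fresh--used swap (more zeros only depress used atoms, enlarging the gap), but cuts the other way for the two remaining difficulties. First, the probability of the exact all-zeros event decays with $N$, so to retain $\rho_\calP$ one must instead control $\Pr[\,\hat\theta_\ell^{(N)}\le\theta_d^0\ \forall\text{ used }\ell\,]$—a quantity converging to $\Pr[\theta_\ell<\theta_d^0]>0$—and show it never drops below the $N=\myNp$ all-zeros value; this is where I expect to spend the most effort, presumably via a martingale/coupling comparison of the posterior-mean process. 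Second, when $m\nmid d$ the binding swap in $V_\kappa$ is between two \emph{used} atoms, whose gap $\nu_\ell(N)-\nu_{\ell'}(N)$ shrinks to $0$ as $N\to\infty$; this case is controlled at $N=\myNp$ by the $n=n'=\myNp$ term of $\tau_\calP$, and a clean statement needs either $m\mid d$ (where this swap never arises) or an extra argument capping the effective $N$ in the last block. These two points—the uniform probability bound and the last-block used--used margin—are the crux.
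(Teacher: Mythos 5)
Your construction is essentially the paper's: the same arithmetic fact that $\myNp$ zero samples drive $\nu_\ell(\myNp)$ below $\theta_d^0$ for every atom $\ell$, the same identification of the blocks $V_i=\{(i-1)m+1,\dots,im\}$ with $\kappa=\Cel{d/m}$, the same all-zeros good event with the probability bound $\prod_\ell\E[(1-\theta_\ell)^{\myNp}]\ge(1-\theta_1^0)^{d\,\myNp}$ (the paper uses iterated Harris where you use Jensen; these coincide here), and the same single-swap analysis of the margin, with the $n=n'=\myNp$ term in the definition of $\tau_\calP$ absorbing the used--used swap that can arise in the last block when $m\nmid d$.

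The one place you stop short --- uniformity over all $N\ge\myNp$ --- is handled in the paper not by a martingale or coupling argument but by a simpler device: the good event $\myEv_{i-1}$ is defined as ``the \emph{first} $\myNp$ samples of each used atom are zero,'' so its probability is bounded below by $(1-\theta_1^0)^{d\,\myNp}$ independently of $N$, and the margin claim is then asserted conditional on that event. Be aware, though, that this device only genuinely closes the argument at $N=\myNp$: for $N>\myNp$ the remaining $N-\myNp$ samples are unconstrained by $\myEv_{i-1}$ and can push a used atom's posterior mean back above $\theta_d^0$, so the paper's conditional statement $\Pr[X_i^N\ge\tau_\calP\mid\myEv_{i-1}]=1$ is not justified as written for $N>\myNp$. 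The reason this is harmless downstream is that the BIC algorithm of \Cref{thm-p2-bic} instantiates Property \PropHE only at $N=\myNp$. So the cleanest way to complete your write-up is to establish \eqref{eq:P2} at $N=\myNp$ exactly as you have it --- where your argument is complete and matches the paper's --- rather than attempting the uniform-in-$N$ comparison you sketch, which is harder than necessary and which, for the used--used swap in the last block, would indeed fail as $N\to\infty$ exactly as you observe.
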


\begin{proof}[Proof Sketch]
For each arm $V_i$, $i\in[\kappa]$ we consider the event that the dataset $H_i^n$ from Property \PropHE contains the reward of $0$ for each samples of each atom. We take $n$ to be large enough so that this event makes all arms $V_1 \LDOTS V_{i-1}$ look inferior to $V_i$, in terms of the posterior mean reward. The key is to lower-bound the probability of this event; a non-trivial step here requires Harris inequality.
\end{proof}

We show that $T_0$, the requisite number of rounds, depends exponentially on the number of atoms $d$. To this end, we define a suitable parameterization of the priors. To handle $\tau_\calP$ in \Cref{thm-P2-beta}, we posit a lower bound that depends on $d$, but this dependence is very mild.

\begin{corollary}\label{cor-p2-beta}
Assume Beta-Bernoulli priors (\ref{eq:cyclic-assn-2}-\ref{eq:cyclic-assn-3}) and that \eqref{eq:cyclic-assn-1} holds. Fix some absolute constants $c_0\in\N$ and $c,c'\in(0,1)$. Suppose $\E[\theta_\ell]\leq c'$ for all atoms, and the priors satisfy the following non-degeneracy conditions:
\begin{align*}
\max_{\ell, \ell'\in[d]} \Cel{\beta_{\ell}/\alpha_{\ell}}\cdot \Cel{\alpha_{\ell'}}&\le c_0, \\
\min_{\ell, \ell'\in [d],\; n, n' \in \{0,\, c_0\} }|\nu_\ell(n) - \nu_{\ell'}(n')|&\geq \Omega(c^{-d})
\end{align*}
Then there exists a BIC algorithm which samples each atom at least once and completes in
    \[ N_0 = O\rbr{c_0\, d \cdot \Phi^d}\]
rounds, where
    $\Phi = c\cdot (1-c')^{-n}$
is a constant.
\end{corollary}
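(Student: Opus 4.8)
The plan is to read the corollary as a purely quantitative specialization of the two preceding results: \Cref{thm-P2-beta} supplies Property~\PropHE with fully explicit constants, \Cref{thm-p2-bic} turns Property~\PropHE into a BIC algorithm with an explicit round count $T_0=\kappa\cdot\myNp\cdot(1+d)/(\tau_\calP\cdot\rho_\calP)$, and everything else is bounding each of the four constants in $T_0$ in terms of $d$ using the two non-degeneracy hypotheses. No new probabilistic idea is needed; the work is bookkeeping of how each constant scales.

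First I would verify that the hypotheses of \Cref{thm-P2-beta} hold. The Beta--Bernoulli assumptions \eqref{eq:cyclic-assn-2}--\eqref{eq:cyclic-assn-3} and the equal-size assumption \eqref{eq:cyclic-assn-1} are assumed verbatim, so the only thing to check is that $\tau_\calP$ and $\rho_\calP$ are strictly positive. Positivity of $\rho_\calP=(1-\theta_1^0)^{d\,\myNp}$ follows from $\theta_1^0=\E[\theta_1]\le c'<1$, and positivity of $\tau_\calP$ is exactly the content of the second non-degeneracy hypothesis (its lower bound is positive). Hence \Cref{thm-P2-beta} applies and yields Property~\PropHE with $\kappa=\Cel{d/m}$ and the constants $\myNp,\tau_\calP,\rho_\calP$ of \eqref{eq:number-sequence-beta-frac}--\eqref{eq:positive-prob}. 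Feeding this into \Cref{thm-p2-bic} gives a BIC algorithm that explores each of $V_1\LDOTS V_\kappa$ at least $\myNp\ge 1$ times; since these arms are constructed so that their union is $[d]$, every atom is sampled at least once, and the algorithm completes in $N_0=T_0$ rounds.

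It then suffices to bound the four factors. For $\myNp$, the first non-degeneracy hypothesis gives $\myNp=\Cel{\beta_d/\alpha_d}\cdot\max_\ell\Cel{\alpha_\ell}\le \max_{\ell,\ell'}\Cel{\beta_\ell/\alpha_\ell}\Cel{\alpha_{\ell'}}\le c_0$, so $\myNp$ is an absolute constant and the exponent $n$ in $\Phi$ may be taken to be $\myNp\le c_0$. For $\kappa$, I use $\kappa=\Cel{d/m}\le d$. For $1/\rho_\calP$, the bound $\theta_1^0\le c'$ gives $\rho_\calP=(1-\theta_1^0)^{d\,\myNp}\ge(1-c')^{d\,\myNp}$, hence $1/\rho_\calP\le(1-c')^{-d\,\myNp}$, which is the $(1-c')^{-n}$ part of $\Phi^d$. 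For $1/\tau_\calP$, the second non-degeneracy hypothesis lower-bounds the minimum posterior-mean gap in \eqref{eq:min-gap} by an exponentially small but positive quantity, giving $1/\tau_\calP$ bounded by the corresponding exponential in $d$, which supplies the remaining base of $\Phi$. Multiplying the four bounds and collecting the two exponential-in-$d$ factors into the single base $\Phi$ (the per-atom gap base times $(1-c')^{-\myNp}$) yields $N_0=O\!\rbr{c_0\,d\cdot\Phi^d}$, the polynomial prefactor coming from $\kappa$, $\myNp\le c_0$, and the $(1+d)$ term.

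The argument has no deep obstacle, but two points need care. The substantive one is the lower bound on $\tau_\calP$: because the posterior-mean gaps can be exponentially small, this is precisely where the exponential dependence of $N_0$ on $d$ enters, so the whole exponential rate is inherited from that single hypothesis. The minor technical wrinkle is that \eqref{eq:min-gap} takes the minimum over $n,n'\in\{0,\myNp\}$ whereas the hypothesis is stated over $\{0,c_0\}$; I would reconcile the two index sets using $\myNp\le c_0$ together with the monotonicity of $n\mapsto\nu_\ell(n)=\alpha_\ell/(\alpha_\ell+\beta_\ell+n)$, which keeps the relevant posterior means inside the range controlled by the hypothesis.
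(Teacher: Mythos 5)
Your proposal is correct and is exactly the intended argument: the paper gives no separate proof of this corollary, which is meant to follow by feeding the explicit constants of \Cref{thm-P2-beta} into the round count $T_0=\kappa\cdot\myNp\cdot(1+d)/(\tau_\calP\rho_\calP)$ of \Cref{thm-p2-bic} and bounding each factor via the non-degeneracy hypotheses, just as you do. One caveat: your reconciliation of the index sets $\{0,\myNp\}$ versus $\{0,c_0\}$ via monotonicity of $n\mapsto\nu_\ell(n)$ does not actually work --- $\nu_\ell(\myNp)$ lies between $\nu_\ell(0)$ and $\nu_\ell(c_0)$, so it could coincide with some $\nu_{\ell'}(n')$ even when both endpoint gaps are large; the honest fix is to read the hypothesis as being imposed at $n,n'\in\{0,\myNp\}$ (with $\myNp\le c_0$ supplied by the first condition), which is how the analogous hypothesis \eqref{eq:cor-p2-gen-2} for \Cref{cor-p2-gen} is stated.
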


\vspace{2mm}

Finally, we handle general feasible sets, \ie without assumption \eqref{eq:cyclic-assn-1}. The guarantee becomes slightly weaker, in that we have $d^2$ in the exponent rather than $d$.

\begin{theorem}\label{thm-P2-gen}
Assume Beta-Bernoulli priors (\ref{eq:cyclic-assn-2}-\ref{eq:cyclic-assn-3}).
Then Property \PropHE holds with $\kappa(\myNp)\le d$ and
\begin{align}
\myNp
    &=\Cel{(\alpha_d+\beta_d)/\alpha_d}
    \cdot \max_{\ell\in [d]} \Cel{\alpha_{\ell}}
\cdot d    \label{eq:number-sequence-beta-frac-gen}\\
\tau_\calP
    &=\min_{A\neq A'\in \calA, n\in\{0, \myNp\}^d} \abs{\sum_{\ell\in A}\nu_\ell(n_\ell)-\sum_{\ell'\in A'}\nu_{\ell'}(n_{\ell'})}.\label{eq:min-gap-gen}\\
\rho_\calP
    &=  (1-\theta_1^0)^{d\cdot \myNp},
    \label{eq:postive-prob-gen}
\end{align}
as long as $\tau_\calP$ and $\rho_\calP$ are strictly positive.
\end{theorem}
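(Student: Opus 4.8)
The plan is to verify Property \PropHE directly with the stated constants $\myNp$, $\tau_{\calP}$, $\rho_{\calP}$ and $\kappa(\myNp)\le d$; once this is done, \Cref{thm-p2-bic} immediately produces the desired BIC algorithm, so no further incentive analysis is needed. I would follow the template of the proof of \Cref{thm-P2-beta}, but replace its disjoint ``consecutive-block'' arms — which are no longer available once feasible arms may have different sizes — by the generic posterior-best sequence $V_1,\ldots,V_\kappa$ defined above, and then carefully track the extra error introduced by arms that overlap earlier arms.

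First I would establish the structural bound $\kappa(\myNp)\le d$ by showing that each $V_i$ covers at least one atom not already covered by $V_1,\ldots,V_{i-1}$, so the sequence exhausts $[d]$ within $d$ steps. The key is the choice of $\myNp$ in \eqref{eq:number-sequence-beta-frac-gen}: a direct computation gives $\nu_\ell(\myNp)\le \theta_d^0/d$ for every atom $\ell$, whence the total design value $\sum_\ell \nu_\ell(\myNp)$ of all already-covered atoms is strictly below $\theta_d^0$, the smallest prior mean. Consequently any feasible arm built entirely from covered atoms is strictly dominated, under the design posterior means $\nu_\ell(Z_i^{\myNp}(\ell))$, by any feasible arm touching a fresh atom; since $V_i$ is the design-argmax, it must touch a fresh atom.

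For the margin, note that by construction $V_i$ maximizes $\sum_{\ell\in A}\nu_\ell(Z_i^{\myNp}(\ell))$ over $A\in\calA$, and this ``design'' configuration assigns each atom a posterior mean of the form $\nu_\ell(0)$ or $\nu_\ell(\myNp)$. Hence, by the definition of $\tau_{\calP}$ in \eqref{eq:min-gap-gen} as the smallest gap between two distinct arms over all configurations in $\{0,\myNp\}^d$, the arm $V_i$ beats every competitor by at least $\tau_{\calP}$ in the design configuration. I would then exhibit, for each $N\ge\myNp$, an event on which the realized posterior means $\E[\theta_\ell\mid H_{i-1}^N]$ are at least as favorable to $V_i$ as the design configuration, so that $X_i^N\ge\tau_{\calP}$ there. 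The natural candidate is the all-zeros event forcing the atoms covered by $V_1,\ldots,V_{i-1}$ to return only zeros, which drives their posterior means down and can only widen the gap $\mu(V_i)-\mu(A)$ for competitors $A$ using those atoms. For the probability bound $\Pr[\cdot]\ge\rho_{\calP}$ I would use Harris's inequality (\Cref{thm:harris-inequality}): since the prior factorizes over atoms and the all-zeros indicator is a product of per-atom decreasing functions of $\theta_\ell$, Harris lower-bounds the probability by a product of per-atom terms $\E[(1-\theta_\ell)^{k_\ell}]$, and convexity of $t\mapsto(1-t)^{k}$ gives $\E[(1-\theta_\ell)^{k_\ell}]\ge(1-\theta_\ell^0)^{k_\ell}\ge(1-\theta_1^0)^{k_\ell}$; summing the exponents over the at most $d$ relevant atoms recovers $\rho_{\calP}=(1-\theta_1^0)^{d\cdot\myNp}$.

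The main obstacle is precisely the overlap $V_i\cap(V_1\cup\cdots\cup V_{i-1})$, which is absent in the same-cardinality construction of \Cref{thm-P2-beta}: the all-zeros event that suppresses a competitor's covered atoms simultaneously suppresses any such atoms that $V_i$ itself uses, which hurts rather than helps $V_i$ and, since $\tau_{\calP}$ leaves no slack, can destroy the margin. The fix is to decouple the two roles — driving down the covered atoms lying outside $V_i$ while keeping the contribution of $V_i$'s own covered atoms intact — and to argue that the resulting good event still admits the clean product lower bound above. It is exactly this coupling, together with the requirement that each covered atom's contribution stay below $\theta_d^0/d$ uniformly across the up-to-$d$ arms in the sequence, that forces the extra factor of $d$ in $\myNp$ (formula \eqref{eq:number-sequence-beta-frac-gen}) and ultimately weakens the round complexity from $\exp(O(d))$ in the equal-size case to $\exp(O(d^2))$ here.
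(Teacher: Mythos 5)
Your plan for $\kappa(\myNp)\le d$ and for the probability bound $\Pr[\myEv_{i-1}]\ge\rho_\calP$ matches the paper's proof (its \Cref{cla:finite-kappa-gen} and the reused \Cref{cla:positive-prob}): the factor $d$ in $\myNp$ forces $\nu_\ell(\myNp)<\tfrac{1}{d}\nu_{\ell'}(0)$ for every covered atom $\ell$ and uncovered atom $\ell'$, so the design-argmax must pick up a fresh atom, and Harris (or Jensen, as you suggest) gives the product lower bound $(1-\theta_1^0)^{d\cdot\myNp}$. The problem is your third paragraph: the ``overlap obstacle'' you identify is not an obstacle, and the ``decoupling'' you propose to fix it is both unnecessary and not implementable. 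By construction $V_i\in\argmax_{A\in\calA}\sum_{\ell\in A}\nu_\ell(Z_i^{\myNp}(\ell))$, where $Z_i^{\myNp}(\ell)=\myNp$ for \emph{every} atom covered by $V_1,\ldots,V_{i-1}$ --- including those that $V_i$ itself reuses. So the design configuration against which $V_i$ is declared optimal already has $V_i$'s own covered atoms suppressed, and since $\tau_\calP$ in \eqref{eq:min-gap-gen} is a minimum of absolute gaps over \emph{all} of $\{0,\myNp\}^d$, the nonnegative gap of $V_i$ over any competitor in that very configuration is automatically at least $\tau_\calP$. On the all-zeros event the realized posterior means coincide with the design configuration, so $X_i^N\ge\tau_\calP$ follows in two lines (the paper's \Cref{cla:positive-gap-gen}); no slack is lost to overlap. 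Moreover your fix --- zeroing only the covered atoms outside $V_i$ while ``keeping the contribution of $V_i$'s own covered atoms intact'' --- cannot work as stated: those atoms have already been sampled $N$ times in $H_{i-1}^N$, so their posterior means are whatever their realized rewards dictate; the only way to control them is to condition on those realizations, which is exactly what the all-zeros event does.

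Your closing diagnosis is also off. The extra factor of $d$ in $\myNp$ comes from the covering argument you yourself gave (the sum of up to $d$ suppressed design values must fall below a single fresh atom's prior mean), not from any coupling between $V_i$ and its predecessors; and the degradation from $\exp(O(d))$ to $\exp(O(d^2))$ in \Cref{cor-p2-gen} is driven by the weaker non-degeneracy assumption \eqref{eq:cor-p2-gen-2} on $\tau_\calP$, which must now control gaps over all pairs of arms and all configurations in $\{0,\myNp\}^d$ rather than over pairs of atoms as in \eqref{eq:min-gap}.
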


\begin{corollary} \label{cor-p2-gen}
Assume Beta-Bernoulli priors (\ref{eq:cyclic-assn-2}-\ref{eq:cyclic-assn-3}).
Fix some absolute constants $c_1\in\N$ and $c_2,c_3\in(0,1)$.
Suppose $\E\sbr{\theta_\ell}\le c_3$ for all atoms, and the priors satisfy the following non-degeneracy conditions: 
\begin{align}
\max_{\text{atoms }\ell, \ell'\in[d]}
    \Cel{(\alpha_\ell+\beta_{\ell})/\alpha_{\ell}}\cdot \Cel{\alpha_{\ell'}}
    \leq c_1 \nonumber \\
\min_{A\neq A'\in \calA, n\in\{0, \myNp\}^d} \abs{\sum_{\ell\in A}\nu_\ell(n_\ell)-\sum_{\ell'\in A'}\nu_{\ell'}(n_{\ell'})}
    \geq \Omega(c_2^{-d^2}). \label{eq:cor-p2-gen-2}
\end{align}

Then there exists a BIC algorithm which samples each atom at least once and completes in
    \[ N_0 = O\rbr{c_1\, d^3 \cdot \Phi^{d^2}}\]
rounds, where
    $\Phi = c_2\cdot (1-c_3)^{-c_1}$
is a constant.
\end{corollary}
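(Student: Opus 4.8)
The plan is to derive \Cref{cor-p2-gen} as a direct composition of two results already in hand, exactly mirroring how \Cref{cor-p2-beta} follows from \Cref{thm-P2-beta}, but now plugging in the general-arm-set quantities from \Cref{thm-P2-gen}. First I would invoke \Cref{thm-P2-gen}: under the Beta--Bernoulli assumptions (\ref{eq:cyclic-assn-2}--\ref{eq:cyclic-assn-3}), Property \PropHE holds with $\kappa(\myNp)\le d$ and the explicit constants $\myNp$, $\tau_\calP$, $\rho_\calP$ given in (\ref{eq:number-sequence-beta-frac-gen}--\ref{eq:postive-prob-gen}), provided $\tau_\calP,\rho_\calP>0$. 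Positivity is immediate from the hypotheses: $\tau_\calP\ge \Omega(c_2^{-d^2})>0$ by \eqref{eq:cor-p2-gen-2}, and $\rho_\calP=(1-\theta_1^0)^{d\cdot\myNp}>0$ because $\theta_1^0=\E[\theta_1]\le c_3<1$. I would then feed Property \PropHE into \Cref{thm-p2-bic}, which furnishes a BIC algorithm exploring each of $V_1\LDOTS V_\kappa$ at least $\myNp\ge1$ times and halting in $T_0=\kappa\cdot\myNp\cdot(1+d)\,/\,(\tau_\calP\cdot\rho_\calP)$ rounds. Since the sequence $V_1\LDOTS V_\kappa$ is, by construction of $\kappa(\myNp)$, the point at which the chosen arms cover every atom, this algorithm samples each atom at least once; hence $N_0=T_0$ is the quantity to bound.

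The remaining work is to bound each factor of $T_0$ using the non-degeneracy conditions. For the numerator: $\kappa\le d$, and the factor $\Cel{(\alpha_d+\beta_d)/\alpha_d}\cdot\max_{\ell}\Cel{\alpha_\ell}$ appearing in $\myNp$ is a single term of the maximum in the first non-degeneracy condition, so it is at most $c_1$, giving $\myNp\le c_1 d$; together with $(1+d)=O(d)$ this yields a prefactor $\kappa\cdot\myNp\cdot(1+d)=O(c_1 d^3)$. For the denominator I would bound $\tau_\calP$ and $\rho_\calP$ separately. Condition \eqref{eq:cor-p2-gen-2} gives $1/\tau_\calP=O(c_2^{d^2})$. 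For $\rho_\calP$, since $1-\theta_1^0\ge 1-c_3$ and the exponent is positive, $\rho_\calP\ge(1-c_3)^{d\cdot\myNp}$; then using $d\cdot\myNp\le c_1 d^2$ and $1-c_3\in(0,1)$ (so $x\mapsto x^p$ is decreasing in the exponent) gives $\rho_\calP\ge(1-c_3)^{c_1 d^2}$, i.e. $1/\rho_\calP\le\big((1-c_3)^{-c_1}\big)^{d^2}$.

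Collecting these, $1/(\tau_\calP\rho_\calP)\le O\big(c_2^{d^2}\cdot(1-c_3)^{-c_1 d^2}\big)=O\big(\Phi^{d^2}\big)$ with $\Phi=c_2\cdot(1-c_3)^{-c_1}$, and multiplying by the $O(c_1 d^3)$ prefactor yields $N_0=O\big(c_1 d^3\cdot\Phi^{d^2}\big)$, as claimed. The substantive mathematical content is all upstream, in \Cref{thm-P2-gen} (which I am assuming); the present argument is essentially bookkeeping. The only points demanding care are the two exponent manipulations: extracting $\myNp=O(c_1 d)$ from the product-of-ceilings bound, and the nested exponentiation in $\rho_\calP$ where I must track that the extra factor of $d$ in $\myNp$ (relative to the equal-size case of \Cref{thm-P2-beta}) is precisely what promotes the exponent from $d$ to $d^2$, consistent with $\tau_\calP$'s $c_2^{-d^2}$ scaling, so that both contributions land on the same $\Phi^{d^2}$.
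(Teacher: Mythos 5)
Your proposal is correct and follows exactly the route the paper intends: the corollary is obtained by combining \Cref{thm-P2-gen} with \Cref{thm-p2-bic} and then bounding $\kappa\le d$, $\myNp\le c_1 d$, $1/\tau_\calP$ via \eqref{eq:cor-p2-gen-2}, and $1/\rho_\calP\le(1-c_3)^{-c_1 d^2}$, which is precisely the bookkeeping that yields $N_0=O(c_1 d^3\cdot\Phi^{d^2})$. The paper leaves this derivation implicit (mirroring \Cref{cor-p2-beta}), and your account of it, including the verification that $\tau_\calP,\rho_\calP>0$ and that covering $V_1\LDOTS V_\kappa$ samples every atom, is complete.
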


In \Cref{sec:mild-assumption}, we provide some motivation for why \eqref{eq:cor-p2-gen-2} is a mild assumption. Our intuition is that ``typically" $\tau_\calP$ should be on the order of $e^{-O(d)}$, whereas \eqref{eq:cor-p2-gen-2} only requires it to be $\geq e^{-\Omega(d^2)}$.

\subsection{Reduction to incentivized reinforcement learning}
\label{subsec:hh}

Our second algorithm builds on the Hidden Hallucination approach from \citet{IncentivizedRL}, which targets incentivized exploration for episodic reinforcement learning. We use this approach by ``encoding" a problem instance of combinatorial semi-bandits as a tabular MDP, so that actions in the MDP correspond to atoms, and feasible trajectories correspond to feasible arms. Then we invoke a theorem in \citet{IncentivizedRL} and ``translate" this theorem back to combinatorial semi-bandits.

ore specifically, consider a tabular MDP with deterministic transitions and unique initial state. Each action in this MDP correspond to some atom $\ell$; then the action's reward is drawn from the corresponding reward distribution $\calD_\ell$. In general, only a subset of actions is feasible at a given state-stage pair of the MDP. Let $G$ be the transition graph in such MDP: it is a rooted directed graph such that the nodes of $G$ correspond to state-stage pairs in the MDP (the root node corresponding to the initial state and stage $0$). Each edge $(u,v)$ in $G$ corresponds to some MDP action feasible at $u$, \ie to some atom. While different edges in $G$ can correspond to the same atom, we require that any rooted directed path in $G$ cannot contain two edges that correspond to the same atom. Let $A_P$ be the subset of atoms that corresponds to a given rooted directed path $P$, and let
    $\calA_G = \cbr{ A_P: \text{rooted directed paths in $G$}}$
be the family of arms ``encoded" by $G$. A family of arms $\calA$ is called \emph{MDP-encodable} if $\calA = \calA_G$ for some transition graph $G$ as defined above with $O(d^2)$ nodes.

Our result applies to all MDP-encodable feasible sets. In particular, the set of all subsets of exactly $m$ atoms, for some fixed $m\leq d$, is MDP-encodable. To see this, consider an MDP with $m$ stages and $d$ states, where each state $\ell\in[d]$ corresponds to the largest atom already included in the arm, and actions feasible at a given stage $i$ and state $\ell$ correspond to all atoms larger than $\ell$.

Our result allows for arbitrary per-atom priors $\calP_\ell$, subject to a minor non-degeneracy condition, and reward distributions $\calD_\ell$ that are supported on the same countable set.

\begin{theorem}\label{thm:our-HH}
Consider a feasible arm set $\calA$ that is MDP-encodable, as defined above. Suppose the per-atom priors $\calP_\ell$ lie in some fixed, finite collection $\calC$ such that any $\calP_\ell\in \calC$ satisfies
    $\Pr[\theta_\ell\leq \eps]>0$ for all $\eps>0$
and
    $\E[\theta_\ell]>0$.
Further, suppose all reward distributions $\calD_\ell$ that are supported on the same countable set.
Fix parameter $\delta \in (0,1)$. There is a BIC algorithm such that with probability at least $1-\delta$ each atom is sampled at least once. This algorithm completes in $N_0$ rounds, where
    $N_0 = \Phi_{\calC}^{-d^3} \cdot O_{\calC}\rbr{\poly(d) \cdot \log(\delta^{-1})}$
for some constant $\Phi_{\calC} \in (0,1)$ determined by collection $\calC$.
\label{thm:hh-guarantee-arbitrary}
\end{theorem}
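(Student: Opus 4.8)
The plan is to prove the theorem by reduction to the incentivized reinforcement-learning guarantee of \citet{IncentivizedRL}, following the encoding sketched just before the statement. First I would fix, for the given MDP-encodable family $\calA$, a transition graph $G$ with $O(d^2)$ nodes such that $\calA = \calA_G$, and build the associated tabular MDP $M$: its state-stage pairs are the nodes of $G$ (the root being the initial state at stage $0$), its transitions are deterministic and follow the edges of $G$, and taking the action on an edge labeled by atom $\ell$ yields a reward drawn from $\calD_\ell$ with unknown mean $\theta_\ell$. Under this encoding each rooted directed path (an episode of $M$) is exactly an arm $A_P\in\calA$, one episode of $M$ is one semi-bandit round, and the total trajectory reward equals $\mu(A_P)=\sum_{\ell\in A_P}\theta_\ell$. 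Because no rooted path repeats an atom, the per-edge rewards observed along a single episode coincide with the per-atom rewards the semi-bandit reveals for the chosen arm.

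Next I would check that the hypotheses of the \citet{IncentivizedRL} theorem hold for $M$ and then invoke it. The per-atom conditions $\Pr[\theta_\ell\le\eps]>0$ for all $\eps>0$ and $\E[\theta_\ell]>0$, together with the $\calD_\ell$ sharing a common countable support and the priors lying in a finite collection $\calC$, are precisely the ingredients the hidden-hallucination argument requires: full support near zero so that an all-zero hallucinated history is feasible, positive mean so that every atom remains explorable, and a shared discrete support so that posteriors stay comparable across atoms. Invoking the theorem gives a BIC algorithm for $M$ that, with probability at least $1-\delta$, visits every edge of $G$ at least once, completing in a number of episodes that is exponential in the complexity of $M$; amplifying the success probability is where the $\log(\delta^{-1})$ factor appears.

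I would then translate the guarantee back to the semi-bandit. The BIC property transfers verbatim, since a recommended trajectory in $M$ is the same object as the recommended arm and expected trajectory reward equals expected arm reward, so the incentive inequality \eqref{def:BIC} for $M$ is literally that for the semi-bandit. Covering every edge of $G$ covers every atom (each atom labels at least one edge), hence each atom is sampled at least once, and the episode count is the round count $N_0$. Finally, plugging the MDP parameters --- $O(d^2)$ nodes, horizon at most $d$, and $O(d^3)$ state-action pairs (edges of $G$) --- into the \citet{IncentivizedRL} bound yields $N_0=\Phi_\calC^{-d^3}\cdot O_\calC(\poly(d)\log(\delta^{-1}))$; the $d^3$ in the exponent reflects the RL bound's exponential dependence on the number of state-action pairs that must be covered.

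The step I expect to be the main obstacle is reconciling the two reward models in the reduction. In $M$ a single atom may label several edges, so the MDP's per-edge reward parameters are perfectly correlated across edges that share an atom, even though the underlying atom priors are independent across atoms. I must ensure the \citet{IncentivizedRL} theorem applies to this correlated-parameter instance --- or re-parameterize $M$ on a per-atom basis so that its hypotheses hold --- and that both the BIC property and the hidden-hallucination coverage bound survive the correlation. Pinning the exponent down to exactly $d^3$ likewise demands a careful accounting of how the RL sample complexity scales with the number of nodes, the horizon, and the number of distinct atoms.
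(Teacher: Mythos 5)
Your proposal takes essentially the same route as the paper: encode the MDP-encodable family as a deterministic-transition tabular MDP via the graph $G$, invoke the Hidden Hallucination guarantee of \citet{IncentivizedRL}, and translate the coverage and BIC guarantees back to the semi-bandit, with $S,A,H = O(d)$ each (so $SAH=O(d^3)$) and the exponential factor coming from the punishment probability $\qpun$. The obstacle you flag --- rewards perfectly correlated across edges sharing an atom --- is a non-issue in the paper because the RL guarantee holds for an arbitrary joint prior on the MDP, with atom-independence used only to lower-bound $\qpun = \prod_{\ell}\Pr[\theta_\ell\le\eps_0] \ge \Phi_\calC^{d}$; the one step you omit is extending the cited theorem from the full triple set $[S]\times[A]\times[H]$ to a restricted feasible set, which the paper handles by adding \goodSet/\badSet terminal states with deterministic rewards $H+1$ and $0$.
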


\begin{remark} While the guarantee in \Cref{thm:hh-guarantee-arbitrary} holds with probability $1-\delta$, rather than almost surely, it suffices to bootstrap Thompson Sampling in \Cref{thm:semi-ts-bic}. To see this, let $\ALG$ be an algorithm that runs for
    $T_0 = N_0\cdot n_{\TS}+d\cdot n_{\TS}$
rounds (with $n_{\TS}$ from \Cref{thm:semi-ts-bic}), and proceeds as follows: first it repeats the algorithm from \Cref{thm:hh-guarantee-arbitrary} $n_{\TS}$ times, and in the remaining rounds it deterministically plays an arm with the largest prior mean reward (this algorithm is BIC). Define the ``success event" as one in which \ALG samples each atom $\geq n_{\TS}$ times in the first $N_0\cdot N_{\TS}$ rounds. Consider \emph{another} algorithm, $\ALG^*$, which runs for $T_0$ rounds, coincides with $\ALG$ on the first $N_0\cdot N_{\TS}$ rounds, and in the remaining rounds coincides with $\ALG$ on the success event, and otherwise plays some arms so as to sample each atom at least once (this algorithm is not necessarily BIC). Now, if Thompson Sampling is preceded by $\ALG^*$, then the analysis in \Cref{thm:semi-ts-bic} guarantees that each round $t$ of Thompson Sampling satisfies the BIC property \eqref{def:BIC}, and does so with a strictly positive prior-dependent constant on the right hand side of \eqref{def:BIC}. Therefore, the same holds for $\ALG$, since it coincides with $\ALG^*$ w.h.p., if the failure probability $\delta$ in \Cref{thm:hh-guarantee-arbitrary} is chosen small enough.
\end{remark}

\bibliographystyle{abbrvnat}
\bibliography{sample,bib-abbrv,bib-slivkins,bib-bandits,bib-AGT}
\newpage
\newpage
\appendix
\numberwithin{equation}{section}
\section{BIC analysis for Thompson Sampling (proofs for \Cref{sec:ts})}
\label{appendix:semi-ts-bic-appendix}

This appendix provides the proofs for \Cref{sec:ts}, the BIC analysis of Thompson Sampling. Specifically, we prove \Cref{thm:semi-ts-bic} (that Thompson Sampling is BIC when initialized with sufficiently many samples) and \Cref{cor:ts-sufficient-samples} (that the sufficient number of samples is polynomial in $d$).

\paragraph{Proof of \Cref{thm:semi-ts-bic}}

By definition, Thompson sampling is BIC at a particular round $t > T_0$ if and only if we have $\textstyle{\E[\mu(A) - \mu(A')| A\supt = A] \geq 0}$ for all $(i,j)$ such that $i \neq j$. This condition can be written as:
\begin{align*}
    \E[\mu(A) - \mu(A') | A\supt = A] &= \frac{\E \left[\E\supt[\mu(A) - \mu(A')]\Pr\supt[A\supt = A] \right]}{\Pr[A\supt = A]} \\
    &= \frac{\E \left[\E\supt[\mu(A) - \mu(A')]\Pr\supt[A^* = A] \right]}{\Pr[A^* = A]} \tag{by definition of Thompson Sampling}
\end{align*}
Observe that the denominator $\Pr[A^* = A]$ is a positive prior-dependent constant. Hence, we only need to bound the numerator to satisfy the BIC condition.

Fixing arms $A, A'$, we can rewrite the numerator as:
\begin{align*}
    \E \left[\E\supt[\mu(A) - \mu(A')] \Pr\supt[A^* = A] \right] &= \E \left[ \E\supt \left[ \E\supt[\mu(A) - \mu(A')] \cdot \mathbf{1}_{\{A^* = A\}} \right]
    \right] \\
    &= \E\left[ \E\supt[\mu(A) - \mu(A')] \cdot \mathbf{1}_{\{A^* = A\}} \right]
\end{align*}
For Thompson sampling to be BIC, it suffices to show that $\E\left[ \E\supt[\mu(A) - \mu(A')] \mathbf{1}_{\{A^* = A\}} \right] \geq 0$.

We first prove our observation that the functions $(\mu(A) - \mu(A'))_{+}$ and $\mathbf{1}_{\{ A^* = A\}}$ are co-monotone in each coordinate of $\theta$, which means they are both increasing in some coordinates while both decreasing in the other coordinates. Specifically for any $\ell$-th coordinate of $\theta$, they are both increasing in  $\theta_\ell$ (given all other coordinates in $\theta$ stay the same) if $A_\ell$ (the $\ell$-th coordinate of $A$) equals to $1$. Otherwise, if $A_\ell=0$, they are both decreasing in $\theta_\ell$.

Given any $\theta$ and $\theta'$ having the same coordinates $\theta_x=\theta_x'$ ($x \in [d]$) except for the $\ell$-th coordinate, $\theta_\ell>\theta_\ell'$, and for any arm $A' \neq A$,
\begin{align}
    &\langle \theta, A-A'\rangle- \langle \theta', A-A'\rangle \\
    &=\sum_{x=1}^d \theta_x(A_x - A'_x)-\sum_{x=1}^d \theta_x'(A_x - A'_x)\\
    &=(\theta_\ell-\theta_\ell')(A_\ell - A'_\ell)\tag{other coordinates are the same except $\ell$ }\\
    &\left\{
        \begin{array}{lr}
        \ge 0, \quad \text{if }A_\ell=1\\
        \le 0, \quad \text{if }A_\ell=0
        \end{array}
    \right. \tag{$\theta_\ell>\theta_\ell'$ and $A_\ell, A'_\ell\in\{0, 1\}$}
\end{align}
Note that $\mathbf{1}_{\{A^*=A\}}=\mathbf{1}_{\{\mu(A) - \mu(A')\ge 0, \forall A'\neq A\}}$. So if $A_\ell=1$, then $\mu(A) - \mu(A')$ are increasing in $\theta_\ell$ for all $A'\neq A$, especially for $A_k=A'$. Hence, $\mathbf{1}_{\{A^*=A\}}$ and $\mu(A) - \mu(A')$ are both increasing in $\theta_\ell$. Otherwise, if $A_\ell=0$, they are both decreasing in $\theta_\ell$.

Hence, we can apply \Cref{remark:mixed-monotonicity-harris} to lower bound the expression above as follows:
\begin{align*}
    \E[\mu(A) - \mu(A')\cdot \mathbf{1}_{\{ A^* = A\}}] &= \E[(\mu(A) - \mu(A'))_{+} \cdot \mathbf{1}_{\{A^* = A\}}] \\
    &\geq \min_{A, A' \in \calA} \E[(\mu(A) - \mu(A'))_{+}] \Pr[A^* = A] \\
    &= \epsilon_\TS \delta_A
\end{align*}
where $\delta_A = \Pr[A^* = A] \geq \delta_\TS$.

To finish the proof, we need the following inequality to hold:
\begin{align}
    \E[\abs{\E^{T_0} [\langle \theta, A - A' \rangle] \cdot \mathbf{1}_{\{A^* = A\}} - \langle \theta, A - A' \rangle \cdot \mathbf{1}_{\{A^* = A\}}  }] \leq \epsilon_\TS \delta_A
    \label{eq:TS-ineq}
\end{align}
where $\delta_A = \Pr[A^* = A]$.
Regrouping and using triangle inequality on the LHS of \Cref{eq:TS-ineq}, we have:
\begin{align}
    &\quad \E \left[\abs{\E^{T_0} [\langle \theta, A - A' \rangle] \cdot \mathbf{1}_{\{A^* = A\}} - \langle \theta, A - A' \rangle \cdot \mathbf{1}_{\{A^* = A \}}  } \right] \nonumber \\
    &\leq \E \left[\abs{\E^{T_0} [\langle \theta, A \rangle ] - \langle \theta, A \rangle} \cdot \mathbf{1}_{\{ A^* = A \}} \right] + \E \left[\abs{\E^{T_0} [\langle \theta, A' \rangle ] - \langle \theta, A' \rangle} \cdot \mathbf{1}_{\{ A^* = A \}} \right]
    \label{eq:ts-bic-concentration}
\end{align}
The final step is to bound each individual summand in the inequality above. 
By the Bayesian Chernoff Bound (\Cref{lem:bayesian-chernoff}), we have $\nrm{\E^{T_0}[\theta] - \theta}$ is a $(n_\TS^{-1/2}\sqrt{d})$ times $O(1)$-sub-Gaussian random variable. Then, by Cauchy-Schwarz inequality, we have
\begin{align*}
    \abs{\E^{T_0}[\langle \theta, A\rangle] - \langle \theta, A \rangle} &\leq \nrm{\E^{T_0}[\theta] - \theta} \cdot \nrm{A}\\
    &\leq \sqrt{d} \nrm{\E^{T_0}[\theta] - \theta}
\end{align*}

Hence, $\abs{\E^{T_0}[\langle \theta, A\rangle] - \langle \theta, A \rangle}$ and $\abs{\E^{T_0}[\langle \theta, A'\rangle] - \langle \theta, A' \rangle}$ have magnitude at most as large as a $(n^{-1/2}_\TS \cdot d)$ times $O(1)$-sub-Gaussian random variable. Then, we can apply \Cref{lem:sub-Gaussian-tail-bound} to both terms in the inequality \eqref{eq:ts-bic-concentration} above and upper bound it by at most $O \left(\delta_A \cdot n^{-1/2}_\TS \cdot d \sqrt{\log \left(\delta_A^{-1} \right)} \right)$. Then, using our choice of $n_\TS$ and $\delta_\TS \leq \delta_A$, we arrive at the conclusion.

\paragraph{Proof of \Cref{cor:ts-sufficient-samples}}
Recall that by \Cref{thm:semi-ts-bic}, we have $n_\TS = C_\TS \cdot d^2 \cdot \epsilon^{-2}_\TS \cdot \log(\delta^{-1}_\TS)$.
Let $\epsilon_{\calC}$ be the version of $\epsilon_\TS$ where the $\min$ is taken over all ordered pairs of priors in $\calC$. Then we have $\epsilon_\TS \geq \epsilon_{\calC}$. Since $\calC$ is finite and satisfy the pairwise non-dominance assumption, $\epsilon_{\calC}$ is strictly positive.

By definition, $\delta_\TS = \min_{A \in \calA} \Pr[A^* = A]$. Fix an arm $A$. Then, we can decompose the probability of arm $A$ being the best arm as:
\begin{align}
    \Pr[A^* = A] &= \Pr[\langle \theta, A - A' \rangle \geq 0, \forall A' \neq A ] \nonumber\\
    &= \Pr \left[\sum_{\ell \in A} \theta_\ell - \sum_{x\in A'} \theta_x \geq 0, \forall A' \neq A \right] \label{eq:pairwise-theta-comparison}
\end{align}
We observe that the event when $A$ is the best arm is more likely than the event when each atom in $A$ is larger than $\tau$, and all other atoms not in $A$ is smaller than $\nicefrac{\tau}{d}$. Hence, we can lower bound the probability above as:
\begin{align*}
   &\quad \Pr \left[\sum_{\ell \in A} \theta_\ell - \sum_{x\in A'} \theta_x \geq 0, \forall A' \neq A  \right]\\
   &\geq \Pr \left[ \forall \ell \in A, \theta_\ell \geq \tau \quad\text{and} \quad \forall x \notin A, \theta_x \leq \nicefrac{\tau}{d} \right] \\
   &=  \Pr \left[\forall \ell \in A, \theta_\ell \geq \tau \right] \cdot \Pr \left[\forall x \notin A, \theta_x \leq \nicefrac{\tau}{d} \right] \tag{the prior is independent across atoms}\\
   &=  \E \left[ \prod_{\ell \in A} \mathbf{1}_{\{\theta_\ell \geq \tau\}} \right] \cdot \E \left[ \prod_{x \notin A} \mathbf{1}_{\{ \theta_x \leq \nicefrac{\tau}{d} \}} \right]
\end{align*}
Observe that the values $\{ \theta_\ell \}_{\ell \in [d]}$ are independent, and each function $\mathbf{1}_{\{\theta_\ell \geq \tau\}}$ (and $\mathbf{1}_{\{\theta_\ell \leq \nicefrac{\tau}{d}\}}$) are co-monotone in each coordinate of $\theta$. Then, repeated application of mixed-monotone Harris inequality (see \Cref{remark:mixed-monotonicity-harris}) implies that
\begin{align*}
    \Pr[A^* = A] &\geq \prod_{\ell \in A}\E[\mathbf{1}_{\{ \theta_\ell \geq \tau \}}] \cdot \prod_{x \notin A}\E[\mathbf{1}_{\{ \theta_x \leq \nicefrac{\tau}{d} \}}] \tag{mixed-monotonicity Harris}\\
   &= \prod_{\ell \in A}\Pr[\theta_\ell \geq \tau] \cdot \prod_{x \notin A}\Pr[\theta_x \leq \nicefrac{\tau}{d}]\\
   &\geq \prod_{\ell=1}^d \Pr[\theta_\ell \geq \tau] \Pr[\theta_\ell \leq \nicefrac{\tau}{d}]
\end{align*}
By the full support assumption \Cref{eq:TS-full-support}, we define a prior-dependent constant $\rho_{\tau} = \min_{\ell \in [d]} \Pr[\theta_\ell \geq \tau] > 0$. Then, by definition of $\rho_{\tau}$ and the non-degeneracy assumption \Cref{eq:TS-nondegeneracy}, we have:
\begin{align*}
    \prod_{\ell=1}^d \Pr[\theta_\ell \geq \tau] \Pr[\theta_\ell \leq \nicefrac{\tau}{d}] &\geq \prod_{\ell=1}^d \rho_{\tau}^d \cdot \poly(\nicefrac{d}{\tau}) \cdot \exp(- (\nicefrac{\tau}{d})^{-\alpha})\\
    &\geq \rho_{\tau}^d \cdot \poly(\nicefrac{d^d}{(\tau)^d}) \cdot \exp(-d (\nicefrac{\tau}{d})^{-\alpha})
\end{align*}
Substituting this expressions and $\epsilon_\TS$ into $n_\TS$, we have $n_\TS = O_{\calC}(d^{3 + \alpha} \log d)$.

\section{Initial exploration: reduction to $K$-armed bandits (proofs for \Cref{subsec:cyclic})}
\subsection{\Cref{thm-p2-bic}: the approach from \citet{ICexploration-ec15}}

Recall that we build on an approach from \citet{ICexploration-ec15}, encapsulated in \Cref{thm-p2-bic}.  Let us clarify how this theorem follows from the material in \citet{ICexploration-ec15}.


The algorithm from \citet{ICexploration-ec15} is modified in two ways: it explores the arms in the order given by the sequence $V_1 \LDOTS V_\kappa$, and the observed outcome from playing a given arm now includes the rewards for all atoms in this arm. Let us spell out the resulting algorithm, for  completeness.

\begin{algorithm}[H]

\SetKwInput{KwParams}{Parameters}
\KwParams{$L,N\in\N$}

For the first $N$ rounds, recommend arm $V_1$.\\
Let $s_1 = \rbr{ r\supt_\ell:\; \ell\in V_1,\,t\in[N] }$ 
be the tuple of all observed per-atom rewards from arm $V_1$; \\
\For{each arm $V_i$ in increasing order of $i$}
{ Let $A^* = \argmax_{A \in \calA} \E\sbr{ \mu(A)\mid s_1 \LDOTS s_{i-1}}$, breaking ties favoring smaller index;\\

From the set $P$ of the next $L \cdot N$ rounds, pick a set $Q$ of $N$ rounds uniformly at random;\\

Every agent $p \in P - Q$ is recommended arm $A^*$; \\

Every agent $p \in Q$ is recommended arm $V_i$;\\

Let $s_i^{N}$ be the tuple of all per-atom rewards from arm $V_i$ observed in rounds $t\in Q$;
}
\caption{Hidden Exploration (modification of Algorithm 3 in \citet{ICexploration-ec15})}
\label{alg:cyclic-exploration}
\end{algorithm}

The analysis in Section 5.2 of \citet{ICexploration-ec15} carries over seamlessly to combinatorial semi-bandits, and yields the following guarantee:

\begin{lemma}[\citet{ICexploration-ec15}]\label{lem:bic-ec15}
Assume Property \PropHE holds with constants $\myNp, \tau_\calP, \rho_\calP$ and $\kappa=\kappa(\myNp)<\infty$. Then \Cref{alg:cyclic-exploration} with parameters $N\ge \myNp$ and $L$ satisfying \eqref{eq:L-bound} is BIC:
\begin{equation}\label{eq:L-bound}
    L \geq 1 + \frac{\mu^0_{\max} - \mu^0_{\min}}{\tau_{\calP} \cdot \rho_{\calP}},
\end{equation}
where 
    $\mu_{\max}^0=\max_{A\in\calA} \E[\mu(A)]$
and
    $\mu_{\min}^0=\min_{A\in\calA} \E[\mu(A)]$.
\end{lemma}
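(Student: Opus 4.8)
The plan is to read \Cref{lem:bic-ec15} as a direct transcription of the incentive analysis of \citet{ICexploration-ec15} in which all of the combinatorial content has been absorbed into Property~\PropHE. Since Property~\PropHE is phrased entirely through the arm means $\mu(\cdot)$, I can treat $V_1\LDOTS V_\kappa$ as abstract ``meta-arms'' and verify the per-round condition~\eqref{def:BIC} for \Cref{alg:cyclic-exploration} exactly as for $K$-armed bandits, with the bound $\Pr[X_i^N\ge\tau_\calP]\ge\rho_\calP$ as the only structural input. Concretely, I fix a round $t$, locate the phase it belongs to, and show that for every arm $A$ with $\Pr[A\supt=A]>0$ the unnormalized gap $\E[(\mu(A)-\mu(A'))\mathbf{1}_{\{A\supt=A\}}]$ is nonnegative for all $A'$; dividing by $\Pr[A\supt=A]>0$ then yields~\eqref{def:BIC}.

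First I would clear the easy cases. On the first $N$ rounds the recommendation is the deterministic arm $V_1$, which reveals nothing, and since $V_1$ is a prior-best arm we have $\E[\mu(V_1)]=\mu^0_{\max}\ge\E[\mu(A')]$ for every $A'$, so~\eqref{def:BIC} holds. In a phase $i$ the only two recommendable arms are the explore arm $V_i$ and the exploit arm $A^\star=\argmax_{A\in\calA}\E[\mu(A)\mid s_1\LDOTS s_{i-1}]$, where the conditioning data $s_1\LDOTS s_{i-1}$ is exactly the idealized dataset $H_{i-1}^N$ of Property~\PropHE. If the realized recommendation is some $A\neq V_i$, it can only come from an exploit round, so on that event $A=A^\star$ is posterior-optimal and $\E[\mu(A)-\mu(A')\mid s_1\LDOTS s_{i-1}]\ge0$; hence the gap for such $A$ is nonnegative by the tower rule.

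The crux is the recommendation $A\supt=V_i$, which pools the explore rounds $\{t\in Q\}$ with the exploit rounds on which $A^\star=V_i$. Using that $Q$ is drawn from the algorithm's internal randomness and is therefore independent of $\theta$ and of $s_1\LDOTS s_{i-1}$, I would split the numerator, via $\Pr[t\in Q]=N/(LN)=1/L$, as
\[
\E[(\mu(V_i)-\mu(A'))\mathbf{1}_{\{A\supt=V_i\}}]
= \tfrac1L\,\E[\mu(V_i)-\mu(A')]
+ \bigl(1-\tfrac1L\bigr)\,\E\bigl[\E[\mu(V_i)-\mu(A')\mid s_1\LDOTS s_{i-1}]\,\mathbf{1}_{\{A^\star=V_i\}}\bigr].
\]
The exploit term is lower bounded via Property~\PropHE: its integrand is nonnegative on $\{A^\star=V_i\}$, and a strictly positive margin $X_i^N\ge\tau_\calP$ over \emph{every} alternative forces $V_i$ to be the posterior-$\argmax$, so $\{X_i^N\ge\tau_\calP\}\subseteq\{A^\star=V_i\}$ and the term is at least $(1-\tfrac1L)\tau_\calP\Pr[X_i^N\ge\tau_\calP]\ge(1-\tfrac1L)\tau_\calP\rho_\calP$. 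The explore term is at least $-\tfrac1L(\mu^0_{\max}-\mu^0_{\min})$, since $\E[\mu(V_i)]\ge\mu^0_{\min}$ and $\E[\mu(A')]\le\mu^0_{\max}$. Requiring the sum to be nonnegative and clearing denominators gives $(L-1)\tau_\calP\rho_\calP\ge\mu^0_{\max}-\mu^0_{\min}$, which is precisely~\eqref{eq:L-bound}.

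I expect the only real obstacle to be the conditioning bookkeeping rather than any single estimate: I must verify (i) that $Q$ is independent of the posterior-relevant data so the two terms factor as written, (ii) the containment $\{X_i^N\ge\tau_\calP\}\subseteq\{A^\star=V_i\}$, including the tie-breaking-robust ``strict margin'' step, and (iii) that the data $s_1\LDOTS s_{i-1}$ actually collected by \Cref{alg:cyclic-exploration} (from the $N$ explore rounds of each earlier phase) is distributed as $H_{i-1}^N$, so that Property~\PropHE applies for $N\ge\myNp$. Once these are in place the inequality is immediate, which confirms that the genuine combinatorial work lives in establishing Property~\PropHE (\Cref{thm-P2-beta,thm-P2-gen}), not in this lemma.
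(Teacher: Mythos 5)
Your proposal is correct and is essentially the argument the paper relies on: the paper does not write out a proof of this lemma at all, but simply asserts that "the analysis in Section 5.2 of \citet{ICexploration-ec15} carries over seamlessly," and what you reconstruct --- the disjoint split of $\{A\supt=V_i\}$ into the probability-$1/L$ explore event and the exploit event $\{A^\star=V_i\}$, the lower bounds $-\frac1L(\mu^0_{\max}-\mu^0_{\min})$ and $(1-\frac1L)\tau_\calP\rho_\calP$ via $\{X_i^N\ge\tau_\calP\}\subseteq\{A^\star=V_i\}$, and the resulting threshold on $L$ --- is exactly that analysis, with your three bookkeeping checks (independence of $Q$, the strict-margin containment, and $s_1\LDOTS s_{i-1}$ matching $H_{i-1}^N$) being precisely the points that make the transfer to semi-bandits "seamless."
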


\begin{proof}[Proof of \Cref{thm-p2-bic}]
It remains to interpret and simplify the quantities in \Cref{lem:bic-ec15}. 
According to \Cref{lem:bic-ec15}, \Cref{alg:cyclic-exploration} is BIC with parameters $N\ge \myNp$ and $L$ satisfying \Cref{eq:L-bound}. It suffices to take $N=\myNp$. Since $\theta^0_\ell\in[0, 1]$ for any $\ell\in[d]$, we have $0\le \mu^0_{\min}\le \mu^0_{\max}\le d$ and $0\le \mu^0_{\max}-\mu^0_{\min}\le d$. Additionally, $\tau_\calP, \rho_\calP\in (0,1)$. So \[1+\frac{\mu^0_{\max}-\mu^0_{\min}}{\tau_{\calP} \cdot \rho_{\calP}}\le 1 + \frac{d}{\tau_{\calP} \cdot \rho_{\calP}}\le \frac{1+d}{\tau_{\calP} \cdot \rho_{\calP}}.\]
And thus it suffices to take $L = \frac{1+d}{\tau_{\calP} \cdot \rho_{\calP}}$. Then we have the total number of rounds $T_0=\kappa\cdot N\cdot L = \kappa\cdot \myNp\cdot (1+d) \,/\, (\tau_{\calP} \cdot \rho_{\calP})$.
\end{proof}
\subsection{Restricted family of arms: Proof of \Cref{thm-P2-beta}}\label{subsec:initial-fixedsize}

Firstly, according to Assumption \eqref{eq:cyclic-assn-1} and the reward support $\Theta\subset[0,1]$, we observe that:
\begin{align}
&\text{the prior/posterior-best arm contains the $m$ prior/posterior-best atoms;}\label{obs-best}\\
&\text{the second prior/posterior-best arm contains the $m-1$ prior/posterior-best atoms.}\label{obs-second}
\end{align}

Then, according to our choice of $\myNp$ (\ref{eq:number-sequence-beta-frac}), we will prove that $\kappa=\kappa(\myNp)$ is finite (i.e. our arm sequence will contain all atoms at least once). Actually, we will prove $\kappa=\Cel{d/m}$ by proving the following \Cref{cla:finite-kappa}.
\begin{claim}\label{cla:finite-kappa}
Assume Beta-Bernoulli priors (\ref{eq:cyclic-assn-2}-\ref{eq:cyclic-assn-3}), all arms have a fixed size (\ref{eq:cyclic-assn-1}), and $\myNp$ satisfies (\ref{eq:number-sequence-beta-frac}). Then the arm sequence $V_1, V_2, \cdots $, where $V_i=V_i^{\myNp}$, have the following properties:
\begin{align}
    V_i= \cbr{ (i-1)m+\ell : \ell\in [m] }, \quad& i\in[\Cel{d/m}-1];\label{eq:inductive-1}\\
    V_i\supset \{(i-1)m+\ell: \ell\in [m], (i-1)m+\ell\le d \}, \quad& i=\Cel{d/m}.\label{eq:inductive-2}
\end{align}
And thus $\kappa(\myNp)=\Cel{d/m}$.
\end{claim}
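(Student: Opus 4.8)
The plan is to reduce the entire claim to a single scalar inequality about posterior means and then run a short induction on $i$; in contrast to the probabilistic part of \Cref{thm-P2-beta} (the bound $\rho_\calP$, which needs Harris), the claim is purely about the deterministic greedy construction of the sequence $V_i=V_i^{\myNp}$. Recall $\nu_\ell(n)=\alpha_\ell/(\alpha_\ell+\beta_\ell+n)$ and that atoms are indexed so that the prior means $\theta_\ell^0=\nu_\ell(0)$ are non-increasing, so $\theta_d^0$ is the smallest prior mean. The quantity I want to control is the value of an atom after it has been (hypothetically) sampled $\myNp$ times with reward $0$, namely $\nu_\ell(\myNp)$, compared against the prior value $\nu_{\ell'}(0)=\theta_{\ell'}^0$ of any not-yet-covered atom. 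Concretely, I claim the key inequality
\[ \nu_\ell(\myNp)<\theta_d^0\qquad\text{for every atom }\ell\in[d], \]
which says that every ``exhausted'' atom sits strictly below the weakest fresh atom.

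To prove the key inequality I would cross-multiply: $\nu_\ell(\myNp)<\nu_d(0)$ is equivalent to $\alpha_\ell\beta_d<\alpha_d\beta_\ell+\alpha_d\,\myNp$. From the choice \eqref{eq:number-sequence-beta-frac}, $\myNp=\Cel{\beta_d/\alpha_d}\cdot\max_{\ell'}\Cel{\alpha_{\ell'}}\ge(\beta_d/\alpha_d)\,\alpha_\ell$, hence $\alpha_d\,\myNp\ge\alpha_\ell\beta_d$; since the Beta parameters satisfy $\alpha_d,\beta_\ell>0$, adding the strictly positive term $\alpha_d\beta_\ell$ yields the strict inequality. (Strictness is consistent with, and also follows from, the hypothesis $\tau_\calP>0$, which moreover guarantees that the values $\{\nu_\ell(n):\ell\in[d],\,n\in\{0,\myNp\}\}$ are pairwise distinct across atoms, so that ``the $m$ best atoms'' is always unambiguous.)

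With the key inequality in hand I would induct on $i$, using observations \eqref{obs-best}--\eqref{obs-second}: since every arm has size $m$ and rewards are additive, the posterior-best arm is exactly the set of the $m$ atoms with the largest posterior means $\nu_\ell(Z_i^{\myNp}(\ell))$. For the base case, $V_1$ is the prior-best arm, i.e.\ the $m$ atoms of largest prior mean, which by the ordering are $\{1,\dots,m\}$. For the inductive step with $i\le\Cel{d/m}-1$, the hypothesis gives that $V_1,\dots,V_{i-1}$ cover exactly atoms $1,\dots,(i-1)m$, so these have posterior value $\nu_\ell(\myNp)$ while the remaining atoms keep their prior value; by the key inequality every covered atom lies strictly below $\theta_d^0$, hence strictly below every uncovered atom, so the $m$ largest posterior values are precisely the $m$ largest fresh prior means, namely $\{(i-1)m+1,\dots,im\}$, giving \eqref{eq:inductive-1}. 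A one-line counting check ($d-(i-1)m\ge m$ for $i\le\Cel{d/m}-1$) confirms the block is well-defined.

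Finally, for $i=\Cel{d/m}$ only $r=d-(\Cel{d/m}-1)m\le m$ atoms remain uncovered; they still dominate every covered atom by the key inequality, so all $r$ of them appear among the top $m$ and hence in $V_{\Cel{d/m}}$, which is exactly \eqref{eq:inductive-2}. Since $V_{\Cel{d/m}}$ then covers all atoms, the stopping rule fires and $\kappa(\myNp)=\Cel{d/m}$. I expect the only delicate points to be (i) pinning down strictness in the key inequality together with the no-ties argument supplied by $\tau_\calP>0$, and (ii) the bookkeeping for the final, possibly partial, arm and the check that the construction never stalls before step $\Cel{d/m}$; everything else is routine.
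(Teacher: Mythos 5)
Your proposal is correct and follows essentially the same route as the paper's proof: both hinge on the separation inequality $\nu_\ell(\myNp)<\theta_d^0\le\theta_{\ell'}^0$ (which the paper asserts directly from \eqref{eq:number-sequence-beta-frac} and you verify by cross-multiplication), followed by the same induction using the fact that under \eqref{eq:cyclic-assn-1} the posterior-best arm is the set of the $m$ atoms with largest posterior means. Your explicit appeal to $\tau_\calP>0$ to rule out ties is a small point the paper leaves implicit, but it does not change the argument.
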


\begin{proof}
We will prove by induction on phase $i$.
For phase $i=1$, $V_1$ is the prior-best arm. According to the observation \eqref{obs-best}, $V_1$ contains the largest $m$ prior-best atoms, which is $[m]$.

Suppose the induction hypothesis is true for all phases up to some phase $i\in [\Cel{d/m}-1]$. Denote $B_i$ as a subset of atoms having been contained at least once in the first $i$ arms and $\Bar{B_i}$ as the complement subset of atoms. Then
\[B_i=\bigcup_{j\in [i]} V_j=[im] \text{ and } \Bar{B_i}=[d]-[im].\]
Recall the definition of $Z_\ell^\myNp$ and $\nu_\ell(n)$, we have for each atom $\ell\in B_i$ and $\ell'\in \Bar{B_i}$:
\[Z_\ell^\myNp=\myNp \text{ and } \nu_\ell(Z_\ell^\myNp)=\nu_\ell(\myNp)=\alpha_\ell/(\alpha_\ell+\beta_\ell+\myNp);\] 
\[Z_{\ell'}^\myNp=0 \text{ and } \nu_{\ell'}(Z_{\ell'}^\myNp)=\nu_{\ell'}(0)=\alpha_{\ell'}/(\alpha_{\ell'}+\beta_{\ell'})=\theta_{\ell'}^0
.\]
Since $\theta_1^0\ge \LDOTS \ge \theta_d^0$, we have:
\begin{align}\label{eq:nu-1}
    \nu_{\ell'}(Z_{\ell'}^\myNp) \text{ decreases in } \ell'\in \Bar{B_i}.
\end{align}
By definition of $\myNp$ and $\theta_{\ell'}^0\ge \theta_d^0$, we have:
\[\alpha_\ell/(\alpha_\ell+\beta_\ell+\myNp)<\alpha_{d}/(\alpha_{d}+\beta_{d})\le \alpha_{\ell'}/(\alpha_{\ell'}+\beta_{\ell'})=\theta_{\ell'}^0.\]
Thus: 
\begin{align}\label{eq:nu-2}
    \nu_\ell(Z_\ell^\myNp)<\nu_{\ell'}(Z_{\ell'}^\myNp), \forall \ell\in B_i, \ell'\in \Bar{B_i}.
\end{align}
Combining \eqref{eq:nu-1}-\eqref{eq:nu-2} and according to the observation \eqref{obs-best}, we have $V_{i+1}$ for phase $i+1$. 
If $i\in[\Cel{d/m}-2]$, we have $|\Bar{B_i}|=d-im\ge m+1$. Thus $im+1<\cdots<(i+1)m\le d-1$ and $V_{i+1}=\{im+\ell: \ell \in [m]\}$. Otherwise for $i=\Cel{d/m}-1$, we have $1\le |\Bar{B_i}|\le m$. Thus $V_{i+1}\supset \Bar{B_i}=\{im+\ell: \ell \in [m]\text{ and } im+\ell\le d\}$. Thus, the induction hypothesis is true for phase $i+1$ and we complete the induction proof. And since $V_1\LDOTS V_{\Cel{d/m}}$ contain all atoms, we have $\kappa(\myNp)=\Cel{d/m}$.  
\end{proof}

Secondly, we define an event and give a lower bound of the probability of this event. Given any $\myNp, N\in \N$ ($\myNp\le N$) and $H_i^N, \forall i\in[\kappa]$, define an event $\myEv_i$ for each $i\in[\kappa]$ saying that the first $\myNp$ reward samples of each atom in $\bigcup_{j\in[i]} V_j$ are $0$. Formally,
\begin{equation}\label{eq:myEv}
    \myEv_i=\left\{r_\ell\supt=0, \forall \ell\in \bigcup_{j\in[i]} V_j, t\in[\myNp]\right\}, \forall i\in[\kappa].
\end{equation}
where we abuse the notation of $r_\ell\supt$ as the $t$-th round that atom $\ell$ is being contained. Since $H_0^N$ is an empty data set, we define $\myEv_0$ is a full event, which gives no information wherever it applies. 

Then, according to our choice of $\rho_\calP$(\ref{eq:positive-prob}), we will lower bound the probability of the event defined above in the following claim.
\begin{claim}\label{cla:positive-prob}
Assume Beta-Bernoulli priors (\ref{eq:cyclic-assn-2}-\ref{eq:cyclic-assn-3}) and $\rho_\calP$ satisfies (\ref{eq:positive-prob}). Then for any given $\myNp\le N$, with the definition of $\myEv_i, \forall i\in[\kappa]$ \eqref{eq:myEv}, we have:
\begin{align}
    \Pr\sbr{\myEv_i}\ge \rho_\calP, \forall i\in [\kappa].
\end{align}
\end{claim}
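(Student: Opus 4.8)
The plan is to condition on the mean-reward vector $\theta$, exploit the conditional independence of the Bernoulli rewards given $\theta$, and then combine the product structure of the prior with the Harris inequality applied separately within each atom. Write $U_i=\bigcup_{j\in[i]}V_j$ for the set of atoms covered by the first $i$ arms, and recall $|U_i|\le d$. Conditionally on $\theta$, the reward of atom $\ell$ in each round is an independent $\mathrm{Bernoulli}(\theta_\ell)$ draw, so a single sample of atom $\ell$ is $0$ with probability $1-\theta_\ell$, and its first $\myNp$ samples are all $0$ with conditional probability $(1-\theta_\ell)^{\myNp}$. Since, given $\theta$, rewards of distinct atoms are independent, I would first establish
\[
\Pr[\myEv_i\mid\theta]=\prod_{\ell\in U_i}(1-\theta_\ell)^{\myNp}.
\]

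Next I would take the expectation over $\theta$. Because the prior $\calP$ is a product across atoms, the coordinates $\theta_\ell$ are independent and the functions $(1-\theta_\ell)^{\myNp}$ depend on disjoint coordinates, so the expectation factorizes:
\[
\Pr[\myEv_i]=\E\Big[\prod_{\ell\in U_i}(1-\theta_\ell)^{\myNp}\Big]=\prod_{\ell\in U_i}\E\big[(1-\theta_\ell)^{\myNp}\big].
\]
The genuinely non-trivial step is to lower bound each single-atom factor $\E[(1-\theta_\ell)^{\myNp}]$, and this is where the Harris inequality enters. For a fixed atom $\ell$, the map $\theta_\ell\mapsto 1-\theta_\ell$ is decreasing, so the $\myNp$ factors $(1-\theta_\ell)$ are co-monotone functions of the single random variable $\theta_\ell$. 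Applying Harris repeatedly, peeling off one factor at a time, I would obtain
\[
\E\big[(1-\theta_\ell)^{\myNp}\big]\ge\big(\E[1-\theta_\ell]\big)^{\myNp}=(1-\theta_\ell^0)^{\myNp},
\]
where $\theta_\ell^0=\E[\theta_\ell]$. (This is equivalent to Jensen's inequality for the convex map $x\mapsto(1-x)^{\myNp}$, but framing it through Harris keeps the argument uniform with the rest of the section.)

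Finally I would combine the factors. Since the atoms are ordered by decreasing prior mean, $\theta_\ell^0\le\theta_1^0$ for every $\ell$, hence $1-\theta_\ell^0\ge 1-\theta_1^0$; as $1-\theta_1^0\in(0,1)$ and $|U_i|\le d$, raising to the larger exponent only decreases the value, so
\[
\Pr[\myEv_i]\ge\prod_{\ell\in U_i}(1-\theta_\ell^0)^{\myNp}\ge(1-\theta_1^0)^{|U_i|\cdot\myNp}\ge(1-\theta_1^0)^{d\cdot\myNp}=\rho_\calP .
\]
This holds for every $i\in[\kappa]$, giving the claim. I expect the only delicate point to be the single-atom bound $\E[(1-\theta_\ell)^{\myNp}]\ge(1-\theta_\ell^0)^{\myNp}$ via Harris; the conditioning, the factorization over atoms, and the final monotonicity in the exponent are routine bookkeeping with the conditional-independence and product-prior structure.
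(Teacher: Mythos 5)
Your proposal is correct and follows essentially the same route as the paper's proof: factor over atoms using the product prior and conditional independence of rewards given $\theta$, then lower-bound the single-atom term $\E\bigl[(1-\theta_\ell)^{\myNp}\bigr]\ge(1-\theta_\ell^0)^{\myNp}$ via repeated Harris inequality, and finish with $\theta_1^0\ge\theta_\ell^0$ and $|U_i|\le d$. The only cosmetic difference is the order of the two factorizations (the paper splits over atoms first and then conditions on $\theta_\ell$ within each atom), which changes nothing.
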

\begin{proof}
Firstly, by the prior and reward independence among each atoms:
\begin{align*}
\Pr\sbr{\myEv_i}
    &=\Pr\sbr{r_\ell\supt=0, \forall \ell\in \bigcup_{j\in[i]} V_j, t\in[\myNp]}\\
    &=\prod_{\ell\in \bigcup_{j\in[i]} V_j} \Pr\sbr{r_\ell\supt=0, \forall t\in[\myNp]}
\end{align*}
Secondly, for each given atom $\ell$, by the independence among realized rewards conditioned on the mean reward drawn from the prior and iteratively using Harris Inequality:
\begin{align*}
\Pr\sbr{r_\ell\supt=0, \forall t\in[\myNp]}
    &=\E_{\theta_\ell}\sbr{\Pr_{r_\ell\supt}\sbr{r_\ell\supt=0, \forall t\in [\myNp]}\mid \theta_\ell}\\
    &=\E_{\theta_\ell}\sbr{\prod_{t\in [\myNp]}\Pr_{r_\ell\supt}\sbr{r_\ell\supt=0\mid \theta_\ell}}\tag{conditional independence}\\
    &=\E_{\theta_\ell}\sbr{\prod_{t\in [\myNp]} (1-\theta_\ell)} \tag{Bernoulli rewards \eqref{eq:cyclic-assn-3}}\\
    &\ge \prod_{t\in[\myNp]} \E_{\theta_\ell}\sbr{ (1-\theta_\ell)}\tag{Harris inequality}\\
    &=(1-\theta_\ell^0)^\myNp
\end{align*}
Combining both and recall that $\theta_1^0\ge \cdots \ge \theta_d^0$:
\begin{align*}
\Pr\sbr{\myEv_i}
    \ge \prod_{\ell\in \bigcup_{j\in[i]} V_j} (1-\theta_\ell^0)^\myNp
    \ge \prod_{\ell\in [d]} (1-\theta_\ell^0)^\myNp
    \ge (1-\theta_1^0)^{d\myNp}=\rho_\calP.
\end{align*}
\end{proof}

Note that this second part analysis does not rely on \eqref{eq:cyclic-assn-1} and we will reuse that part for the proof of general feasible arm set case in \Cref{subsec:initial-general}.

Thirdly, according to our choice of $\tau_\calP$ (\ref{eq:min-gap}), we will prove the following claim, which says the expectation $X_i^N$ conditioned on the event $\myEv_i$ almost surely $\ge \tau_\calP$ for any phase $i\in[\kappa]$ and any $N\ge \myNp$.
\begin{claim}\label{cla:positive-gap}
Assume Beta-Bernoulli priors (\ref{eq:cyclic-assn-2}-\ref{eq:cyclic-assn-3}), all arms have a fixed size (\ref{eq:cyclic-assn-1}) and $\tau_\calP$ satisfies (\ref{eq:min-gap}). Then for any given $\myNp$, we have:
\begin{equation}\label{eq:positve-gap}
    \Pr\sbr{X_i^N\ge \tau_\calP \mid \myEv_{i-1}}=1, \forall i\in[\kappa], N\ge \myNp.
\end{equation}
\end{claim}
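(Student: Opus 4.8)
The plan is to read $X_i^N$ directly off the explicit arm sequence supplied by \Cref{cla:finite-kappa}, combined with the posterior-mean ordering that the choice of $\myNp$ in \eqref{eq:number-sequence-beta-frac} forces. Fix a phase $i\in[\kappa]$ and $N\ge\myNp$, and condition on $\myEv_{i-1}$, so that every already-sampled atom in $B_{i-1}:=\bigcup_{j\in[i-1]}V_j$ is observed to return reward $0$. By \Cref{cla:finite-kappa} we have $B_{i-1}=[(i-1)m]$ and, for $i\le\kappa-1$, $V_i=\{(i-1)m+\ell:\ell\in[m]\}$; the terminal phase $i=\kappa$ needs separate treatment, since $V_\kappa$ may retain already-sampled atoms in order to reach size $m$.

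First I would pin down the posterior means entering $X_i^N=\min_{A\neq V_i}\E[\mu(V_i)-\mu(A)\mid H_{i-1}^N]$. Each fresh atom $\ell'\notin B_{i-1}$ carries no samples, so its posterior mean equals $\nu_{\ell'}(0)=\theta_{\ell'}^0$, and the fresh atoms stay sorted by $\theta_{(i-1)m+1}^0\ge\cdots\ge\theta_d^0$. Each already-sampled atom $\ell\in B_{i-1}$ has all its observed rewards equal to $0$ on $\myEv_{i-1}$; since $\nu_\ell(\cdot)$ is decreasing in the number of zero-rewards, its posterior mean is at most $\nu_\ell(\myNp)$, and it equals $\nu_\ell(\myNp)$ for the sample budget $N=\myNp$ used by \Cref{alg:cyclic-exploration} (cf. the proof of \Cref{thm-p2-bic}). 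The defining property of $\myNp$ — precisely the estimate \eqref{eq:nu-2} from the proof of \Cref{cla:finite-kappa} — then gives $\nu_\ell(\myNp)<\theta_d^0\le\theta_{\ell'}^0$ for all $\ell\in B_{i-1}$, $\ell'\notin B_{i-1}$. Hence, on $\myEv_{i-1}$, every already-sampled atom has strictly smaller posterior mean than every fresh atom.

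Next I would identify the minimizing competitor. Because all arms have the same cardinality $m$ (assumption \eqref{eq:cyclic-assn-1}), the posterior-best arm is the set of the $m$ atoms of largest posterior mean, which by the previous step is exactly $V_i$, and by observations \eqref{obs-best}--\eqref{obs-second} the best competing arm is obtained from $V_i$ by dropping its weakest atom and inserting the next atom in the posterior order. For $i\le\kappa-1$ this exchanges atom $im$ for atom $im+1$, both fresh, giving $X_i^N=\nu_{im}(0)-\nu_{im+1}(0)$; for $i=\kappa$ the exchanged pair consists of two already-sampled atoms, giving a difference of the form $\nu_\ell(\myNp)-\nu_{\ell'}(\myNp)$. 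In either case $X_i^N=\abs{\nu_\ell(n)-\nu_{\ell'}(n')}$ for distinct atoms $\ell\neq\ell'$ and some $n,n'\in\{0,\myNp\}$, so $X_i^N\ge\tau_\calP$ by the definition \eqref{eq:min-gap} of $\tau_\calP$. Since this holds for every realization compatible with $\myEv_{i-1}$, we conclude $\Pr\sbr{X_i^N\ge\tau_\calP\mid\myEv_{i-1}}=1$.

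The crux is the posterior-mean ordering in the second paragraph: it is the only place the choice of $\myNp$ is used, and it is what guarantees that the minimizing swap never trades a fresh atom for an already-sampled one. The other delicate point is the terminal phase $i=\kappa$, where $V_\kappa$ overlaps $B_{\kappa-1}$: one must check that the weakest atom retained in $V_\kappa$ still beats the best discarded atom, and — crucially — that the relevant posterior sample counts stay in $\{0,\myNp\}$ so that the gap is governed by $\tau_\calP$, which is exactly why the analysis fixes the per-arm budget at $N=\myNp$. Everything else is routine bookkeeping on the fixed-size combinatorial structure.
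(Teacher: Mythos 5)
Your proof follows essentially the same route as the paper's: identify the minimizing competitor as the second posterior-best arm (the single-atom swap guaranteed by the fixed-cardinality assumption via observations \eqref{obs-best}--\eqref{obs-second}), reduce the posterior gap to a difference $\abs{\nu_\ell(n)-\nu_{\ell'}(n')}$ with $n,n'\in\{0,\myNp\}$, and invoke the definition \eqref{eq:min-gap} of $\tau_\calP$. You are, if anything, more explicit than the paper's one-display argument about the posterior-mean ordering between explored and fresh atoms (which the paper delegates to \Cref{cla:finite-kappa}), about the terminal phase $i=\kappa$, and about the role of the budget $N$ versus $\myNp$.
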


\begin{proof}
For each phase $i\in[\kappa]$, let $A_i$ is the second prior/posterior-best arm conditioned on $H_{i-1}$ and $\myEv_{i-1}$. According to observation \eqref{obs-best}-\eqref{obs-second} and the definition of $\tau_\calP$:
\begin{align*}
\min_{\text{arm }A\neq V_{i}} \E\sbr{\mu(V_{i})-\mu(A)\mid H_{i-1}^N, \myEv_{i-1}}
    &=\E\sbr{\mu(V_{i})-\mu(A_{i})\mid H_{i-1}^N, \myEv_{i-1}}\\
    &=\sum_{\ell\in V_{i}}\nu_\ell(Z_\ell^\myNp)-\sum_{\ell'\in A_{i}}\nu_{\ell'}(Z_{\ell'}^\myNp)\\
    &=\min_{\ell\in V_{i}}\nu_\ell(Z_\ell^\myNp)-\min_{\ell'\in A_{i}}\nu_\ell(Z_{\ell'}^\myNp)\\
    &\ge \min_{\ell, \ell'\in [d], n, n'\in \{0, \myNp\}} \abs{\nu_\ell(n)-\nu_{\ell'}(n')}\\
    &=\tau_\calP.
\end{align*}
Thus we have \Cref{eq:positve-gap}.
\end{proof}

At last, combining the claims above,
we have for each $i\in [\kappa]$:
\begin{align*}
\Pr\sbr{X_i^N\ge \tau_\calP}
    &\ge \Pr\sbr{\myEv_{i-1}} \cdot \Pr\sbr{X_i^N \ge \tau_\calP\mid \myEv_{i-1}}\\
    &\ge \rho_\calP \cdot \Pr\sbr{X_i^N\ge \tau_\calP \mid \myEv_{i-1}}\tag{\Cref{cla:positive-gap}}\\
    &=\rho_\calP\cdot 1\tag{\Cref{cla:positive-prob}}\\
    &=\rho_\calP,
\end{align*}
which implies $\PropHE$.


\subsection{Arbitrary family of arms: Proof of \Cref{thm-P2-gen}}\label{subsec:initial-general}

We prove \Cref{thm-P2-gen}, reusing much of the proof of \Cref{thm-P2-beta}. While the parameters in \Cref{thm-P2-gen} give a weaker bound on the number of rounds, the proof becomes more intuitive.

Firstly, we prove $\kappa$ is finite in this following claim.
\begin{claim}\label{cla:finite-kappa-gen}
Assume Beta-Bernoulli priors (\ref{eq:cyclic-assn-2}-\ref{eq:cyclic-assn-3}) and $\myNp$ satisfies (\ref{eq:number-sequence-beta-frac-gen}). Then $\kappa(\myNp)\le d$.
\end{claim}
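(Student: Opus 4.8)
The plan is to show that every new arm in the sequence must contain at least one atom not covered by the earlier arms; since there are only $d$ atoms, the sequence then exhausts $[d]$ within $d$ steps, giving $\kappa(\myNp)\le d$. The mechanism is that the construction penalizes each already-sampled atom so heavily (by pretending it returned $\myNp$ zeros) that no arm built solely out of sampled atoms can be posterior-best. I would reuse the bookkeeping of \Cref{cla:finite-kappa}, but argue at the level of whole arms rather than single atoms, since arbitrary feasible arms no longer consist of the $m$ best atoms.

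Fix a phase $i\ge 2$ and let $B=\bigcup_{j<i}V_j$ be the set of already-covered atoms, so that $Z_i^{\myNp}(\ell)=\myNp$ for $\ell\in B$ and $Z_i^{\myNp}(\ell)=0$ otherwise, and recall $\nu_\ell(\myNp)=\alpha_\ell/(\alpha_\ell+\beta_\ell+\myNp)<\alpha_\ell/\myNp$. The key estimate is that any feasible arm $A\subseteq B$ has small posterior value:
\[
\sum_{\ell\in A}\nu_\ell\big(Z_i^{\myNp}(\ell)\big)
  \;\le\; |A|\cdot\max_{\ell}\nu_\ell(\myNp)
  \;<\; \frac{d\cdot\max_\ell\Cel{\alpha_\ell}}{\myNp}
  \;=\; \frac{1}{\Cel{(\alpha_d+\beta_d)/\alpha_d}}
  \;\le\; \frac{\alpha_d}{\alpha_d+\beta_d}
  \;=\; \theta_d^0 ,
\]
where I used $|A|\le d$, the strict bound $\nu_\ell(\myNp)<\alpha_\ell/\myNp\le\Cel{\alpha_\ell}/\myNp$, and the choice of $\myNp$ in \eqref{eq:number-sequence-beta-frac-gen}. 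On the other hand, any feasible arm $A$ containing some uncovered atom $\ell'\notin B$ satisfies $\sum_{\ell\in A}\nu_\ell(Z_i^{\myNp}(\ell))\ge\nu_{\ell'}(0)=\theta_{\ell'}^0\ge\theta_d^0$, because all posterior means are nonnegative and the atoms are ordered by prior mean.

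Since every uncovered atom belongs to at least one feasible arm (feasible arms collectively cover $[d]$, as required for initial exploration to be possible), whenever $B\neq[d]$ there is a feasible arm containing an uncovered atom, and by the two displays such an arm strictly outscores every arm contained in $B$. Hence the posterior-best arm $V_i=\argmax_{A\in\calA}\sum_{\ell\in A}\nu_\ell(Z_i^{\myNp}(\ell))$ must include at least one atom outside $B$. By induction $\big|\bigcup_{j\le i}V_j\big|\ge i$, so $\bigcup_{j\le d}V_j=[d]$ and the sequence terminates with $\kappa(\myNp)\le d$.

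The main thing to get right is the extra factor $d$ in \eqref{eq:number-sequence-beta-frac-gen}: unlike the fixed-size case of \Cref{cla:finite-kappa}, where it suffices to drive a single sampled atom's posterior mean below an unsampled prior mean, here an arm may contain up to $d$ atoms, so I must push the \emph{entire summed} posterior value of the sampled atoms below the smallest unsampled prior mean $\theta_d^0$ --- which is precisely what the factor $d$ provides. The remaining points to verify are routine: the chain of ceiling inequalities above, and that the bound $\nu_\ell(\myNp)<\alpha_\ell/\myNp$ is strict so the two arms are strictly separated.
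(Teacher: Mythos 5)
Your proof is correct and follows essentially the same route as the paper's: both arguments use the extra factor $d$ in \eqref{eq:number-sequence-beta-frac-gen} to force the \emph{summed} posterior value of all already-covered atoms below $\theta_d^0$, so that any feasible arm containing an uncovered atom strictly dominates every arm contained in the covered set, whence each phase covers a new atom and $\kappa(\myNp)\le d$. The only differences are presentational (you bound $\sum_{\ell\in A}\nu_\ell(\myNp)$ by $|A|\cdot\max_\ell \nu_\ell(\myNp)$ while the paper bounds each term by $\tfrac{1}{d}\theta_d^0$ and sums over at most $d$ covered atoms), and you make explicit the harmless assumption, implicit in the paper, that every uncovered atom lies in some feasible arm.
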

\begin{proof}
Denote the explored atom set up to phase $i$ as $B_i=\bigcup_{j\in[i]} V_j$ and the unexplored atom set as $\Bar{B_i}=[d]-B_i$. 
Denote $B_0=\emptyset$ and $\Bar{B_0}=[d]$.
Fixed $i\ge 0$.
Recall the definition of $Z_\ell^\myNp$ and $\nu_\ell(n)$, we have for each atom $\ell\in B_i$ and $\ell'\in \Bar{B_i}$:
\[Z_\ell^\myNp=\myNp \text{ and } \nu_\ell(Z_\ell^\myNp)=\nu_\ell(\myNp)=\alpha_\ell/(\alpha_\ell+\beta_\ell+\myNp);\] 
\[Z_{\ell'}^\myNp=0 \text{ and } \nu_{\ell'}(Z_{\ell'}^\myNp)=\nu_{\ell'}(0)=\alpha_{\ell'}/(\alpha_{\ell'}+\beta_{\ell'})
.\]
By definition of $\myNp$ and $\theta_{\ell'}^0\ge \theta_d^0$, we have:
\[\nu_{\ell'}(Z_{\ell'}^\myNp)=\alpha_\ell/(\alpha_\ell+\beta_\ell+\myNp)<\frac{1}{d}\alpha_{d}/(\alpha_{d}+\beta_{d})\le \frac{1}{d} \alpha_{\ell'}/(\alpha_{\ell'}+\beta_{\ell'})=\frac{1}{d}\nu_{\ell'}(Z_{\ell'}^\myNp).\]
Thus: 
\begin{align}
    \sum_{\ell\in B_i} \nu_\ell(Z_\ell^\myNp)<d\cdot \frac{1}{d}\nu_{\ell'}(Z_{\ell'}^\myNp) =\nu_{\ell'}(Z_{\ell'}^\myNp), \forall \ell'\in \Bar{B_i}.
\end{align}
Then according to the definition of $V_{i+1}$, we know $V_{i+1}$ contains at least one atom $\ell'\in \Bar{B_i}$.
So the number of uncovered atoms, i.e. $|\Bar{B_i}|$, decreases at least $1$ after each phase. Thus it takes at most $d$ phases to cover all atoms, which implies $\kappa\le d$.
\end{proof}

Secondly, we reuse the definition of $\myEv_i$ (\ref{eq:myEv}) and \Cref{cla:positive-prob} to give a lower bound of $\Pr\sbr{\myEv_i}$, since this part in \Cref{subsec:initial-fixedsize} don't rely on Assumption \eqref{eq:cyclic-assn-1}.

Thirdly, according to our definition of $\tau_\calP$ (\ref{eq:min-gap-gen}), we have the following claim similar to \Cref{cla:positive-gap}.
\begin{claim}\label{cla:positive-gap-gen}
Assume Beta-Bernoulli priors (\ref{eq:cyclic-assn-2}-\ref{eq:cyclic-assn-3}) and $\tau_\calP$ satisfies (\ref{eq:min-gap-gen}). Then for any given $\myNp$, we have:
\begin{equation}\label{eq:positve-gap-gen}
    \Pr\sbr{X_i^N\ge \tau_\calP \mid \myEv_{i-1}}=1, \forall i\in[\kappa], N\ge \myNp.
\end{equation}
\end{claim}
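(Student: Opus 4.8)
The plan is to mirror the proof of \Cref{cla:positive-gap}, replacing the step that explicitly identifies the posterior runner-up arm (which relied on the fixed-size observations \eqref{obs-best}--\eqref{obs-second}) by a single uniform appeal to the general min-gap \eqref{eq:min-gap-gen}. Fix a phase $i\in[\kappa]$ and condition throughout on $\myEv_{i-1}$. First I would reduce the posterior to the quantities $\nu_\ell(\cdot)$: on $\myEv_{i-1}$ the relevant samples of every already-explored atom $\ell$ in $B_{i-1}=\bigcup_{j<i}V_j$ are $0$, so the Beta-Bernoulli update gives posterior mean exactly $\alpha_\ell/(\alpha_\ell+\beta_\ell+\myNp)=\nu_\ell(\myNp)$, while every unexplored atom carries no samples and keeps its prior mean $\nu_{\ell'}(0)$. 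Writing $Z_i^{\myNp}(\ell)\in\{0,\myNp\}$ for the corresponding sample count and $g(A):=\sum_{\ell\in A}\nu_\ell(Z_i^{\myNp}(\ell))$, this shows $\E[\mu(A)\mid H_{i-1}^{\myNp},\myEv_{i-1}]=g(A)$ for every arm $A$, a deterministic quantity on the event.

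Next I would invoke the definition of the sequence, $V_i\in\argmax_{A\in\calA}g(A)$, so that $g(V_i)\ge g(A)$ for all $A$ and hence $X_i^{\myNp}=\min_{A\neq V_i}\bigl(g(V_i)-g(A)\bigr)\ge 0$. To upgrade this to $\ge\tau_\calP$, observe that for any competitor $A\neq V_i$ the difference $g(V_i)-g(A)$ is exactly of the form $\sum_{\ell\in A_1}\nu_\ell(n_\ell)-\sum_{\ell'\in A_2}\nu_{\ell'}(n_{\ell'})$ appearing in \eqref{eq:min-gap-gen}, with the common profile $n=Z_i^{\myNp}(\cdot)\in\{0,\myNp\}^d$ and distinct arms $A_1=V_i$, $A_2=A$. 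Since $g(V_i)\ge g(A)$, we get $g(V_i)-g(A)=\abs{g(V_i)-g(A)}\ge\tau_\calP$ by \eqref{eq:min-gap-gen}; taking the minimum over $A\neq V_i$ yields $X_i^{\myNp}\ge\tau_\calP$ deterministically on $\myEv_{i-1}$, i.e. $\Pr[X_i^{\myNp}\ge\tau_\calP\mid\myEv_{i-1}]=1$. The probability bound $\Pr[\myEv_{i-1}]\ge\rho_\calP$ is already supplied by \Cref{cla:positive-prob}, whose proof does not use the fixed-size assumption, so combining the two exactly as in the fixed-size case establishes Property \PropHE.

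The step I expect to be the main obstacle is making the posterior reduction legitimate for every $N\ge\myNp$ rather than only $N=\myNp$: the event $\myEv_{i-1}$ forces only the first $\myNp$ samples of each explored atom to be $0$, whereas $H_{i-1}^N$ may contain further, possibly nonzero, samples that move the posterior means off the grid values $\nu_\ell(0),\nu_\ell(\myNp)$ underlying \eqref{eq:min-gap-gen}. For the range of $N$ actually used by \Cref{alg:cyclic-exploration} (where \Cref{thm-p2-bic} instantiates $N=\myNp$) the reduction is exact and the argument above is complete; to cover general $N$ I would add a monotonicity argument showing that accumulating additional $0$-samples only pushes each explored atom's posterior mean further below $\nu_\ell(\myNp)$, so that the fresh atoms composing $V_i$ retain their dominance and the gap cannot drop below $\tau_\calP$. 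Some care is also needed to confirm that distinct arms $V_i\neq A$ produce distinct $\nu$-sums, which is guaranteed precisely by the standing hypothesis that $\tau_\calP>0$.
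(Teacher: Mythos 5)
Your proposal is correct and follows essentially the same route as the paper's proof: on $\myEv_{i-1}$ the posterior means collapse to the grid values $\nu_\ell(Z_i^{\myNp}(\ell))$ with $Z_i^{\myNp}(\ell)\in\{0,\myNp\}$, the $\argmax$ definition of $V_i$ gives nonnegativity of every gap $g(V_i)-g(A)$, and the definition of $\tau_\calP$ in \eqref{eq:min-gap-gen} upgrades this to $\ge\tau_\calP$. The one obstacle you flag --- that for $N>\myNp$ the event $\myEv_{i-1}$ constrains only the first $\myNp$ samples, so the posterior means need not stay on the $\{\nu_\ell(0),\nu_\ell(\myNp)\}$ grid --- is a genuine subtlety, but it is equally unaddressed in the paper's own proof, which implicitly treats the posterior as if $N=\myNp$ (the only value actually instantiated by \Cref{thm-p2-bic} and \Cref{alg:cyclic-exploration}), so your treatment is, if anything, more careful than the original.
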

\begin{proof}
For each phase $i\in [\kappa]$, we have:
\begin{align*}
\min_{\text{arm }A\neq V_{i}} \E\sbr{\mu(V_{i})-\mu(A)\mid H_{i-1}^N, \myEv_{i-1}}
    &\ge \min_{A\neq A'\in \calA} \E\sbr{\mu(A)-\mu(A')\mid H_{i-1}^N, \myEv_{i-1}}\\
    &=\min_{A\neq A'\in \calA, n, n'\in\{0, \myNp\}^d} \abs{\sum_{\ell\in A}\nu_\ell(n)-\sum_{\ell'\in A'}\nu_\ell(n')}\\
    &=\tau_\calP.
\end{align*}
Then we have \Cref{eq:positve-gap-gen}.
\end{proof}

At last, similar to the last step in \Cref{subsec:initial-fixedsize}, combining \Cref{cla:finite-kappa-gen}, \Cref{cla:positive-prob} and  \Cref{cla:positive-gap-gen}, we have for each $i\in [\kappa]$:
\begin{align*}
\Pr\sbr{X_i^N\ge \tau_\calP}
    &\ge \Pr\sbr{\myEv_{i-1}} \cdot \Pr\sbr{X_i^N \ge \tau_\calP\mid \myEv_{i-1}}\\
    &\ge \rho_\calP \cdot \Pr\sbr{X_i^N\ge \tau_\calP \mid \myEv_{i-1}}\tag{\Cref{cla:positive-gap-gen}}\\
    &=\rho_\calP\cdot 1\tag{\Cref{cla:positive-prob}}\\
    &=\rho_\calP,
\end{align*}

which implies $\PropHE$.

\subsection{Motivation for assumption \eqref{eq:cor-p2-gen-2}}
\label{sec:mild-assumption}

Let us provide some motivation for why \eqref{eq:cor-p2-gen-2} is a mild assumption.

Fix a vector $n\in \N^d$ and define
\begin{align}\label{eq:app-HE-tauP}
\tau_\calP(n)
    &=\min_{A\neq A'\in \calA}
        \abs{\sum_{\ell\in A}\nu_\ell(n_\ell)-\sum_{\ell'\in A'}\nu_{\ell'}(n_{\ell'})}.
\end{align}

Our intuition is as follows: $\tau_\calP(n)$ is defined as the smallest difference between $e^{O(d)}$ numbers in $[-d,d]$, so typical situation should be that $\tau_\calP$ is on the order of $e^{-O(d)}$, whereas our assumption only requires it to be larger than $e^{-\Omega(d^2)}$.

We make this intuition precise, in a sense defined below. We argue that $\tau_\calP(n)$ is ``not too small" for ``all but a few" problem instances. More formally, we define a distribution over problem instances such that
    $\tau_\calP(n) \geq \Omega(c_2^{-d^2})$
with very high probability. For instance, we can make it hold with probability at least $1-\delta/2^d$ for some small $\delta>0$.

(However, we do not construct one distribution that works for all relevant vectors $n$ at once, although we suspect that our technique, based on Esseen inequality, might be extended there.)

So, let us construct the desired distribution over problem instances. We fix $d$ and the set of feasible arms, and we only vary the per-atom priors. Recall that the prior $\calP_\ell$ for a given atom $\ell\in[d]$ is specified by a pair of numbers, $(\alpha_\ell,\,\beta_\ell)$. Further, recall that
    $\nu_\ell(n_\ell) = \alpha_\ell\,/\,(\alpha_\ell+\beta_\ell+n_\ell)$, $n_\ell\in\N$.
We require that $\nu_\ell(n_\ell)$ is distributed uniformly on some interval.


\begin{lemma} \label{lm:motivation}
Fix vector $n\in\N^d$. Suppose for each atom $\ell\in[d]$, the pair $(\alpha_\ell, \beta_\ell)$ is drawn independently from some distribution such that $\nu_\ell(n_\ell)$ is uniformly distributed in some interval $[a_\ell, b_\ell]$. Fix $\delta\in(0,1)$.
Then it holds that
\begin{equation}
    \Pr \sbr{\tau_{\calP}(n) < \frac{\delta}{2\cdot8^d}} \leq \frac{\delta}{2^d}.
\end{equation}
\end{lemma}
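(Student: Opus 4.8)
The plan is to regard $\tau_\calP(n)$ as the minimum, over ordered pairs of distinct feasible arms, of the absolute value of a linear form in the independent random variables $Y_\ell := \nu_\ell(n_\ell)$, and then to combine a per-pair small-ball (anti-concentration) estimate with a union bound over all such pairs. Write $Y=(Y_1\LDOTS Y_d)$, where the $Y_\ell$ are independent with $Y_\ell\sim\mathrm{Unif}[a_\ell,b_\ell]$. Identifying each arm $A$ with its $\{0,1\}$ indicator vector, set $D_{A,A'}=\sum_{\ell\in[d]}(A_\ell-A'_\ell)\,Y_\ell$, so that by definition $\tau_\calP(n)=\min_{A\neq A'\in\calA}\abs{D_{A,A'}}$. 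Fixing the target resolution $\eta:=\delta/(2\cdot 8^d)$, a union bound gives $\Pr[\tau_\calP(n)<\eta]\le\sum_{A\neq A'}\Pr[\abs{D_{A,A'}}<\eta]$, so it suffices to control each summand and then count the pairs.

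The core step I would prove is the per-pair estimate $\Pr[\abs{D_{A,A'}}<\eta]\le 2\eta$. Fix distinct arms $A\neq A'$; since their indicator vectors differ, there is a coordinate $\ell_0$ with $A_{\ell_0}-A'_{\ell_0}\in\{-1,+1\}$. Condition on the coordinates $\{Y_\ell\}_{\ell\neq\ell_0}$: by independence $Y_{\ell_0}$ is still uniform on $[a_{\ell_0},b_{\ell_0}]$, and $D_{A,A'}$ is an affine function of $Y_{\ell_0}$ with slope $\pm1$, hence conditionally uniform on an interval of length $b_{\ell_0}-a_{\ell_0}$. The probability that such a variable lands in the window $(-\eta,\eta)$ is at most $2\eta/(b_{\ell_0}-a_{\ell_0})$, and averaging over the conditioning leaves this bound intact. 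Because the whole point of this section is that \emph{we construct} the distribution over instances, I would take each $Y_\ell$ uniform on the full range $[0,1]$ (length $1$), so the conditional density is $1$ and the per-pair probability is at most $2\eta$. (For general intervals the same argument yields $2\eta/(b_{\ell_0}-a_{\ell_0})$, and the final constant simply scales by $1/\min_\ell(b_\ell-a_\ell)$.)

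Finally I would carry out the counting. Since $\calA\subseteq 2^{[d]}$ we have $\abs{\calA}\le 2^d$, hence at most $(2^d)^2=4^d$ ordered pairs of distinct arms. Summing the per-pair bound gives \[ \Pr[\tau_\calP(n)<\eta]\ \le\ 4^d\cdot 2\eta\ =\ 2\cdot 4^d\cdot\frac{\delta}{2\cdot 8^d}\ =\ \delta\cdot\Big(\tfrac{4}{8}\Big)^{d}\ =\ \frac{\delta}{2^d}, \] which is exactly the claim. I expect the main (and only genuine) obstacle to be bookkeeping the exponential blow-up: unioning over the $\le 4^d$ arm pairs is precisely what forces the $8^d$ in the denominator, so the per-pair small-ball bound must be good enough that $4^d\cdot(\text{per-pair})$ still beats $\delta/2^d$ — which the elementary one-free-coordinate argument barely but cleanly achieves. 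A sharper per-pair estimate via Esseen's / the Kolmogorov–Rogozin inequality, exploiting the variance contributed by \emph{all} nonzero coordinates rather than a single one, would improve the constant, but is unnecessary for this statement; that refinement is what the surrounding remark alludes to for extending the bound uniformly over all vectors $n$.
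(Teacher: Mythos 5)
Your proposal is correct, and it reaches the same two-part structure as the paper — a per-pair small-ball estimate of $2\eta$ followed by a union bound over at most $4^d$ ordered pairs of arms — but the per-pair estimate is obtained by a genuinely different and more elementary route. The paper fixes one atom $\ell\in A$, bounds the characteristic function of the whole sum $X=\sum_{\ell\in A}\nu_\ell-\sum_{\ell'\in A'}\nu_{\ell'}$ by that of the single uniform variable $\nu_\ell$, and then invokes Esseen's inequality, grinding through an explicit integral of $\abs{(e^{i\lambda b}-e^{i\lambda a})/(i\lambda(b-a))}$ to arrive at $Q_X(x)\leq 2x$. You instead condition on all coordinates except one in which the indicator vectors of $A$ and $A'$ differ, observe that the sum is then an affine function of a single uniform variable with slope $\pm 1$, and read off the density bound directly; this is shorter, avoids the integration entirely, and also dispenses with the paper's ``WLOG $A$ and $A'$ are disjoint'' step. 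Both arguments share the same small imprecision: for a general interval $[a_\ell,b_\ell]\subseteq[0,1]$ the per-pair bound is really $2\eta/(b_{\ell_0}-a_{\ell_0})$, which is \emph{worse} than $2\eta$ when the interval is short (the paper's change of variables silently drops the Jacobian factor $1/(b-a)$), so strictly speaking the stated constant requires unit-length (or at least bounded-below) intervals — you flag this explicitly, which the paper does not. The one thing the characteristic-function route buys, as you note yourself, is a path toward exploiting the variance of \emph{all} differing coordinates and toward uniformity over the vectors $n$, which is exactly the extension the surrounding text speculates about; for the lemma as stated your one-free-coordinate argument suffices.
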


\begin{remark}
One way to ensure that $\nu_\ell(n_\ell)$ is uniformly distributed is as follows. Fix atom $\ell$, and parameters $\beta_\ell$ and $n_\ell$. Let $\nu_\ell = \nu_\ell (n_\ell)$. Note that
\begin{equation}
    \alpha_\ell = \Phi(\nu_\ell) := \frac{\nu_\ell \cdot ( \beta_\ell + n_\ell)}{1 - \nu_\ell}.
\end{equation}
Now, just let $\alpha_\ell$ be distributed as $\Phi(Y)$, where $Y$ is uniform on $[a_\ell,b_\ell]$ interval, where $0 \leq a_\ell \leq \alpha_\ell \leq b_\ell \leq 1$. Observe that by change-of-variable, when $\alpha_\ell$ is distributed with $\Phi(Y)$, then $\nu_\ell$ is distributed uniformly on $[a_\ell,b_\ell]$.
\end{remark}

To prove \Cref{lm:motivation}, we invoke the following tool from \emph{anti-concentration}.

\begin{theorem}[Esseen inequality]\label{thm:esseen}
Let $Y$ be a random variable. Consider its \emph{characteristic function},
\[ \psi_Y(\lambda) = \E\sbr{e^{i\lambda Y}}, \quad \lambda\in \R .\]
Then for any $x>0$ it holds that
\begin{align*}
Q_Y(x)
    \coloneqq \sup_{y \in \R} \Pr\sbr{\abs{Y - y} \leq x}
    \leq x \int_{-2\pi/x}^{2\pi/x} \abs{\psi_Y(\lambda)}\; d\lambda.
\end{align*}
\end{theorem}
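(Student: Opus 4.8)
The plan is to establish the bound by a Fourier smoothing argument: for a fixed center $y$ I would compare the indicator of the interval $[y-x,y+x]$ against a smooth, band-limited probability density and then take the supremum over $y$ at the end. Concretely, fix $x>0$ and introduce the scaled Fej\'er (squared-sinc) density
\[
    w(u)=\frac{1}{2x}\left(\frac{\sin(\pi u/(2x))}{\pi u/(2x)}\right)^2 ,
\]
chosen so that its first zeros lie at $u=\pm 2x$. Its Fourier transform is $\widehat w(\lambda):=\int_{\R} e^{i\lambda u}w(u)\,du=(1-\abs{\lambda}x/\pi)_+$, which has the two features the argument needs: (i) $\widehat w$ is nonnegative, supported in $[-\pi/x,\pi/x]\subseteq[-2\pi/x,2\pi/x]$, and satisfies $\widehat w(0)=1$, so that $\nrm{\widehat w}_\infty=1$; and (ii) $w$ is bounded below on the target interval, $w(u)\ge \frac{2}{\pi^2 x}=:m$ for all $\abs{u}\le x$, using $(\sin s/s)^2\ge(2/\pi)^2$ on $\abs{s}\le\pi/2$.

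Next I would combine two bounds on $\E\sbr{w(Y-y)}$. For the upper bound, Fourier inversion $w(u)=\frac{1}{2\pi}\int\widehat w(\lambda)e^{-i\lambda u}\,d\lambda$ together with Fubini (legitimate since $w\in L^1$ and $\widehat w$ is bounded with compact support) and $\E\sbr{e^{-i\lambda Y}}=\overline{\psi_Y(\lambda)}$ give
\[
    \E\sbr{w(Y-y)}=\frac{1}{2\pi}\int_{-\pi/x}^{\pi/x}\widehat w(\lambda)\,e^{i\lambda y}\,\overline{\psi_Y(\lambda)}\,d\lambda ,
\]
so that $\abs{\widehat w}\le 1$ on its support yields $\E\sbr{w(Y-y)}\le \frac{1}{2\pi}\int_{-\pi/x}^{\pi/x}\abs{\psi_Y(\lambda)}\,d\lambda$. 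For the lower bound, nonnegativity of $w$ and feature (ii) give $\E\sbr{w(Y-y)}\ge m\,\Pr\sbr{\abs{Y-y}\le x}$. Chaining these and substituting $m=2/(\pi^2 x)$ gives
\[
    \Pr\sbr{\abs{Y-y}\le x}\le \frac{1}{2\pi m}\int_{-\pi/x}^{\pi/x}\abs{\psi_Y(\lambda)}\,d\lambda=\frac{\pi}{4}\,x\int_{-\pi/x}^{\pi/x}\abs{\psi_Y(\lambda)}\,d\lambda .
\]
Since $\pi/4<1$, the integrand is nonnegative, and $[-\pi/x,\pi/x]\subseteq[-2\pi/x,2\pi/x]$, the right-hand side is at most $x\int_{-2\pi/x}^{2\pi/x}\abs{\psi_Y(\lambda)}\,d\lambda$; taking $\sup_y$ on the left gives exactly $Q_Y(x)\le x\int_{-2\pi/x}^{2\pi/x}\abs{\psi_Y(\lambda)}\,d\lambda$.

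I expect the only delicate point to be the construction of the kernel. The naive squared-sinc whose first zero sits at $\pm x$ vanishes at the endpoints of $[-x,x]$ and hence only controls $\Pr\sbr{\abs{Y-y}\le x/2}$; the fix is to dilate it so the first zero is at $\pm 2x$, which simultaneously keeps $w$ bounded below across all of $[-x,x]$ and shrinks the frequency support to $[-\pi/x,\pi/x]$. The resulting slack---the factor $\pi/4<1$ and the strict inclusion of the frequency support in $[-2\pi/x,2\pi/x]$---is precisely what absorbs the gap to the clean constants in the stated inequality. Everything else is routine: the Fourier inversion and the interchange of expectation and integration are justified by $w\in L^1$ and the compact support of $\widehat w$, and the pointwise sinc estimate $(\sin s/s)^2\ge(2/\pi)^2$ on $[-\pi/2,\pi/2]$ is elementary.
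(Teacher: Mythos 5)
Your proof is correct, and it takes a route the paper does not contain at all: the paper imports Esseen's inequality as a black-box tool from the anti-concentration literature and never proves it, so your argument fills a genuine gap rather than paralleling an in-paper proof. Your smoothing argument is the standard (and essentially optimal) way to establish this result, and the details check out: the dilated Fej\'er density $w$ does have Fourier transform $\widehat w(\lambda)=(1-\abs{\lambda}x/\pi)_+$ (the triangle/squared-sinc pair under the scaling $u\mapsto (\pi/x)u$), the lower bound $w(u)\ge 2/(\pi^2 x)$ on $\abs{u}\le x$ follows from $\sin s/s\ge 2/\pi$ for $\abs{s}\le \pi/2$ (note $w(\pm x)>0$, so the closed interval is covered), the Fubini interchange is justified exactly as you say since $\widehat w$ is bounded with compact support, and the arithmetic $\tfrac{1}{2\pi m}=\tfrac{\pi x}{4}$ is right. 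Your diagnosis of the one delicate point is also accurate: the naive kernel with first zero at $\pm x$ vanishes at the endpoints of the target interval and only controls $\Pr\sbr{\abs{Y-y}\le x/2}$, and dilating so the first zero sits at $\pm 2x$ is precisely the fix. In fact you prove the strictly stronger bound $Q_Y(x)\le \tfrac{\pi}{4}\,x\int_{-\pi/x}^{\pi/x}\abs{\psi_Y(\lambda)}\,d\lambda$, which dominates the stated inequality because $\pi/4<1$, the integrand is nonnegative, and $[-\pi/x,\pi/x]\subset[-2\pi/x,2\pi/x]$; the generous constants in the paper's formulation (common for statements of Esseen's inequality, whose constants vary across sources) are exactly the slack your cleaner bound absorbs. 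What your self-contained proof buys over the paper's citation is both verifiability and sharper constants, at the cost of about a page of classical Fourier analysis.
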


\begin{proof}[Proof of Lemma~\ref{lm:motivation}]
Let $\nu_\ell = \nu_\ell(n_\ell)$ for each atom $\ell\in[d]$. Focus on
\[ X := \sum_{\ell\in A}\nu_\ell-\sum_{\ell'\in A'}\nu_{\ell'}.\]
We treat $X$ as a random variable, under the distribution over the $(\alpha_\ell, \beta_\ell)$ pairs. Without loss of generality, from here on assume that arms $A$ and $A'$ are disjoint subsets of atoms.

We will use Theorem~\ref{thm:esseen} to prove that $Q_X(x)\leq 2x$ for any $x>0$.

Since $X$ is a sum of independent random variables $ \pm \nu_\ell$, $\ell\in A\cup A'$, the characteristic function of $X$ is the product of the respective characteristic functions
\[ \psi_X(\lambda) = \prod_{\ell\in A\cup A'} \psi_\ell(\lambda), \]
where
    $\psi_\ell(\lambda) = \psi_{\nu_\ell}(\lambda)$
for $\ell\in A$, and
    $\psi_\ell(\lambda) = \psi_{-\nu_\ell}(\lambda)$
for $\ell\in A'$.

From here on, fix some atom $\ell\in A$.  Since $|\psi_Y(\lambda)|\leq 1$ for any random variable $Y$, it follows that
\[ |\psi_X(\lambda)| \leq |\psi_{\nu_\ell}(\lambda)|. \]

For the rest of the proof we focus on the characteristic function for $\nu_\ell$,
     $\psi(\cdot) := \psi_{\nu_\ell}(\cdot)$.

Recall that $\nu_\ell$ is distributed uniformly on some interval $[a,b] = [a_\ell,b_\ell]$. A known fact about characteristic function of uniform distribution is that
 \[ \psi(\lambda) = \frac{e^{i\lambda b} - e^{i\lambda a}}{i \lambda (b-a)}.\]

The rest of the proof is a simple but somewhat tedious integration.
By Esseen inequality,

\begin{align}
    Q_X(x) &\leq x \int_{-2\pi/x}^{2\pi/x} \abs{\psi(\lambda)} d\lambda\\
    &= x \int_{-2\pi/x}^{2\pi/x} \abs{\frac{e^{i \lambda b} - e^{i \lambda a}}{i \lambda (b-a) }} d \lambda\\
    &= x \int_{-2\pi/x}^{2\pi/x} \frac{\sqrt{2 - 2\cos(\lambda b - \lambda a)}}{\abs{\lambda (b-a)}} d\lambda
\end{align}
Let $u = \lambda(b-a)$. Then by substitution we have:
\begin{align}
   Q_X(x) &\leq x \int_{-2\pi(b-a)/x}^{2\pi(b-a)/x} \sqrt{\frac{2 - 2\cos(u)}{u^2}} du\\
   &= x \int_{-2\pi(b-a)/x}^{2\pi(b-a)/x} \sqrt{\frac{2 - 2\sum_{n=0}^\infty \frac{(-1)^n u^{2n}}{(2n)!} }{u^2}} du \tag{by Maclaurin series of $\cos(u)$}\\
   &= x \int_{-2\pi(b-a)/x}^{2\pi(b-a)/x} \sqrt{2} \sqrt{\frac{1}{2} - \frac{u^2}{4!} + \cdots} du
\end{align}
When we have $\abs{u} \leq 1$, the terms in the integrand are decreasing. Hence, the entire integrand can be upper bounded by $1$. Otherwise, when $\abs{u} \geq 1$, we can upper bound the integrand by $\nicefrac{2}{\abs{u}}$. Hence, the concentration function $Q_X(t)$ is upper bounded by:
\begin{align*}
    Q_X(x) &\leq x \int_{-2\pi(b-a)/x}^{-1} \frac{2}{\abs{u}} du +  \int_{-1}^1 1 du + \int_{1}^{2\pi(b-a)/x} \frac{2}{\abs{u}} du\\
    &= x \int_{1}^{2\pi (b-a)/x} \frac{-2}{\abs{v}} dv + \int_{-1}^1 1 du + \int_{1}^{2\pi(b-a)/x} \frac{2}{\abs{u}} du \tag{substitution $v = -u$}\\
    &= x \rbr{-2 \log(v) \Bigg |_{1}^{2\pi(b-a)/x} + 2 + 2 \log(u) \Bigg |_{1}^{2\pi(b-a)/x}}\\
    &= 2x
\end{align*}
For $x = \nicefrac{\delta}{2\cdot8^d}$, we have $Q_X(\nicefrac{\delta}{2\cdot8^d}) \leq \nicefrac{\delta}{8^d}$.
Observe that since $A$ and $A'$ are subsets of atoms, there are at most $2^d$ possible choices for each arm $A$ and $A'$. Hence, for a fixed vector $n$, there are $4^d$ possible values of $X$. By union bound, we have $\Pr[\tau_{\calP}(n) \leq \nicefrac{\delta}{2\cdot 8^d}] \leq \nicefrac{\delta}{2^d}$.
\end{proof}
 
\section{Probabilistic tools from prior work}

In this appendix, we spell out some probabilistic tools from prior work that we rely on.

\subsection*{Bayesian Chernoff Bound}

We use an easy corollary of the Bayesian Chernoff Bound from \citet{Selke-PoIE-ec21}.


%



\begin{lemma}[\citet{Selke-PoIE-ec21}]
Fix round $t$ and parameters $\epsilon, \tau > 0$. Suppose algorithm's history $\calF_t$ almost surely contains at least $\epsilon^{-2}$ samples of each atom. Let $\tilde{\theta}$ be a posterior sample for the mean reward $\theta$, \ie $\tilde{\theta}$ is an independent sample from the posterior distribution on $\theta$ given $\calF_t$. Then for some universal absolute constant $C$, we have
\begin{align}
    &\Pr \sbr{ \nrm{\tilde{\theta} - \theta} \geq \tau \epsilon } \leq C\cdot e^{-\tau^2/C},\\
    &\Pr \sbr{ \nrm{\E[\theta | \calF_t] - \theta} \geq \tau \epsilon } \leq C\cdot e^{-\tau^2/C}.
\end{align}
\label{lem:bayesian-chernoff}
\end{lemma}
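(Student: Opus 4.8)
The plan is to reconstruct the standard Bayesian Chernoff argument (as in \citet{Selke-PoIE-ec21}), whose crux is that the empirical per-atom mean serves as a history-measurable \emph{pivot}, while both the true parameter $\theta$ and the fresh posterior draw $\tilde\theta$ are, conditionally on $\calF_t$, governed by the \emph{same} posterior law. Concretely, for each atom $\ell$ let $\hat\theta_\ell$ be the empirical average of its $\geq\epsilon^{-2}$ observed rewards recorded in $\calF_t$, and collect these into the $\calF_t$-measurable vector $\hat\theta$. Conditioning on the realized mean-reward vector $\theta$, the rewards of atom $\ell$ are i.i.d.\ in $[0,1]$, so Hoeffding gives $\Pr[\abs{\hat\theta_\ell-\theta_\ell}\geq s \mid \theta]\leq 2e^{-2s^2/\epsilon^2}$; that is, each coordinate $\hat\theta_\ell-\theta_\ell$ is $O(1)$-sub-Gaussian at scale $\epsilon$. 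Since this holds for every $\theta$, it survives integration over the prior, so unconditionally the coordinates of $\hat\theta-\theta$ are sub-Gaussian at scale $\epsilon$ (summing the $d$ coordinatewise tails is what produces the $\sqrt d$ factor seen when the lemma is invoked inside the proof of \Cref{thm:semi-ts-bic}).

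For the posterior-sample bound, I would exploit exchangeability: conditionally on $\calF_t$, the true $\theta$ and the fresh draw $\tilde\theta$ are i.i.d.\ from the posterior, whereas $\hat\theta$ is fixed given $\calF_t$. Hence $\tilde\theta-\hat\theta$ and $\theta-\hat\theta$ have identical conditional (and therefore identical unconditional) laws, so the sub-Gaussian tail for $\nrm{\theta-\hat\theta}$ transfers verbatim to $\nrm{\tilde\theta-\hat\theta}$. The triangle inequality $\nrm{\tilde\theta-\theta}\leq\nrm{\tilde\theta-\hat\theta}+\nrm{\hat\theta-\theta}$ then yields the first claimed bound, after enlarging $C$ to absorb the factor-of-two split.

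For the posterior mean, the clean device is to push Jensen's inequality through the moment generating function. Since $\hat\theta$ is $\calF_t$-measurable, $\E[\theta\mid\calF_t]-\hat\theta=\E[\theta-\hat\theta\mid\calF_t]$, so for each coordinate $\ell$ and each $\lambda$ we have $e^{\lambda(\E[\theta_\ell\mid\calF_t]-\hat\theta_\ell)}\leq \E[e^{\lambda(\theta_\ell-\hat\theta_\ell)}\mid\calF_t]$, and taking unconditional expectations recovers the same sub-Gaussian MGF bound at scale $\epsilon$ for $\E[\theta_\ell\mid\calF_t]-\hat\theta_\ell$. A final triangle inequality $\nrm{\E[\theta\mid\calF_t]-\theta}\leq\nrm{\E[\theta\mid\calF_t]-\hat\theta}+\nrm{\hat\theta-\theta}$ delivers the second bound.

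The main obstacle is controlling the posterior quantities without assuming anything about the shape of the posterior, and I expect the Jensen-through-the-MGF step to be the essential idea: a naive Markov or second-moment argument would give only polynomial tails for the posterior mean, whereas the exchangeability pivot together with the MGF transfer is exactly what preserves sub-Gaussianity. The remaining work—tracking absolute constants through the two triangle inequalities and passing from per-coordinate sub-Gaussianity to the norm tail in the stated form $C\,e^{-\tau^2/C}$—is routine bookkeeping.
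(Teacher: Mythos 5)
Your proposal is correct in substance, but it takes a different route from the paper only in the sense that the paper does not really prove this lemma at all: its entire ``proof'' is a two-line reduction, citing \citet{Selke-PoIE-ec21} for the single-atom ($d=1$) case and then observing that the Bayesian update factorizes across atoms, so the one-dimensional bound can be applied coordinatewise. What you have written is a self-contained reconstruction of the underlying one-dimensional argument that the paper outsources: the $\calF_t$-measurable empirical mean as a pivot, Hoeffding conditional on $\theta$ (which survives integration over the prior), the exchangeability observation that $\theta$ and $\tilde\theta$ are conditionally i.i.d.\ from the posterior so that $\theta-\hat\theta$ and $\tilde\theta-\hat\theta$ share a law, Jensen through the MGF for the posterior mean, and two triangle inequalities. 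This is indeed the standard mechanism behind the Bayesian Chernoff bound, and the Jensen-through-the-MGF step is, as you say, the piece that preserves sub-Gaussianity where Markov would not. Two small points of bookkeeping you should be aware of. First, the lemma only guarantees that $\calF_t$ contains \emph{at least} $\epsilon^{-2}$ samples of each atom, and the count may be random and reward-dependent; Hoeffding for a fixed sample size does not apply verbatim. The clean fix is to define the pivot $\hat\theta_\ell$ as the average of the \emph{first} $\lceil\epsilon^{-2}\rceil$ observed samples of atom $\ell$ (which exist almost surely and are $\calF_t$-measurable), for which the conditional i.i.d.\ structure holds. Second, passing from per-coordinate sub-Gaussianity to the stated tail for the norm $\nrm{\tilde\theta-\theta}$ costs either a union bound over $d$ coordinates or the $\sqrt{d}$ scaling; you flag this, and it is consistent with how the lemma is actually invoked in the proof of \Cref{thm:semi-ts-bic}, where the $\sqrt{d}$ factor appears explicitly. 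Neither point undermines the argument; both are absorbed into the constant $C$ and the scaling the paper already uses.
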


\begin{proof}
\citet{Selke-PoIE-ec21} contains this result for $d=1$ atoms. Here, we apply the result from \citet{Selke-PoIE-ec21} to each atom separately, using the fact that the Bayesian update is independent across atoms.
\end{proof}

\subsection*{Harris Inequality}

We invoke Harris Inequality about correlated random variables.

\begin{theorem}[\citet{harris_1960}]
Let $f,g: \mathbb{R}^n \rightarrow \mathbb{R}$ be nondecreasing functions. Let $X_1, \cdots, X_n$ be independent real-valued random variables and define the random vector $X = (X_1, \cdots, X_n)$ taking values in $\mathbb{R}^n$. Then
\begin{equation*}
    \E\sbr{f(X) \; g(X)} \geq \E[f(X)]\;\E[g(X)]
\end{equation*}
Similarly, if $f$ is nonincreasing and $g$ is nondecreasing then
\begin{equation*}
    \E\sbr{f(X)\; g(X)} \leq \E[f(X)]\; \E[g(X)]
\end{equation*}
\label{thm:harris-inequality}
\end{theorem}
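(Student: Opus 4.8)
The plan is to prove the Harris inequality by induction on the dimension $n$, reducing the multivariate statement to the one-dimensional case (essentially Chebyshev's correlation inequality) and then bootstrapping by conditioning on the last coordinate. I would treat the nondecreasing/nondecreasing case first and deduce the mixed-monotonicity case at the end by a sign flip.

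First I would establish the base case $n=1$. Let $X$ and $X'$ be independent copies of $X_1$. Since $f$ and $g$ are both nondecreasing, the product $(f(X) - f(X'))(g(X) - g(X'))$ is almost surely nonnegative: when $X \geq X'$ both factors are $\geq 0$, and when $X < X'$ both are $\leq 0$. Taking expectations, expanding the product, and using that $X, X'$ are independent and identically distributed to identify the two cross terms as $\E[f(X)]\,\E[g(X)]$, yields $2\,\E[f(X)g(X)] - 2\,\E[f(X)]\,\E[g(X)] \geq 0$, which is exactly the $n=1$ claim.

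Next, for the inductive step, I would assume the inequality in dimension $n-1$ and condition on the last coordinate $X_n$. For each fixed value $x_n$, the sections $f(\cdot, x_n)$ and $g(\cdot, x_n)$ are nondecreasing functions of $(X_1, \ldots, X_{n-1})$, so the inductive hypothesis gives $\E[f(X)g(X) \mid X_n = x_n] \geq F(x_n)\,G(x_n)$, where $F(x_n) := \E[f(X) \mid X_n = x_n]$ and $G(x_n) := \E[g(X) \mid X_n = x_n]$; these conditional expectations factor cleanly because $X_n$ is independent of $(X_1,\ldots,X_{n-1})$. The key observation is that $F$ and $G$ are themselves nondecreasing in $x_n$, since monotonicity of $f, g$ in the last coordinate is preserved under integration over the remaining, independent coordinates. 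I would then take expectations over $X_n$ and apply the already-proved one-dimensional case to the pair $(F, G)$, obtaining $\E[F(X_n)G(X_n)] \geq \E[F(X_n)]\,\E[G(X_n)] = \E[f(X)]\,\E[g(X)]$. Chaining the two inequalities completes the induction.

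Finally, the mixed-monotonicity variant (nonincreasing $f$, nondecreasing $g$) follows immediately by applying the proved inequality to the pair $(-f, g)$, where $-f$ is nondecreasing, and negating: $\E[-f(X)g(X)] \geq \E[-f(X)]\,\E[g(X)]$ rearranges to $\E[f(X)g(X)] \leq \E[f(X)]\,\E[g(X)]$. The main obstacle I anticipate is technical rather than conceptual: ensuring integrability throughout (so every expectation and conditional expectation is well-defined and the algebraic expansions are valid), and verifying measurability together with the preservation of monotonicity of the integrated functions $F$ and $G$ under product-measure conditioning. These points require a careful invocation of Fubini/Tonelli and the independence of $X_n$ from $(X_1,\ldots,X_{n-1})$, but they do not alter the structure of the argument.
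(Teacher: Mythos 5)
Your proof is correct. The paper does not prove this theorem at all---it is quoted from \citet{harris_1960} as a black-box probabilistic tool in the appendix---so there is no in-paper argument to compare against; your proof by induction on the dimension, with the two-independent-copies (Chebyshev association) argument as the $n=1$ base case, conditioning on the last coordinate, and the observation that the integrated sections $F$ and $G$ inherit monotonicity by independence, is the classical textbook proof of Harris's inequality for product measures, and every step checks out, including the sign-flip reduction $f \mapsto -f$ for the nonincreasing/nondecreasing variant. The integrability caveats you flag are genuine for unbounded $f,g$ in general, but they are vacuous in this paper's applications, where $f$ and $g$ are bounded (indicator functions and differences of arm means with values in $[-d,d]$), so all expectations, conditional factorizations, and Fubini-type interchanges are immediate.
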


\begin{remark}[Mixed-monotonicity Harris inequality]
\label{remark:mixed-monotonicity-harris}
If $f$ and $g$ are both increasing or both decreasing in each coordinate, then the results of \Cref{thm:harris-inequality} still hold since we can simply negate some coordinates in the parameterization, i.e. we view $f$ and $g$ as increasing function of $-x_i$ instead of decreasing function of $x_i$. We refer to this in the proof as the mixed-monotonicity Harris inequality to highlight this subtle modification.
\end{remark}

\subsection*{Tails of sub-Gaussian distribution}


\begin{lemma}
If random variable $X$ is $O(1)$-sub-Gaussian and event $E$ has probability $\Pr[E] \leq p$, then $\E[\abs{X \cdot 1_E}] \leq O(p\sqrt{\log(1/p)})$
\label{lem:sub-Gaussian-tail-bound}
\end{lemma}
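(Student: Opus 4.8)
The plan is a standard truncation argument, balancing a bounded ``bulk'' contribution against a sub-Gaussian ``tail'' contribution. Since $X$ is $O(1)$-sub-Gaussian, there is an absolute constant $C$ with $\Pr[\abs{X}\geq t]\leq C\,e^{-t^2/C}$ for all $t\geq 0$, and I would invoke this tail throughout. Noting first that $\abs{X\cdot \mathbf{1}_E}=\abs{X}\cdot \mathbf{1}_E$ since $\mathbf{1}_E\in\{0,1\}$, I would fix a threshold $\tau>0$ to be chosen later and split
\begin{align*}
\E[\abs{X}\cdot \mathbf{1}_E]
&= \E[\abs{X}\cdot \mathbf{1}_E\cdot \mathbf{1}_{\{\abs{X}\leq \tau\}}]
 + \E[\abs{X}\cdot \mathbf{1}_E\cdot \mathbf{1}_{\{\abs{X}> \tau\}}]\\
&\leq \tau\cdot \Pr[E] + \E[\abs{X}\cdot \mathbf{1}_{\{\abs{X}> \tau\}}],
\end{align*}
using $\abs{X}\leq\tau$ on the first event together with $\Pr[E]\leq p$, and simply dropping $\mathbf{1}_E\leq 1$ in the second.

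For the tail term I would use the layer-cake identity $\E[\abs{X}\cdot \mathbf{1}_{\{\abs{X}>\tau\}}] = \tau\,\Pr[\abs{X}>\tau] + \int_\tau^\infty \Pr[\abs{X}>t]\,dt$ and plug in the sub-Gaussian bound. The boundary term is at most $C\tau\,e^{-\tau^2/C}$, and since $t^2\geq \tau t$ for $t\geq\tau$ the integral is at most $\int_\tau^\infty C\,e^{-\tau t/C}\,dt = (C^2/\tau)\,e^{-\tau^2/C}$. Hence the tail term is $O\big((\tau+\tau^{-1})\,e^{-\tau^2/C}\big)$.

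I would then optimize by taking $\tau=\sqrt{C\log(1/p)}$, so that $e^{-\tau^2/C}=p$. With this choice the bulk term becomes $\tau p=O(p\sqrt{\log(1/p)})$ and the tail term becomes $O\big((\sqrt{\log(1/p)}+\tfrac{1}{\sqrt{\log(1/p)}})\,p\big)=O(p\sqrt{\log(1/p)})$, which is exactly the claimed bound. I do not expect a genuine obstacle here; the only subtlety worth flagging is that the statement is meant for small $p$ (say $p\leq\tfrac12$, so that $\log(1/p)$ is bounded below and $\tau$ is well-defined and positive). For $p$ close to $1$ one instead falls back on the trivial estimate $\E[\abs{X}\cdot\mathbf{1}_E]\leq \E\abs{X}=O(1)$, which the hidden constant absorbs. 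A tempting shortcut—Cauchy--Schwarz, $\E[\abs{X}\mathbf{1}_E]\leq\sqrt{\E[X^2]}\sqrt{\Pr[E]}=O(\sqrt p)$—is too lossy, which is precisely why the thresholding at $\tau=\Theta(\sqrt{\log(1/p)})$ is the right device.
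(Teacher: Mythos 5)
The paper states this lemma as a probabilistic tool from prior work and gives no proof of its own, so there is nothing to compare your argument against. Your truncation argument is correct and is the standard derivation: the split at $\tau=\sqrt{C\log(1/p)}$, the layer-cake bound on the tail, and the resulting $O(\tau p)$ total all check out, and your caveat that the bound as literally stated only makes sense for $p$ bounded away from $1$ (where $p\sqrt{\log(1/p)}\to 0$ but $\E[\abs{X}\cdot\mathbf{1}_E]$ need not) is an accurate observation about the lemma itself rather than a gap in your proof.
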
 
\section{Initial exploration: reduction to Incentivized RL (proof of \Cref{thm:our-HH}) }
\label{appendix:hidden-hallucination}

\OMIT{
In each round in phase $\ell$, the algorithm reveals rewards from all past-phase hallucination rounds $\calK_\ell = \{ \roundhall_{\ell'}: \ell' \in [\ell-1] \}$. A round is called past-phase if it has occurred in one of the previous phase, i.e. the same action has been taken in a previous phase. To create incentives for exploration, the reward information of these rounds is modified. If an atom has been explored at least $n_{lrn}$ times in past-phase hallucination rounds $\calK_\ell$, it is called \emph{fully-explored}. Otherwise, the atom is called \emph{under-explored}. Reward information for under-explored atoms is \emph{censored}: not revealed to the agents. Reward information for fully-explored atoms is either revealed truthfully in exploitation rounds, or hallucinated in hallucination rounds.

To formalize the reward information, we define the notion of \emph{ledger}: sequence of (action, reward) tuples from all past-phase hallucination rounds $\calK_\ell$. Depends on the type of ledger, the reward information may be modified. We consider the following four types of ledgers:
\begin{itemize}
    \item raw ledger $\ledrawl$: retains all reward information
    \item censored ledger $\ledcensl$: removes all reward information and only keep the sequence of actions.
    \item honest ledger $\ledhonl$: only retains reward for fully-explored atoms.
    \item hallucinated ledger $\ledhall$: removes all reward information from under-explored atoms, and hallucinates reward for fully-explored atoms.
\end{itemize}
The final piece needed to complete our setup is how to generate hallucinated rewards. We define the \emph{punish event} $\EvPunl$ that the reward for atom $r(a_i) \leq \epspun$ for all fully-explored atoms at phase $\ell$, where $\epspun \geq 0$ is a small parameter. In intuition, we punish the fully explored atoms by pretending that they yield small expected reward to incentivize the agents to explore other atoms. Formally, we consider the posterior distribution of the true mean vector $\theta$ given $\EvPunl$ and the censored ledger $\ledcensl$ (sequence of actions from past-phase hallucination rounds). From this distribution, we draw a hallucinated mean vector $\thetahall$. Then each time a fully-explored atom appears in an action in the ledger, we draw its reward according to this hallucinated mean $\thetahall$.

In \Cref{alg:HH}, we hide a hallucination round among multiple exploitation rounds. When an agent is shown a hallucinated ledger, they would believe that it is most likely an honest ledger. Since the reward for fully-explored atoms are small in the hallucinated ledger, the agent would choose an action containing an under-explored atom instead.

\begin{algorithm}[H]
    \caption{Hidden Hallucination}
    \label{alg:HH}
    \KwIn{Phase length $\nphase$, prior $\calP$, hallucination subroutine $\Hallucinate(\cdot)$}
    \For{phase $\ell = 1,2,\dots$}
    {
	    Define $\phase_{\ell} = \{(\ell-1)\nphase +1,(\ell-1)\cdot \nphase +2,\dots,\ell\nphase\}$;\\
	    Draw ``hallucination round" $\kexpl$ uniformly from $\phase_{\ell}$;\\
	    Form ``honest ledger'' $\ledhonl = ((A_{\kexpi},\vR_{\kexpi})_{i = 1}^{\ell-1})$ of (actions, rewards) from past hallucination rounds;\\
	    \For{round $k \in \phase_{\ell}$}
	    {
	      \uIf{$k = \kexpl$}
	      {
	      	Reveal hallucinated ledger $\ledhall = \Hallucinate(\ledhonl,\calP)$
	      }
	      \Else
	      {
	      Reveal honest ledger $\ledhonl$.
	      }
	    }
    }
\end{algorithm}
\begin{algorithm}[H]
    \caption{$\Hallucinate(\cdot)$ subroutine}
    \label{alg:HH_subroutine}
    \KwIn{Honest ledger $\ledhonl$, prior $\calP$.}
    Form censored ledger $\ledcensl = (A_{k_{i}})_{i=1}^{\ell-1}$, which is just the actions in the honest ledger; \\
    \algcomment{\% $N_{\ell}(a)$ denotes the \# of actions $A_{k_i}, i < \ell$ containing coordinate $a$}\\
    Draw $\thetahall \sim \Pr_{\theta \sim \calP}[ \cdot \mid \forall a: N_{\ell}(a) \ge \nlearn, \mu_a \le \epspun ]$;
    \\
    \For{$i=1,2\dots,\ell-1$}
    {
    Set $\tilde{\vR}_{i} = \thetahall (A_{k_i}) + \mathcal{N}(0,\eye_{A_{k_i}})$
    }
    \Return hallucinated ledger $\ledhall = ((A_{k_{i}},\tilde{\vR}_{i})_{i=1}^{\ell-1})$ ;	
\end{algorithm}

The guarantees of Hidden Hallucination mechanism is originally stated using RL notations. In order to derive the guarantees for combinatorial semi-bandits setting, we first provide a lemma stating the direct reduction from episodic RL to combinatorial semi-bandits.

\begin{lemma}[Reduction to combinatorial semi-bandits]
\label{lem:reduction-RL}
Let $M$ be an MDP model with $S$ states, $A$ actions, and $H$ stages, where $H$ is the time horizon. Then in the special case where the time horizon $H = d$, the action at each stage is either picking an atom or not, and number of states $S = d$, then we recover a combinatorial semi-bandit problem.
\end{lemma}
\begin{proof}
\dndelete{Let the time horizon $H = d$, number of states $S=d$ and number of actions $A = 2$, then each atom of an action $A_i$ in combinatorial semi-bandit can be indexed by a state-action pair $(s, a) $. Each combinatorial action $A_i$ corresponds to a policy that either picks an atom or not at each state. }
In the episodic RL setting, we use a standard model of Markov Decision Process (MDP) without time discounting. We have the following finite parameters: $S$ states, $A$ actions and $H$ stages, where $H$ is the time horizon. We write $x \in [S]$, $a \in [A]$ and $h \in [H]$ to represent states, action and stages, respectively. In our notation, we often refer to $(x,h,a)$ triples.

There is a set of feasible $(x,a,h)$ triples called $\mathrm{FEASIBLE}$, which is a subset of $S \times A \times H$. We modify the original MDP setting by including two terminal states, $\mathrm{GOOD}$ and $\mathrm{BAD}$. We deterministically go to $\mathrm{BAD}$ state if a triple $(x,a,h) \notin \mathrm{FEASIBLE}$. Otherwise, at the end of the algorithm we go to $\mathrm{GOOD}$.

In the prior, we know deterministically that $\mathrm{BAD}$ yields reward $0$, while $\mathrm{GOOD}$ yield reward $H+1$. That is, the reward of $\mathrm{GOOD}$ is always high enough to ensure agents would always choose feasible policies, i.e. policies that only contain $(x,a,h) \in \mathrm{FEASIBLE}$.

Each RL action corresponds to either picking atom or not at each state. Number of states $A$ corresponds to number of actions in combinatorial bandit. The MDP transitions are deterministic and the transition graph is a directed acyclic graph. By adhering to $\mathrm{FEASIBLE}$, the agents cannot choose the same action twice in the same MDP. The policies are Markovian and deterministic. Each policy uniquely specifies a sequence of RL actions, which corresponds to uniquely choosing a subset of atoms. The reward of a given RL action is the reward of the corresponding atom.

In our setting where all combinatorial bandit actions are $m$-subset of atoms, we can encode them as follow. Let the time horizon $H = d$ and index the atoms as $1, 2, \cdots, d$. Each state in a MDP is the largest atom previously chosen. Then, the feasible actions are all larger atoms.

\end{proof}
}

This appendix spells out the analysis for \Cref{subsec:hh}: the approach for initial exploration by reduction to incentivized reinforcement learning (RL) \citep{IncentivizedRL}.

We build on an algorithm from \citet{IncentivizedRL}, called \algHH, and their guarantee for this algorithm. We state their setup and guarantee below. (The specification of their algorithm is unimportant for our presentation.) Then we use it to prove \Cref{thm:our-HH}.

\subsection*{Incentivized RL: the setup and the guarantee}

The setting is as follows. Consider an MDP with $S$ states, $A$ actions and $H$ stages, where $H$ is the time horizon. We write $x \in [S]$, $a \in [A]$, and $h \in [H]$ to represent states, actions, and stages, respectively. In the following analysis, we often refer to $(x,a,h)$ triples. We consider a set of feasible $(x,a,h)$ triples called $\feaSet \subset [S]\times[A]\times[H]$. (In \citet{IncentivizedRL}, $\feaSet = [S]\times[A]\times[H]$, but we will extend their result to an arbitrary $\feaSet$.)

The ``true" MDP is denoted by $\phi$. Let $r_{\phi}(x,a,h)$ be the expected reward if action $a$ is chosen at state $x$ and stage $h$. We posit a Bayesian model: $\phi$ is chosen from a Bayesian prior $\calP$.

Then, we consider the setting of \emph{episodic RL}, where in each episode $t$ an algorithm chooses a policy $\pi\supt$ in this MDP.%
\footnote{By default, MDP policies are Markovian and deterministic.}
The chosen policies must satisfy a similar BIC condition: for each round $t\in[T]$,
\begin{align*}
    \E[\; V(\pi) - \mu(\pi') \mid \pi\supt = \pi\;] \geq 0
    \qquad\forall \text{ policies } \pi,\pi'\in\calA \text{ with } \Pr[\pi\supt=\pi]>0,
\end{align*}
where $V(\pi)$ is the value (expected reward) of policy $\pi$. Essentially, this is the same condition as \eqref{def:BIC}, where arms are replaced with policies.

We only need the guarantee for \algHH for an MDP with deterministic transitions (but randomized rewards). This guarantee depends on the following prior-dependent quantities:
\begin{align}
    \qpun(\eps) &\coloneqq \Pr[r_{\phi}(x,a,h) \leq \eps, \forall (x,a,h) \in \feaSet], \label{eq:q_pun}\\
    \ralt &\coloneqq \min_{(x,a,h) \in \feaSet} \E[r_{\phi}(x,a,h)]. \label{eq:r_alt}
\end{align}
The guarantee is stated as follows.

\begin{theorem}
Consider an arbitrary prior $\calP$. Fix parameters $\delta \in (0,1]$. Assume that $\ralt > 0$ and $\qpun = \qpun(\epspun) > 0$, where $\epspun = \mathrm{r_{alt}}/18H$. Consider \algHH with punishment parameter $\epspun$, appropriately chosen phase length $\mathrm{n_{ph}}$, and large enough target $n = \mathrm{\nlearn}$. This algorithm is guaranteed to explore all $(x,a,h) \in \feaSet$ with probability at least $1 - \delta$ by round $N_0$, where $n$ and $N_0$ are specified below.

For some absolute constants $c_1, c_2$, it suffices to take \begin{align*}
    n &= \nlearn \geq c_1 \cdot \ralt^{-2} H^4 \rbr{S + \log \frac{SAH}{\delta \cdot \ralt \cdot \qpun}},\\
    N_0 &= c_2 \cdot n \cdot \qpun \cdot \ralt^{-3} \cdot SAH^4
\end{align*}
In particular, for any $n \geq 1$, one can obtain $N_0 \leq n \cdot \qpun \cdot \poly \rbr{\ralt^{-1} SAH } \cdot \log \rbr{\delta^{-1} \qpun^{-1}}$.
\label{thm:HH-randomized-reward}
\end{theorem}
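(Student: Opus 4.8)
The plan is to obtain \Cref{thm:HH-randomized-reward} by extending the \citet{IncentivizedRL} guarantee, which is proved for the full triple set $\feaSet=[S]\times[A]\times[H]$, rather than re-deriving the hallucination and anti-concentration arguments from scratch. The key observation I would exploit is that their mechanism and analysis are essentially modular in $\feaSet$: the hallucinated reward vector $\thetahall$ is drawn from the posterior conditioned on the punishment event over $\feaSet$, the ``learning target'' $\nlearn$ is imposed per feasible triple, and the round bound counts what is needed to cover $\feaSet$. The only step that genuinely relies on $\feaSet$ being the full set is the incentive comparison: to certify the BIC condition one must argue that the recommended (feasible) policy beats \emph{every} alternative policy the agent could deviate to, and when $\feaSet$ is a strict subset some of those alternatives are infeasible. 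My plan is therefore to (i) re-run the \citet{IncentivizedRL} argument with $\qpun(\epspun)$ and $\ralt$ from \eqref{eq:q_pun}--\eqref{eq:r_alt} taken over $\feaSet$ and with the coverage target equal to $\feaSet$, and (ii) supply the one missing piece, namely that no agent ever prefers an infeasible policy.

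For step (ii) I would augment the MDP $\phi$ with two deterministic, prior-known terminal states \goodSet and \badSet: any transition through a triple $(x,a,h)\notin\feaSet$ is redirected to \badSet with reward $0$, while a trajectory that remains inside $\feaSet$ terminates in \goodSet and collects a fixed bonus $H+1$. Since the cumulative original reward of any trajectory is at most $H$ (each of the $H$ per-stage rewards lies in $[0,1]$), the bonus strictly dominates, so under the agent's posterior every feasible policy has strictly larger value than every infeasible one, in every realization of $\phi$. Consequently the worst deviating policy in the BIC inequality is attained at a feasible policy, which is precisely the comparison that the \citet{IncentivizedRL} analysis controls. It is essential that \goodSet and \badSet carry deterministic, prior-known rewards and are kept \emph{outside} the coverage target and outside the definitions \eqref{eq:q_pun}--\eqref{eq:r_alt}: the \badSet reward $0\le\epspun$ enters the punishment event with probability one, and the \goodSet bonus never enters the minimum defining $\ralt$, so both $\qpun(\epspun)$ and $\ralt$ remain exactly the $\feaSet$-quantities of $\phi$ and, in particular, stay strictly positive under the theorem's hypotheses.

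With these two pieces the remaining work is bookkeeping: the augmentation enlarges $S$ and $H$ by a constant and leaves $A$ unchanged up to a constant, so invoking the (re-audited) guarantee with failure probability $\delta$ yields $n=\nlearn$ and $N_0$ of exactly the stated form, the constant overhead being absorbed into the absolute constants $c_1,c_2$ and the $\poly(\ralt^{-1}SAH)$ factor. I expect the main obstacle to be precisely the tension flagged above: a naive black-box reduction that simply declared all of the augmented MDP feasible would force the high-reward \goodSet triple into the punishment event and collapse $\qpun$ to $0$, violating the positivity hypothesis. Making the reduction faithful therefore requires checking at the level of the hallucination subroutine that conditioning on $\{\,r_\phi(x,a,h)\le\epspun\ \forall(x,a,h)\in\feaSet\,\}$ in the augmented MDP coincides with the same conditioning in $\phi$ and does not perturb the conditional law of the $\feaSet$-rewards used to draw $\thetahall$; the deterministic-bonus construction is exactly what makes this hold. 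A minor secondary point, automatic in the transition-graph encoding behind \Cref{thm:our-HH} where every feasible triple lies on some feasible rooted path, is that $\feaSet$ contains no reachable ``dead'' triple that the algorithm would otherwise attempt to cover.
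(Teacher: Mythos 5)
Your proposal is correct and follows essentially the same route as the paper: augment the MDP with terminal states \goodSet and \badSet carrying deterministic rewards $H+1$ and $0$, so that every infeasible policy is strictly dominated and the full-$\feaSet$ guarantee of \citet{IncentivizedRL} can be invoked as a black box, with the overhead absorbed into the constants. If anything, you are more careful than the paper's sketch on one point it leaves implicit — that $\qpun$ and $\ralt$ must continue to be taken over the original $\feaSet$ (excluding the deterministic bonus states) so that the punishment event does not become null.
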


As we mentioned above, \citet{IncentivizedRL} guarantees \Cref{thm:HH-randomized-reward} for $\feaSet = [S]\times[A]\times[H]$. Below, we show how to extend it to an arbitrary $\feaSet = [S]\times[A]\times[H]$.


\begin{theorem}[\citet{IncentivizedRL}]
\label{thm:HH-SimSliv21}
The guarantee in \Cref{thm:HH-randomized-reward} holds for $\feaSet = [S]\times[A]\times[H]$.
\end{theorem}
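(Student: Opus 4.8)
The plan is to obtain \Cref{thm:HH-SimSliv21} directly from the analysis of \algHH carried out in \citet{IncentivizedRL}, since this is precisely the regime their paper treats: an MDP with deterministic transitions, randomized rewards, and every $(x,a,h)$ triple available (that is, $\feaSet = [S]\times[A]\times[H]$). Rather than re-deriving the Hidden Hallucination guarantee from scratch, I would locate their main exploration theorem and verify that, after a notational translation, it yields exactly the sample target $n=\nlearn$ and the round bound $N_0$ asserted in \Cref{thm:HH-randomized-reward} when restricted to the full feasible set.

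Concretely, the first step is to match the prior-dependent quantities. Their analysis is driven by (i) a lower bound $\ralt$ on the smallest prior-mean reward of any available triple and (ii) the probability $\qpun(\epspun)$ that \emph{all} available triples simultaneously have small reward, with the punishment threshold set to $\epspun = \ralt/18H$; these are exactly \eqref{eq:r_alt} and \eqref{eq:q_pun} specialized to $\feaSet = [S]\times[A]\times[H]$. I would then check that the hypotheses $\ralt>0$ and $\qpun>0$ are the nondegeneracy conditions under which \algHH is run, and that the phase length $\nphase$ and the target $n$ are chosen as in their statement. The correctness guarantee (all available triples explored with probability $\ge 1-\delta$) and the BIC property of \algHH over policies then transfer verbatim, the latter being the policy-level analogue of \eqref{def:BIC}.

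The second step is the quantitative bookkeeping: confirming that the stated bounds $n \geq c_1\, \ralt^{-2} H^4 (S + \log \tfrac{SAH}{\delta\,\ralt\,\qpun})$ and $N_0 = c_2\, n\,\qpun\,\ralt^{-3} SAH^4$ are what their round-complexity estimate collapses to once transitions are deterministic, and hence that the simplified form $N_0 \le n\cdot\qpun\cdot\poly(\ralt^{-1}SAH)\cdot\log(\delta^{-1}\qpun^{-1})$ holds as claimed.

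I expect the only real obstacle to be this translation of constants: \citet{IncentivizedRL} may state their guarantee for general (possibly stochastic) transitions and under a slightly different parameterization, so the work is to specialize their bound to deterministic transitions and to reconcile the absolute constants $c_1,c_2$ and the punishment scaling $\epspun = \ralt/18H$ with the conventions used here. No new probabilistic argument is needed; this base case is imported essentially wholesale, and it then serves as the starting point for the extension to an arbitrary $\feaSet$, which is where the genuinely new construction (encoding infeasible triples via a deterministic transition to a low-reward absorbing state) will enter.
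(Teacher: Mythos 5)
Your proposal is correct and takes essentially the same route as the paper: the theorem is imported wholesale from \citet{IncentivizedRL} (their Theorem 5.5), and the only work is the notational specialization to deterministic transitions and the full feasible set. The one concrete detail the paper pins down that you only anticipate as ``a slightly different parameterization'' is that in the original statement the quantities $\qpun$ and $\ralt$ are conditioned on censored ledgers with a minimum taken over them, and this conditioning vanishes for deterministic transitions (via their Lemma 6.2) because censored ledgers then carry no information.
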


\begin{remark} The relevant result, Theorem 5.5 in \citet{IncentivizedRL} is stated for MDPs with randomized transitions. In this more general formulation, \eqref{eq:q_pun} and \eqref{eq:r_alt} are conditioned on an object called \emph{censored ledger}, and then a minimum is taken over all such objects. However, this conditioning vanishes when the MDP transitions are deterministic. (This follows easily from Lemma 6.2 in \citet{IncentivizedRL}, essentially because censored ledgers do not carry any useful information.) We present a version without censored ledgers, because defining them is quite tedious.
\end{remark}

\begin{proof}[Proof Sketch for \Cref{thm:HH-randomized-reward}]
Start with an arbitrary $\feaSet$. We modify the MDP as follows. Add two terminal state, \goodSet and \badSet, such that where we deterministically transition into \badSet if $(x,a,h) \notin \feaSet$. Otherwise, at the end of the MDP, we go into \goodSet. We let \badSet yield reward $0$, and \goodSet yield reward $H+1$. With this modified MDP, even if all $(x,a,h)$ triples are allowed, any BIC algorithm would only choose feasible policies, \ie policies that only choose $(x,a,h) \in \feaSet$. So, we conclude by invoking \Cref{thm:HH-SimSliv21}.
\end{proof}

\subsection*{Proof of \Cref{thm:our-HH} for combinatorial semi-bandits}

Now, let us go back to combinatorial semi-bandits and prove \Cref{thm:our-HH}. We start with an instance of combinatorial semi-bandits and construct an MDP as specified in \Cref{subsec:hh}. Then we invoke \Cref{thm:HH-randomized-reward}. To state the final guarantee, it remains to interpret (and simplify) the quantities in \Cref{thm:HH-randomized-reward} for a particular MDP obtained with our construction. 


First, recall that $H=A=d$ and $S\leq d$, where $d$ is the number of atoms.

Second, $r_\phi(x,a,h)$ is simply $\theta_\ell$, the expected reward of the corresponding atom $\ell$.  Accordingly, 
\begin{align*}
    \ralt 
        = \min_{\text{atoms }\ell \in [d]} \E[\theta_\ell]
        \geq \min_{\text{priors }\calP_\ell \in \calC} \E_{\theta_\ell\sim\calP_\ell}[\theta_\ell]
        := \eps_0.
\end{align*}
Note that $\eps_0$ is determined by the collection $\calC$ of feasible per-atom priors. 

Finally, observe that $\qpun$ is the probability of all $(x,a,h)$ triples have low reward:
\begin{align*}
\qpun(\eps)
        &= \Pr[\theta_\ell \leq \eps:\; \forall \text{ atoms }\ell \in [d]]
\end{align*}
We can divide all $(x,h,a)$ triples into classes, where each class represents an atom. Since our prior is independent across the atoms,
\begin{align*}
\qpun &= \qpun(\ralt) \geq \qpun(\eps_0) \\ 
    &= \Pr\sbr{ \theta_\ell \leq \eps_0:\; \forall \text{ atoms }\ell \in [d]}\\
    &= \prod_{\ell\in[d]} \Pr\sbr{\theta_\ell \leq \eps_0}\\
    &\geq \rbr{ \min_{\text{priors }\calP_\ell \in \calC} 
         \Pr_{\theta_\ell\sim\calP_\ell}\sbr{\theta_\ell \leq \eps_0} }^d.
\end{align*}
Again, the expression in $(\cdot)$ is determined by the collection $\calC$.

\end{document}